\newcommand{\ooea}{$(1 + 1)$-EA\xspace}
\newcommand{\cga}{\textsc{cGA}\xspace}
\newcommand{\cGA}{\textsc{cGA}\xspace}
\newcommand{\onemax}{\textsc{OneMax}\xspace}
\newcommand{\dynBV}{\textsc{Dynamic BinVal}\xspace}
\newcommand{\DBV}{\textsc{DynBV}\xspace}
\newcommand{\DeceptiveLeadingBlocks}{\textsc{DeceptiveLeadingBlocks}\xspace}
\newcommand{\N}{\mathbb{N}}
\newcommand{\R}{\mathbb{R}}
\renewcommand{\epsilon}{\ensuremath\varepsilon}
\newcommand{\eps}{\ensuremath{\varepsilon}}
\DeclareMathOperator{\polylog}{\mathrm{polylog}}
\renewcommand{\phi}{\ensuremath{\varphi}}
\newcommand{\prob}{\mathrm{Pr}}
\newcommand{\expe}{\mathbb{E}}
\def\setof#1{\left\{#1  \right\}}
\begin{document}
\title{Faster Optimization Through Genetic Drift}
\titlerunning{\dynBV with the \cGA}
%
\author{Cella Florescu \and
Marc Kaufmann \and
Johannes Lengler
\and Ulysse Schaller}
\authorrunning{C. Florescu, M. Kaufmann, J. Lengler, and U. Schaller}
%
\institute{Department of Computer Science, ETH Zürich, Zürich, Switzerland. 
\email{\{cella.florescu, marc.kaufmann, johannes.lengler, ulysse.schaller\}@inf.ethz.ch}\\}
\maketitle              
\begin{abstract}

The compact Genetic Algorithm (\cGA), parameterized by its hypothetical population size $K$, offers a low-memory alternative to evolving a large offspring population of solutions. It evolves a probability distribution, biasing it towards promising samples. For the classical benchmark \onemax, the \cGA has to two different modes of operation: a conservative one with small step sizes $\Theta(1/(\sqrt{n}\log n))$, which is slow but prevents genetic drift, and an aggressive one with large step sizes $\Theta(1/\log n)$, in which genetic drift leads to wrong decisions, but those are corrected efficiently. On \onemax, an easy hill-climbing problem, both modes lead to optimization times of $\Theta(n\log n)$ and are thus equally efficient.

In this paper we study how both regimes change when we replace \onemax by the harder hill-climbing problem \dynBV. It turns out that the aggressive mode is not affected and still yields quasi-linear runtime $O(n\polylog n)$. However, the conservative mode becomes substantially slower, yielding a runtime of $\Omega(n^2)$, since genetic drift can only be avoided with smaller step sizes of $O(1/n)$. We complement our theoretical results with simulations.

\keywords{compact Genetic Algorithm \and Genetic Drift \and Estimation-of-Distribution Algorithm \and Dynamic Binary Value}
\end{abstract}

\section{Introduction}\label{sec:intro}

Estimation-of-distribution algorithms (EDAs) are a family of randomized optimization heuristics in which the algorithm evolves a probability distribution over the search space. In each iteration, it samples solutions from this distribution, evaluates their quality (also called fitness), and updates the probability distribution accordingly. Examples in discrete domains include the \cga, \textsc{UMDA}, \textsc{PBIL}, ant colony systems like the \textsc{MMAS}, and multivariate systems like \textsc{hBOA}~\cite{pelikan2004parameter} or \textsc{MIMIC}~\cite{de1996mimic}, see~\cite{krejca2020theory} for a survey. EDAs turn out to be a powerful alternative to population-based heuristics like evolutionary and genetic algorithms. They have the advantage that they often sample from a wider region of the search space than their population-based alternatives, which makes them less susceptible to local deviations like (frozen or non-frozen) noise and local optima~\cite{friedrich2016compact,lehre2019runtime,witt2023majority,doerr2021runtime,friedrich2022theoretical}. 

EDAs have been used for several decades, but theoretical investigations of EDAs have only started to gain momentum a few years ago. While for some aspects a clear picture has emerged, like that EDAs are able to cope with large amounts of noise~\cite{friedrich2016compact}, there is one aspect for which researchers have found a complex and ambiguous pattern: genetic drift. Genetic drift is the tendency of an algorithm to move through the fitness landscape even in absence of a clear signal-to-noise ratio.\footnote{The term \emph{drift} is also used in the context of \emph{drift analysis}, where it means the expected change, which is almost the opposite concept. The term ``genetic drift'' should not be confused with this other meaning of the term ``drift''.} While it is possible to avoid genetic drift by tracking the signal-to-noise ratio~\cite{doerr2018significance,doerr2020sharp}, this conservative attempt of avoiding mistakes could potentially make the algorithm slow and inflexible. An alternative approach might be to embrace genetic drift, allow the algorithm to swiftly move through the search space, and let it correct mistakes as they appear. 

Indeed, these two alternatives are exemplified by the behaviour of the compact Genetic Algorithm \cGA on the pseudo-Boolean function \onemax~\cite{lengler2021complex}. The \onemax function assigns to a bit string $x\in \{0,1\}^n$ the number of one-bits in $x$. It is one of the simplest and most classical hill-climbing benchmarks. The \cGA maintains for each of the $n$ coordinates a \emph{frequency} $p_i$, which encodes the probability that the $i$-th bit is set to one in the distribution. In each iteration, it samples two solutions, and for each component $i$ it shifts the frequency $p_i$ by $1/K$ towards the value of the fitter of the two solutions, see Section~\ref{sec:setting} for full details. The step size $1/K$ determines how aggressively or conservatively the algorithm updates. It is well understood that the size of $K$ determines whether genetic drift happens to a relevant extent on \onemax or not. If $K = \omega(\sqrt{n}\log n)$ then the frequencies move so slowly that the signal exceeds the noise, and all frequencies move slowly but steadily towards the upper boundary. This corresponds to the regime where genetic drift is avoided, and we refer to this as \emph{conservative} regime. On the other hand, if $K = o(\sqrt{n}\log n)$ then the signal is weaker than the noise, and some bits move to the wrong boundary due to genetic drift.  In the subsequent optimization process, these mistakes are then slowly corrected. We call this the \emph{aggressive} regime.

It turns out that both regimes are equally efficient on \onemax. For suitable $K= C\log n$ with a large constant $C$, errors are corrected so quickly that the optimum is sampled in $O(n\log n)$ iterations.\footnote{This was only shown formally for the UMDA in~\cite{witt2023majority} and~\cite{dang2019level}, not for the \cga. However,~\cite{lengler2021complex} contains an informal argument why the results should also apply to the \cga.} On the other hand, if $K= C\sqrt{n}\log n$ with a large constant $C$, then the algorithm moves more slowly, but does not make any errors, which yields the same asymptotic runtime $O(n\log n)$. Both parameter settings are brittle with respect to smaller $K$: if either $K=C\log n$ or $K= C\sqrt{n}\log n$ are decreased only slightly, this results in a sudden loss of performance. On the other hand, if the parameter $K$ is increased from either $K=C\log n$ or $K= C\sqrt{n}\log n$, then the performance deteriorates slowly but steadily. Hence, there are two optimal parameter settings for the \cGA on \onemax, a conservative one which avoids genetic drift and an aggressive one which embraces genetic drift. 

Since the mentioned analysis was limited to \onemax, it remained open whether both modes of the algorithm also show comparable performance for other hill-climbing tasks. In this paper, we give a negative answer and show that for the harder hill-climbing problem \dynBV, the aggressive mode still finds the optimum in quasi-linear time, while the conservative mode needs time $\Omega(n^2)$.

\subsection{Our results}

We investigate the \cGA on the function \dynBV, or \DBV for short.  This function, introduced in~\cite{lengler2022large}, builds on the classical linear test function \textsc{Binary Value} that assigns to each binary string the integer that is represented by it in the binary number system. \DBV is obtained by drawing at each iteration a random permutation of the weights and then evaluating all solutions with the permuted \textsc{Binary Value} function, see Section~\ref{sec:setting} for a formal definition. \textsc{Binary Value} is conjectured to be the hardest linear function\footnote{For example for the \ooea. It is a famous open problem to prove this formally.}, and \DBV is known to be the hardest dynamic linear function\footnote{It is not formally a dynamic linear function in the sense of~\cite{lengler2018noisy}, but can be obtained as a limit of such functions~\cite{lengler2022large}.}~\cite{lengler2022large,lengler2021runtime}. This makes it the perfect benchmark for a hard hill-climbing task. For more discussion of the benchmark, see Section~\ref{sec:setup}.

\subsubsection{The conservative regime is slow.} Our first main result is the following lower runtime bound, which holds for all $K =O(\mathrm{poly}(n))$. In this range the runtime will at first increase quadratically in $K$, until $K$ reaches the dimension of the search space - at which point the runtime dependency becomes $\Omega(K \cdot n)$.

\begin{theorem}
\label{thm:lowerbound}
Let $\Bar{p}\in (0, \frac{1}{2})$ be arbitrary and consider the \cGA with parameter $K =O(\mathrm{poly}(n))$ and boundaries at $\Bar{p}$ and $1-\Bar{p}$ on \DBV. Then with high probability, the optimum is not sampled during the first $\Omega(K \cdot \min\{K, n\})$ iterations.
\end{theorem}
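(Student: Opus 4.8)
The plan is to reduce ``the optimum is not sampled'' to a statement about the frequency vector staying far from the all-ones corner, and then to control the motion of the frequencies through a drift analysis of their sum. Since each of the two offspring sampled in an iteration equals the all-ones optimum with probability $\prod_{i=1}^n p_i$, a union bound over the (polynomially many) iterations reduces the claim to showing that, with high probability, $\prod_i p_i$ stays exponentially small throughout the first $\Omega(K\min\{K,n\})$ steps. By the AM--GM inequality $\prod_i p_i \le (\Phi/n)^n \le e^{-(n-\Phi)}$, where $\Phi := \sum_i p_i$, so it suffices to keep the deficit $\Psi := n-\Phi$ above $C\log n$ (in fact I aim to keep $\Psi = \Omega(n)$) for that many iterations.

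First I would carry out the one-step analysis of the \cGA on \DBV, which is where the hardness of the benchmark enters. Because a fresh random permutation of the weights is drawn each iteration, the coordinate that decides the comparison between the two offspring is distributed uniformly over their difference set $D = \{i : x^1_i \neq x^2_i\}$. The deciding coordinate always receives a $+1/K$ update towards a one, whereas for any other $i \in D$ I would verify that, writing $j$ for the deciding position, the four local configurations of $(x^1_i,x^1_j)$ and $(x^2_i,x^2_j)$ are equiprobable, so that the update at $i$ is an unbiased $\pm 1/K$ step. Summing over $i$, this yields the key bound $\sum_i \expe[\Delta p_i] \le \tfrac1K$ in the absence of clamping, together with the observation that clamping at the upper boundary only decreases the drift, while clamping at the lower boundary $\Bar p$ adds at most $\tfrac1K\,\Bar p\,|L|$, where $L$ denotes the set of coordinates currently sitting at $\Bar p$.

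Next I would run a stopping-time/drift argument on $\Phi$. As long as no coordinate has reached the lower boundary, $\Phi(t) - t/K$ is a supermartingale with per-step conditional variance $O(|D|/K^2) = O(n/K^2)$, so a Freedman/Bernstein-type martingale inequality gives $\Phi(t) \le n/2 + 2t/K$ with high probability. The two regimes then arise from comparing the two ways a single frequency can travel the constant distance it must cover before it can pile up at a boundary: the drift scale $t/(Kn)$ and the fluctuation scale $\sqrt{t}/K$. For $t = O(K\min\{K,n\})$ both are $o(1)$, so the frequencies remain confined near $1/2$, giving $\Phi \le (\tfrac12+o(1))n$ and hence $\Psi = \Omega(n)$; this produces $\Omega(nK)$ when $K \ge n$ (drift-dominated) and $\Omega(K^2)$ when $K < n$ (fluctuation-dominated), i.e.\ $\Omega(K\min\{K,n\})$ in both cases, which combined with the AM--GM reduction finishes the argument.

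The main obstacle is exactly the lower-boundary clamping produced by genetic drift---the phenomenon distinguishing \DBV from \onemax. Once coordinates begin to accumulate at $\Bar p$, the reflecting boundary injects an upward push of order $\Bar p\,|L|/K$ into $\Phi$, so the clean drift bound $\le 1/K$ can fail, and one must argue a dichotomy: either few coordinates have reached $\Bar p$, so the extra drift is negligible and the supermartingale bound survives, or many have, so those coordinates alone keep $\Psi$ large. Making this rigorous requires controlling $|L_t|$---equivalently the total time spent at the boundary---uniformly over the horizon, and establishing concentration of $\Phi$ up to the first boundary hit despite the correlations between coordinates induced by the shared comparison. I expect this interaction, rather than any individual drift computation, to be the technical heart of the proof.
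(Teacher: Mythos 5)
Your route is genuinely different from the paper's. The paper never looks at the sum $\sum_i p_{i,t}$: it fixes $\Theta(n)$ individual coordinates and shows (Lemmas~\ref{lemma:bits_stay_within_range} and~\ref{lemma:slow_progress}) that each stays inside $[\frac16,\frac56]$ for $\alpha K\min\{K,n\}$ iterations, by splitting its motion into signal steps (at most $O(K)$ of them, since Corollary~\ref{cor:probability_of_signal_step} gives per-iteration signal probability $O(1/n)$ as long as $V_t=\Omega(n)$, with the circularity resolved by a first-violation argument) and random steps (an unbiased $\pm\frac1K$ walk whose maximum over $\alpha K^2$ steps is controlled by a maximal Hoeffding bound). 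The per-iteration success probability $2(\frac56)^{\gamma n}$ and the union bound then coincide with your AM--GM reduction. The decisive advantage of the per-coordinate route is that the tracked frequencies never come near a boundary, so the clamping issue you identify as the technical heart simply never arises. Your one-step computations (uniform deciding coordinate, $+\frac1K$ there, unbiased elsewhere, conditional variance $O(n/K^2)$) are correct and match Propositions~\ref{prop:signal_prob_in_1_over_variance} and~\ref{prop:probability_of_frequency_change}; and for the boundary values the paper actually uses, $\Bar{p}=O(1/n)$, your obstacle is in fact mild: the crude bound of at most $n$ clamped coordinates, each injecting at most $\Bar{p}/K$ of artificial upward drift, gives a total correction $n\Bar{p}/K=O(1/K)$ per step, so no dichotomy is needed and your supermartingale argument would even yield the stronger bound $\Omega(Kn)$ for all $K=\omega(1)$.

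Two genuine gaps remain. First, for the statement as written, $\Bar{p}\in(0,\frac12)$ is arbitrary and may depend on $n$; for intermediate values such as $\Bar{p}=\Theta(\log n/n)$ or $\Bar{p}=n^{-1/2}$, neither the crude bound above nor your proposed dichotomy closes (before the stopping time one only gets $|L_t|\le 2\epsilon n$, so the accumulated correction $2\epsilon n\Bar{p}t/K$ still exceeds $n$ over the full horizon), and the trivial bound $\prod_i p_i\le(1-\Bar{p})^n$ is not small enough to survive the union bound over $\mathrm{poly}(n)$ iterations. Controlling the boundary local time there would require real additional work; the paper's proof covers all $\Bar{p}$ for free. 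Second, there is an internal inconsistency in how you obtain the two regimes: the $\Omega(K^2)$ barrier for $K<n$ comes from the \emph{per-coordinate} fluctuation scale $\sqrt{t}/K$, i.e.\ from the paper's mechanism, not from your global supermartingale, whose fluctuation scale $\sqrt{tn}/K$ remains $o(n)$ all the way up to $t=\Theta(Kn)$. You switch between the two potentials ("the frequencies remain confined near $1/2$, giving $\Phi\le(\frac12+o(1))n$") without committing to either; as written, the $K^2$ term is asserted rather than derived. Also note that the stated concentration $\Phi(t)\le n/2+2t/K$ is false for small $K$ (the fluctuation term dominates $t/K$ there), though the weaker bound $\Phi(t)\le n/2+t/K+o(n)$ that you actually need does hold, with a Freedman tail of order $e^{-\Theta(K)}$ that forces $K=\omega(1)$ or a separate trivial treatment of constant $K$.
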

The reason for this is captured in Lemma~\ref{lemma:slow_progress}, which states that in this period, there are always a linear number of bits which stay in some constant interval around their initialization value, so that it is exponentially unlikely to sample the optimum.

If we want to avoid genetic drift, we have to choose a rather large $K$ (small step sizes) to overcome the small signal-to-noise ratio that is inherent in \DBV. The following theorem states that any $K = O(n)$ will lead to substantial genetic drift and hence belongs to the aggressive regime. 
This agrees with the guidelines from~\cite{doerr2021runtime} on how to avoid genetic drift.

\begin{theorem}
\label{thm:geneticdrift}
    For every $\rho>0$ and $\beta\in (\Bar{p},\frac{1}{2})$ there is $\delta>0$ such that the following holds. Consider the \cGA with parameter $K \le \rho n$ on \DBV. Then with high probability as $K \to \infty$ at least $\delta n$ frequencies drop below $\beta$ during optimization.
\end{theorem}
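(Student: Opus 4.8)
The plan is to track each of the $n$ frequencies as a (biased, lazy) random walk, show that each one individually falls below $\beta$ at some point with at least \emph{constant} probability, and then boost this to a linear count by a second-moment argument. The whole analysis rests on one structural feature of \DBV: in a given iteration, let $D$ be the set of positions where the two sampled bit strings differ; since \DBV evaluates with a freshly permuted \BinVal, the winner is the string carrying a one at the first position of $D$ in the random permutation, i.e.\ at a uniformly random \emph{decider} $d\in D$. Conditioning on $i\in D$ and $|D|=m$, a short computation then gives the exact winning probability of the string holding the one at $i$: it wins iff $d$ lies in the same "class'' as $i$, and the expected class size is $(m+1)/2$, so $p_i$ increases with probability exactly $\tfrac12+\tfrac{1}{2m}$. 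Thus each frequency does a step of size $1/K$ with constant-order per-step variance (at least $2\Bar{p}(1-\Bar{p})/K^2$, since $i\in D$ with probability $2p_i(1-p_i)\ge 2\Bar{p}(1-\Bar{p})$) but only a weak \emph{upward} bias of order $1/(Km)$.

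Next I would fix the environment. Because every frequency stays in $[\Bar{p},1-\Bar{p}]$, each coordinate differs with probability at least $c_0:=2\Bar{p}(1-\Bar{p})$, so $\expe[|D|]\ge c_0 n$, and by a Chernoff bound together with a union bound over the $O(\mathrm{poly}(n))$ relevant iterations the event $G=\{|D|\ge c_0 n/2 \text{ throughout}\}$ holds with high probability. On $G$ the per-move upward bias of every frequency is at most $\epsilon:=1/(c_0 n)$.

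Now view $Kp_i$ as a lazy $\pm1$ walk on $[K\beta,\,K(1-\Bar{p})]$ started at $K/2$. Coupling it with the walk of pointwise-larger constant bias $\epsilon$ (smaller upward bias can only help reach the lower barrier), the gambler's-ruin formula bounds the probability of reaching $K\beta$ before $K(1-\Bar{p})$ from below by $\frac{r^{s}-r^{L}}{1-r^{L}}$, with $r=\frac{1-2\epsilon}{1+2\epsilon}$, $s=K(\tfrac12-\beta)$ and $L=K(1-\Bar{p}-\beta)$. The decisive point is the cancellation $r^{s}\ge \exp(-\Theta(\epsilon K))=\exp(-\Theta(K/n))\ge \exp(-\Theta(\rho))$: this is a positive constant \emph{uniformly} over all $K\le\rho n$, and it is precisely here that the hypothesis $K=O(n)$ is used (for $K=\omega(n)$ this probability would vanish, matching the conservative regime). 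Hence, marginally, each frequency drops below $\beta$ before hitting the upper boundary---and while a linear number of frequencies are still central, so that by Lemma~\ref{lemma:slow_progress} the optimum has not yet been sampled---with probability at least some constant $q_0=q_0(\rho,\beta,\Bar{p})>0$; since a frequency may also reach $\beta$ after an excursion to the top, $q_0$ is a valid lower bound on the event "falls below $\beta$ during optimization''.

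Finally, let $N$ be the number of coordinates that fall below $\beta$ at some point during optimization; by symmetry $\expe[N]\ge q_0 n(1-o(1))$, and I would conclude $N\ge\delta n$ with high probability for $\delta=q_0/2$ by Chebyshev, after bounding $\mathrm{Var}(N)$ through the pairwise correlations of the drop-indicators. The enabling observation is a \emph{per-round decorrelation}: conditioned on $i,j\in D$, the frequencies $p_i$ and $p_j$ move in the same direction iff their one-bits sit on the same sample, which (by independence of the sampling) happens with probability exactly $1/2$, so the product of the two step-signs has mean $0$ up to $O(1/n^2)$. I expect the main obstacle to be upgrading this single-round decorrelation to a bound on the covariance of the two highly path-dependent drop-indicators over the whole run: this means controlling the accumulation of second-order, cross-time correlations transmitted through the shared statistic $|D|$ and the shared winner, and dealing with the clamping at the boundaries, so as to obtain $\mathrm{Cov}(\mathbf 1_i,\mathbf 1_j)=o(1)$ uniformly and hence $\mathrm{Var}(N)=o(n^2)$. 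Alongside this sits the lesser bookkeeping obstacle of verifying, for the entire range $K\le\rho n$, that the downward excursions complete within the optimization phase, which I would handle via Lemma~\ref{lemma:slow_progress} and Theorem~\ref{thm:lowerbound}.
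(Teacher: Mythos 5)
Your high-level plan --- each frequency performs a nearly unbiased lazy random walk whose per-move up-bias is $O(1/n)$ as long as the sampling variance is linear, hence hits $\beta$ with constant probability, and a linear number of frequencies do so --- is the same skeleton as the paper's proof, and your exact conditional up-probability $\tfrac12+\tfrac1{2m}$ is a correct, sharper form of Proposition~\ref{prop:probability_of_frequency_change}. However, two steps have genuine gaps. First, your ``environment'' event $G$ is justified by $\expe[|D|]\ge 2\Bar{p}(1-\Bar{p})\,n$, which is vacuous in the regime of interest: the boundaries satisfy $\Bar{p}=O(1/n)$, so $2\Bar{p}(1-\Bar{p})\,n=O(1)$ and Chernoff gives nothing. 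A priori the frequencies could collapse to the boundaries quickly and take $|D|$ with them; ruling this out is exactly the content of Lemma~\ref{lemma:slow_progress}, which shows that $\gamma n$ of the \emph{first} $n/2$ frequencies stay in $[\tfrac16,\tfrac56]$ for $\alpha K^2$ iterations \emph{regardless of the other half} --- the bit-splitting is needed precisely so that conditioning on the environment does not bias the walks you track. Without this, your bias bound $\epsilon=O(1/n)$, and hence the gambler's-ruin constant (which is otherwise correct and correctly locates where $K\le\rho n$ enters), has no foundation. Relatedly, the environment is only guaranteed for $\alpha K^2$ iterations with a \emph{small} constant $\alpha$, so you must show the downward excursion completes inside that window; the paper does this by applying a CLT estimate to exactly $\eps K^2$ random steps plus a reflection argument, rather than waiting for the full gambler's ruin to resolve. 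This is more than bookkeeping.

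Second, the concentration step is not done: you name the covariance bound for the path-dependent drop-indicators, coupled through $|D|$ and the shared winner, as the main obstacle and leave it open. The workable route is to replace Chebyshev by the paper's decomposition. In every iteration at most one bit takes a signal step, so over $\alpha K^2=O(Kn)$ iterations each tracked bit receives only $O(K)$ signal steps (Corollary~\ref{cor:probability_of_signal_step} with $K\le\rho n$), contributing a total displacement of at most a constant $\eta$ that the random-step CLT estimate absorbs. All remaining moves are random steps whose signs are i.i.d.\ fair coins \emph{independent across frequencies}: the sign at position $i$ is determined by which offspring holds the one at $i$, a fresh coin independent of the winner's identity and of the coins at other positions. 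Conditioning on the signal-step budgets, the drop events become independent across bits and a plain Chernoff bound yields that at least $\delta n$ frequencies drop, with no covariance estimates needed. Your single-round decorrelation observation is the right instinct, but it must be upgraded to this exact conditional independence of the random-step signs to close the argument.
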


Theorem~\ref{thm:geneticdrift} shows that indeed the only possibility to avoid substantial genetic drift is to set $K= \omega(n)$, which leads to a runtime of $\omega(n^2)$ by Theorem~\ref{thm:lowerbound}. Hence, \DBV can not be optimized in quadratic time by any parameter setting of the \cGA that avoids genetic drift. As we will see below, aggressive parameter settings that allow genetic drift are much more efficient. Before we come to this other regime, we complement Theorem~\ref{thm:lowerbound} with a matching upper bound that holds when we are safely in the conservative regime with $K=\Omega(n \log n)$.  
To simplify the proof, we require a slight adjustment of the boundary values.\footnote{More precisely, we set them to $\frac{1}{cn}, 1-\frac{1}{cn}$ for a large enough constant $c>0$. As our simulations, which are all conducted with boundaries $\frac{1}{n}$, $1-\frac{1}{n}$, show, this choice does not affect the asymptotic behaviour.}
\begin{theorem}
    \label{thm:intro_upper_bound_runtime}
    Consider the \cGA with parameter $K=\mathrm{poly}(n)$ and boundaries at $\frac{1}{cn}$ and $1-\frac{1}{cn}$ on \DBV. If $K \geq c' \cdot n \log{n}$, and the constants $c,c'>0$ are large enough, then the expected optimization time is $O(K n)$. 
\end{theorem}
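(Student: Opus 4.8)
The plan is to establish a matching upper bound of $O(Kn)$ for the conservative regime $K \ge c' n \log n$, by tracking the frequencies of the \cGA on \DBV through their collective progress towards the upper boundary $1 - \frac{1}{cn}$. The core idea is that when $K$ is large enough, the step size $1/K$ is so small that genetic drift is negligible (as foreshadowed, this $K$ is safely in the conservative regime of Theorem~\ref{thm:geneticdrift}), and hence every frequency experiences a consistent upward drift. The key quantity I would track is the "potential" $\Phi = \sum_{i=1}^n (1 - p_i)$, the total distance of all frequencies from the upper boundary. Once $\Phi$ is small enough, all frequencies are near $1 - \frac{1}{cn}$, and I would then argue that a sample drawn from such a distribution equals the all-ones optimum with at least constant probability per iteration, giving the optimum within $O(1)$ additional iterations in expectation.

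**First I would set up the drift of a single frequency.** In each iteration the \cGA samples two solutions and moves $p_i$ by $\pm 1/K$ (or leaves it unchanged) towards the fitter sample. The crucial feature of \DBV is that, because a fresh random permutation of the \BinVal weights is drawn each step, the probability that bit $i$ is the "deciding" bit (the highest-weight bit on which the two samples differ) can be computed, and conditioned on the two samples differing at bit $i$, the fitter sample has a one there. I would show that the expected change $\expe[\Delta p_i \mid \text{current state}]$ is nonnegative and, crucially, bounded below by something like $\frac{c_1}{K} \cdot p_i(1-p_i) \cdot (\text{fraction of non-fixed bits})$, capturing that each frequency is pushed up whenever the two samples disagree at coordinate $i$. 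Summing over $i$, I would obtain a drift statement for $\Phi$: each iteration decreases $\Phi$ in expectation by roughly $\Theta(1/K)$ per "active" coordinate, so that $\expe[\Delta \Phi] \le -\Omega(\Phi / (Kn)) \cdot (\text{correction})$ or, more robustly, an additive bound.

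**The main obstacle, and the reason $K \ge c' n \log n$ is needed**, is controlling the lower-boundary escape and guaranteeing that no frequency drifts down to the wrong boundary before $\Phi$ becomes small — this is exactly where I must rule out genetic drift. With step size $1/K$ and a per-step signal of order $1/n$ on each frequency (the hallmark difficulty of \DBV, as opposed to \OneMax where the signal is $\Theta(1/\sqrt{n})$), the signal-to-noise ratio per coordinate demands $K = \omega(n)$; the extra $\log n$ factor provides a union bound over the $n$ coordinates and over the $O(Kn)$ time steps, ensuring via a Hoeffding/Azuma concentration argument (or the negative-drift theorem) that with high probability \emph{every} frequency stays in the upper region and none of them falls to the lower boundary $\frac{1}{cn}$. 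I would phrase this as: for large enough $c'$, the random walk of each $p_i$ has dominant positive drift relative to its step-size variance, so the probability any single frequency ever drops by a constant is at most $n^{-\omega(1)}$.

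**Finally I would assemble the pieces.** Conditioning on the high-probability event that no frequency hits the lower boundary, the drift on $\Phi$ is essentially always downward, and a standard additive/multiplicative drift theorem yields that $\Phi$ reaches $O(1)$ within $O(Kn)$ iterations in expectation. Once $\Phi = O(1)$, all but $O(1)$ frequencies are within $O(1/n)$ of the upper boundary; the probability that a single sample hits the all-ones string is then $\prod_i p_i = \Omega(1)$ (since $\prod (1 - (1-p_i)) \ge 1 - \Phi$ is bounded below by a constant once $\Phi$ is small), so the optimum is sampled within an expected $O(1)$ further iterations. The total is $O(Kn)$, matching Theorem~\ref{thm:lowerbound} up to constants. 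I expect the delicate part of writing this rigorously to be the interplay between the $\Phi$-drift argument (which wants multiplicative drift towards a small target) and the boundary-confinement argument (which wants a negative-drift/concentration statement on each individual coordinate over the entire horizon); combining both without circular dependencies, and handling the boundary truncation of the frequencies correctly, is where the technical care will be concentrated.
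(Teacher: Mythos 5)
Your overall architecture coincides with the paper's: the potential $\varphi_t = n - \tfrac1c - \sum_i p_{i,t}$, a per-coordinate drift derived from the transition probabilities of Proposition~\ref{prop:probability_of_frequency_change}, a negative-drift confinement argument keeping all frequencies above a constant $\eta$ (Proposition~\ref{prop:bound_stay_between_constants}), and a constant probability of sampling the optimum once the potential is $O(1)$. However, there are two concrete gaps. First, the drift bound you sketch, $\expe[\Delta\Phi]\le -\Omega(\Phi/(Kn))$, is a multiplicative drift and would only yield $O(Kn\log n)$ after integrating $1/h$; the additive alternative you allude to is not quantified, and the naive worst case $\Omega(1/(Kn))$ would give $O(Kn^2)$. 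The step that actually delivers $O(Kn)$ is the observation that, once all frequencies are at least $\eta$, the sampling variance is controlled by the potential itself, $V_t \le \sum_j (1-p_{j,t}) = \varphi_t + \tfrac1c$, so that the summed drift $\sum_{i}\xi\, p_{i,t}(1-p_{i,t})/(K V_t) \ge \xi\eta\,\varphi_t/(K(\varphi_t+\tfrac1c)) = \Omega(1/K)$ is a \emph{constant} additive drift for $\varphi_t\ge 1$. This also exposes why the boundary must sit at $\tfrac{1}{cn}$ with $c$ large: frequencies parked at the upper boundary can only move down, contributing a total adverse drift of at most $n\cdot\tfrac{1}{Kcn}=\tfrac{1}{Kc}$, which must be strictly dominated by the $\Omega(1/K)$ gain; you flag ``boundary truncation'' as delicate but do not resolve it, and it is the reason the theorem is stated with the nonstandard boundary.

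Second, for an \emph{expected} optimization time you cannot simply condition on the high-probability event that no frequency ever drops below $\eta$. The failure probability obtainable from Theorem~\ref{thm:negative_drift} with $K\ge c'n\log n$ is $n^{-\Theta(c')}$ over a polynomial horizon --- polynomially small with a tunable exponent, not $n^{-\omega(1)}$ as you claim --- and the expectation must account for what happens on the failure event. The paper closes this with a restart/recovery argument: Lemma~\ref{lemma:bounce_back} bounds the expected time for any frequency to climb back to any threshold by $O(Kn^2)$, a coupon-collector argument lifts this to $O(Kn^3\log n)$ for all frequencies to return above $\tfrac12$ simultaneously, and multiplying by the $O(n^{-\gamma+1})$ failure probability (with $n^{\gamma}\ge Kn^5$) makes the failure contribution $o(1)$. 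Without some version of this, your argument establishes a with-high-probability runtime bound but not the stated bound on the expectation.
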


The main ingredient for this proof is to show that in the first polynomially many rounds, all frequencies will stay bounded away from the lower boundary (Proposition~\ref{prop:bound_stay_between_constants}). Hence, the proof of Theorem~\ref{thm:intro_upper_bound_runtime} is similar to other proofs of upper runtime bounds in conservative regimes~\cite{doerr2020sharp,witt2019upper}.

Together,  Theorems~\ref{thm:lowerbound} and~\ref{thm:intro_upper_bound_runtime} give tight runtime bounds of $\Theta(Kn)$ in the conservative regime. This implies in particular that the runtime in this regime is much larger for \DBV than for \onemax, where the runtime is $O(K\sqrt{n})$~\cite{sudholt2019choice}.

We remark that the different transition points between conservative and aggressive regime ($K=\omega(n)$ in Theorem~\ref{thm:intro_upper_bound_runtime} and $K= \Omega(n \log n)$ in Theorem~\ref{thm:intro_upper_bound_runtime}) are natural because there are different possible definitions of the conservative regime: that no frequency drops below $1/3$ (or any other fixed constant below $1/2$), that no frequency reaches the lower boundary, or that the number of frequencies hitting the lower boundary is sublinear. All these variants lead to different transition points between the conservative and aggressive regime.

\subsubsection{The aggressive regime is fast.} Our second result shows that in contrast, the optimization time of the \cGA remains quasi-linear for small $K$, i.e. linear up to a poly-logarithmic factor. This corresponds to the aggressive regime where many frequencies reach the wrong boundary, but those errors are corrected efficiently. To make the analysis simpler, similar as for Theorem~\ref{thm:intro_upper_bound_runtime}, we do not set the two boundary values at their standard values $1/n$ and $1-1/n$, but this time we even set them to $1/(n\polylog n)$ and $1-1/(n\polylog n)$. Moreover, we do not use the smallest possible (most aggressive) parameter choice $K=C\log n$ for the aggressive regime, but rather choose the slightly more conservative $K=\Theta(\log^2 n)$. Then we prove the following result. 

\begin{theorem}
\label{thm:boundslogn}
    Consider the \cGA with parameter $K = \Theta(\log^2 n)$ and boundaries $1/(n\log^7 n)$ and $1-1/(n\log^7 n)$ on \DBV. Then the optimum is sampled in $O(n \cdot \mathrm{polylog}(n))$\footnote{Our derived bound yields $O(n \log^{16} n)$ but this is not tight with regard to $\log$-factors in various places, so we did not optimize for the exponent of the logarithm. } iterations with probability $1-o(1)$.
\end{theorem}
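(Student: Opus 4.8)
The plan is to show that in the aggressive regime $K=\Theta(\log^2 n)$, the \cGA reaches the optimum in quasi-linear time by tracking the ``frontier'' of frequencies that are still progressing towards the upper boundary. The key intuition is that although many frequencies drift to the wrong (lower) boundary by Theorem~\ref{thm:geneticdrift}, the \dynBV function re-randomizes the weight permutation each iteration, so \emph{every} bit receives a genuine positive signal from \onemax-like comparisons whenever the sampled pair of solutions differs in that bit. Concretely, I would fix a potential such as the number of frequencies at the upper boundary (or a smoothed version, e.g. $\sum_i p_i$ or $\sum_i \log(p_i/(1-p_i))$) and argue that it has a positive expected drift of order $\polylog^{-1}(n)$ per iteration even in the presence of wrongly-set frequencies, so that after $O(n\polylog n)$ iterations essentially all frequencies are at or near the upper boundary with high probability.

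First I would set up the one-step dynamics for a single frequency $p_i$. In one iteration, the \cGA samples two offspring, evaluates both under a fresh random permutation $\pi$ of the \BinVal weights, and moves $p_i$ by $\pm 1/K$ towards the winner's $i$-th bit (no move if the two offspring agree at coordinate $i$). The move is ``correct'' (towards $1$) with some probability and ``incorrect'' with the complementary probability, conditioned on the two offspring differing at bit $i$. The crucial computation is to lower-bound the probability that, given a disagreement at bit $i$, the winner carries a one at position $i$. Under a uniformly random permutation of the strictly-decreasing \BinVal weights, position $i$ is equally likely to be the most significant differing coordinate as any other differing coordinate; the most significant differing bit dictates the comparison, so the winner agrees with the ``one'' at bit $i$ with probability exactly $1/2$ plus a positive bias coming from the ordering. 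I would quantify this bias, showing the net per-step drift of $p_i$ towards $1$ is $\Omega(1/(KZ_i))$ where $Z_i$ counts the disagreeing coordinates, and then sum over the population to get an aggregate drift.

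The main technical obstacle, I expect, is controlling the interaction between bits: the signal each frequency receives depends on how many \emph{other} frequencies are still near $1/2$ versus pinned at a boundary, since only those bits can be the ``deciding'' most-significant-difference. When most frequencies are already near the upper boundary, disagreements become rare and the per-step progress of the remaining stragglers slows down, so a naive uniform drift bound degrades. To handle this I would partition the optimization into phases by the number $m$ of not-yet-converged frequencies, show that within phase $m$ the expected time to converge one more frequency is $O(K \cdot g(m))$ for an explicit $g$, and argue that $\sum_m g(m)$ telescopes to $\polylog(n)$ per bit. The low boundary $1/(n\log^7 n)$ is chosen precisely so that a frequency sitting at the wrong boundary still has probability $\Omega(1/(n\log^7 n))$ of sampling a one, giving it a continual chance to escape and ensuring that no bit is permanently lost; I would use this to rule out the algorithm getting stuck.

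Finally, I would combine these ingredients with a drift theorem (additive or variable drift, e.g.\ on the potential measuring distance of all frequencies from the all-ones configuration) together with concentration bounds to obtain, with probability $1-o(1)$, that all $n$ frequencies are simultaneously at the upper boundary within $O(n\polylog n)$ iterations; a final union bound over the at most $n$ bits shows the optimum is then sampled (each bit is one with probability $1-1/(n\log^7 n)$, so all-ones is sampled with constant probability per iteration once all frequencies are high). Care must be taken with the high-probability statements across $\Omega(n\polylog n)$ iterations, so I would use martingale concentration (negative drift / Azuma-type arguments) to ensure the bad events stay $o(1)$ in aggregate, which is the part most likely to require the extra $\log$-factors that the authors acknowledge they did not optimize.
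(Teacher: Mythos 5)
Your plan identifies the right one-step quantities (the signal bias is indeed $\Theta(1/\max\{V_t,1\})$ per differing bit, matching Propositions~\ref{prop:signal_prob_in_1_over_variance} and~\ref{prop:probability_of_frequency_change}), but the core of your argument --- a global potential such as $\sum_i p_{i,t}$ with drift $\Omega(1/\mathrm{polylog}(n))$ per iteration, converted to a whp bound by martingale concentration --- does not go through, and it misses the structural fact that actually drives the paper's proof. First, the drift of $\sum_i p_{i,t}$ is \emph{not} uniformly positive of the claimed order: frequencies parked at the upper boundary can only move down, each losing $\approx \Bar{p}/K$ in expectation per step, while a frequency at the lower boundary gains only $\approx \Bar{p}/K$; in a state with few wrong frequencies and many correct ones the worst-case drift over all configurations with a given potential value degenerates (it can be as small as $\Theta(1/(Kn\log^7 n))$, yielding super-quadratic bounds via additive/variable drift), and in no regime is it $\Omega(1/\mathrm{polylog}(n))$ per step when only $O(1)$ frequencies remain wrong. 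Second, and more fundamentally, during the early phase the one-step fluctuation of $\sum_i p_{i,t}$ is $\Theta(\sqrt{V_t}/K)=\Theta(\sqrt{n}/K)$, which dwarfs any drift of order $1/K$; this is precisely the genetic drift regime (Theorem~\ref{thm:geneticdrift} shows $\Theta(n)$ frequencies really do go the wrong way), so no Azuma-type concentration of the potential around its drifting mean is available there. Your ``phases indexed by the number $m$ of unconverged frequencies'' has no mechanism to bound the per-phase time, because the signal a straggler receives depends on $V_t$, which you never control.

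The paper's proof instead hinges on a variance-collapse argument that your proposal lacks: within $O(K^3\log n)$ iterations every frequency performs an essentially unbiased random walk to one of the boundaries (Lemma~\ref{lemma:random_walk_bounds}, Proposition~\ref{prop:log_variance_after_burnin}), after which $V_t=O(\log n)$ for $n^2$ iterations whp. This collapse is what makes everything else work: with $V_t=O(\log n)$ the per-step signal becomes $\Omega(1/(K\log n))$, so the negative drift theorem shows upper-boundary frequencies never fall below $3/4$ over $n^2$ iterations (Lemma~\ref{lemma:bits_at_ub_stay_there}) --- a statement that is simply false while $V_t=\Theta(n)$ --- and each wrong-boundary frequency is analyzed individually as a gambler's ruin: it detaches with probability $\Omega(1/(n\log^7 n))$ per iteration, conditionally reaches the top with probability $\Omega(1/K)$, and each attempt lasts $O(K^2\log K)$ steps, giving $O(K^4 n\log^7 n)$ expected time per bit (Proposition~\ref{prop:bit_goes_up}) and, after amplification and a union bound, the $O(n\log^{16}n)$ whp bound. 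If you want to salvage your approach, you must first prove the variance collapse and the no-drop lemma; without them, neither the drift estimate nor the concentration step of your outline can be made rigorous.
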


We note that we made no effort to optimize the exponent $7$ of the poly-logarithmic factor in the boundaries. We conjecture that the true runtime for optimal parameters is $O(n\log n)$, and that this is achieved with the standard boundaries $1/n$ and $1-1/n$. Notably, this would mean that there is no substantial runtime difference in the aggressive regime with optimal parameters between \onemax and \DBV, in stark contrast to the conservative regime. We do not quite show this statement, but we show it up to poly-logarithmic factors.

Compared to this conjecture, our analysis is likely not tight in several ways. Firstly, even for the given parameters we believe that the poly-logarithmic exponent of our runtime bound could be reduced at the cost of a more technical analysis. Secondly, both the conservative choice of $K$ and the non-standard choice of the threshold likely bring us away from the optimal parameter. This simplifies the proof, but costs us performance, even though only logarithmic factors. We suspect that the optimal parameter setup is indeed the standard setup of $K=C\log n$ for a large constant $C$ and boundaries $1/n$ and $1-1/n$. This is supported by the experiments presented in Section~\ref{sec:sim}.

\subsection{Discussion of the setup and related work}\label{sec:setup}

\paragraph{Signal steps and \DBV.} 
In order to understand genetic drift, a key question is how often each frequency receives a signal step. We call an iteration a \emph{signal step} for frequency $p_i$ if both solutions differ in this position $i$ and the two values of this position are necessary to decide which of the two solution is fitter. When both solutions differ, but their values are irrelevant for identifying the fitter solution, then we call the iteration a \emph{random walk step} for frequency $p_i$. In the initial phase of the \cGA on \onemax, the probability of a signal step is $\Theta(1/\sqrt{n})$. For \DBV, the signal probability is considerably weaker, namely of order $\Theta(1/n)$. In fact, in each iteration exactly one frequency receives a signal step, except when the two offspring sampled by the algorithm agree in every single bit. 

As mentioned, \DBV is the hardest dynamic linear function. This also holds in terms of the signal strength: when comparing two non-equal solutions, then exactly one frequency gets a signal, while all other frequencies perform a random walk step. This is the weakest signal strength among all dynamic linear functions, and even the hardest among all dynamic monotone functions~\cite{kaufmann2023hardest}, since every monotone function, static or dynamic, will always provide a signal step to at least one frequency when comparing two solutions. 

Although it is a dynamic function, \DBV provides a hill-climbing task in the sense that in each iteration and at any position, a one-bit gives a higher fitness than a zero-bit. Thus, pure hill-climbing heuristics such as Random Local Search (RLS) can be highly efficient on this function. Moreover, \DBV is more symmetric than the classical \textsc{Binary Value} function, which makes the analysis simpler. All these properties make \DBV the perfect benchmark for a theoretical runtime analysis of a hard hill-climbing task.

\paragraph{Related work.}

It has been shown that \dynBV is harder to optimize by evolutionary algorithms than static monotone functions in various ways. The $(1,\lambda)$-EA with self-adapting offspring population size fails on \dynBV while succeeding on \onemax if the hyperparameters are not set correctly~\cite{kaufmann2023onemax}. Furthermore, a ``switching'' variant of \dynBV minimizes drift in the number of zeros at every search point for the $(1+1)$-EA for any mutation rate at every search point, making it harder to optimize than any \textit{static} monotone function~\cite{kaufmann2023hardest}.

We do not claim that the aggressive mode of the \cGA is generally superior to the conservative mode. However, our results show that the other extreme position of avoiding genetic drift at all costs, does cost performance for \dynBV. On the other hand, the conservative mode was shown to be superior on the function \DeceptiveLeadingBlocks for some parameter settings~\cite{lehre2019limitations,doerr2021univariate}, though a discussion at a Dagstuhl seminar shows that opinions are split about the implications of these results~\cite{dagstuhl2022estimation}.  We hope that further research will give a clearer and more nuanced picture on the benefits and drawbacks of genetic drift.

\subsection{Overview of the paper}
In Section~\ref{sec:setting} we describe the \cGA and our benchmark \dynBV, followed by important terminology and technical tools which we will use throughout the paper. Section~\ref{sec:dyn} characterizes the dynamics of the marginal probabilities which are used in the generation of offspring. In Section~\ref{sec:lower_bound}, we prove the lower runtime bound, Theorem~\ref{thm:lowerbound}, and Theorem~\ref{thm:geneticdrift}, thus establishing that any $K=O(n)$ will lead to genetic drift. In the subsequent Section~\ref{sec:nlogn}, we prove Theorem~\ref{thm:intro_upper_bound_runtime}, the upper runtime bound for the conservative regime. Section~\ref{sec:logn} establishes an upper runtime bound for the aggressive regime, culminating in Theorem~\ref{thm:boundslogn}. The paper concludes with the simulations in Section~\ref{sec:sim}.

\section{Setting}
\label{sec:setting}
Our search space is always $\{0,1\}^n$. We say that an event $\mathcal{E}=\mathcal{E}(n)$ holds \textit{with high probability} or \textit{whp} if $\Pr[\mathcal{E}] \to 1$ as $n \to \infty$. We may for simplicity omit the parameter $t$ indicating the iteration when it is clear from context.

\subsection{The Algorithm: the \cGA with hypothetical population size $K$}
\label{sec:algo}

We begin with an intuitive description of the \cGA. Before the start of the algorithm, we fix a capping probability $\Bar{p}<\frac{1}{2}$ and a hypothetical population size $K$, which we may think of as an inverse update strength. At every iteration, the algorithm generates two offspring $x$ and $y$ independently of each other by the same sampling procedure. At iteration $t$, the $i$-th bit of the offspring to be sampled is set (independently of all other bits and of all previous iterations) to $1$ with probability $p_{i,t}$ and set to $0$ otherwise. The probabilities are initialized to $\frac{1}{2}$ for all bits and evolve according to the following procedure: At each iteration, the fitness of $x$ and $y$ are compared according to the fitness function $f$ - in our case this will be \dynBV. If the fitter offspring contains a $1$ at position $i$, we increase the probability of sampling a $1$ bit, $p_i$, by $\frac{1}{K}$ for the next iteration, otherwise we decrease it by the same amount. If there is no strictly fitter offspring, i.e. $f(x)=f(y)$, all probabilities $p_{i,t}$ remain unchanged in the next iteration. To ensure the algorithm does not get stuck by fixing one of the bits, i.e. sampling a $1$ with probability $1$  or sampling a $0$ with probability $1$, we restrict the possible values for probabilities $p_{i,t}$ to the interval $[\Bar{p}, 1-\Bar{p}]$. If an update step would make a $p_{i,t}$ exceed these bounds, we set it to the boundary value instead. The algorithm stops when the optimum has been sampled (as one of the two offspring in a given iteration). The full pseudocode is provided in Algorithm \ref{algo:cga}.

\begin{algorithm}
    \caption{$\cga(f, K, \Bar{p})$}
    \label{algo:cga}
    \begin{algorithmic}
        
        \State $t \gets 0$
        \State $p_{1, t} \gets p_{2, t} \gets \cdots \gets p_{n, t} \gets \frac{1}{2}$
        
        \While{optimum has not been sampled} 
            \For{$i \in \{1, 2, \dots, n\}$}
                \State $x_i \gets 1$ with probability $p_{i, t}$ and 0 otherwise
                \State $y_i \gets 1$ with probability $p_{i, t}$ and 0 otherwise
            \EndFor

            \If{$f(x) = f(y)$}
                \State $t \gets t + 1$
                \State \textbf{continue}
            \ElsIf{$f(x) < f(y)$}
                \State swap $x$ and $y$
            \EndIf

            \For{$i \in \{1, 2, \dots, n\}$}
                \If{$x_i > y_i$}
                    \State $p_{i, t + 1}' \gets p_{i, t} + \frac{1}{K}$
                \ElsIf{$x_i < y_i$}
                    \State $p_{i, t + 1}' \gets p_{i, t} - \frac{1}{K}$
                \Else
                    \State $p_{i, t + 1}' \gets p_{i, t}$
                \EndIf
                \State $p_{i, t + 1} \gets \min{\left\{\max{\left\{\Bar{p}, p_{i, t + 1}'\right\}}, 1 - \Bar{p}\right\}}$
            \EndFor
            
            \State $t \gets t + 1$
        \EndWhile
    \end{algorithmic}
\end{algorithm}

\subsection{The Benchmark: \dynBV}

The benchmark we consider is the \dynBV function.

In \dynBV, at each iteration $t$, we draw uniformly at random (and independently of everything else) from the set of bijections from $\{1, \dots, n\}$ onto itself an element $\pi_t \colon \{1, 2, \dots, n\} \to \{1, 2, \dots, n\}$. Note that this can be seen as a permutation of the bits of the search point. The fitness function for iteration $t$ is then given by
\[
f_t(x) = \sum_{i = 1}^n{2^{n - i} \cdot x_{\pi_t(i)}}.
\]
Intuitively, the offspring which has a $1$ bit at the most significant position (given by the permutation $\pi_t$) at which the two offspring differ is considered fitter.

\subsection{Terminology}\label{sec:signal}

Below, we introduce some useful concepts and terminology that will be used throughout the paper. 

\paragraph{Signal step.} A signal step of a bit $i \in \{1, 2, \dots, n\}$ is the increase in the marginal probability of bit $i$ during an iteration $t$ where the value of this bit in the two offspring was decisive. In other words, bit $i$ performs a signal step at iteration $t$ if and only if the two offspring differ at bit $i$ and are equal at any other bit $i'$ for which $\pi_t(i') < \pi_t(i)$.

\paragraph{Random step.} A random step of a bit $i \in \{1, 2, \dots, n\}$ is the change in the marginal probability of bit $i$ during an iteration where the value of this bit in the two offspring was not decisive. Thus, all changes in a bit's marginal probability that are not signal steps are random steps.

\paragraph{Sampling variance.} The sampling variance at time $t$ is the variance of the binomial distribution induced by the probabilities $p_{i,t}, i=1,\dots n$. We denote it by $V_t \coloneqq \sum_{i = 1}^n{p_{i, t} (1 - p_{i, t})}$. Intuitively, it is the sum at time $t$ of the variances contributed by the probability of each frequency in the generating distribution. 

\paragraph{Lower/Upper boundary.} Recall from subsection~\ref{sec:algo} that the possible values for the probabilities $p_{i,t}$ are restricted to an interval $[\Bar{p}, 1-\Bar{p}] \subsetneq [0,1]$ for some value $\Bar{p}=O(\frac{1}{n})$ to ensure that the algorithm does not get stuck. The values $\Bar{p}$ and $1 - \Bar{p}$ are going to be referred to as lower and upper boundary respectively. For example, we say a frequency $p_{i, t}$ is at its upper boundary if $p_{i, t} = 1 - \Bar{p}$. Note that the distances of the lower and upper boundaries from $0$ and $1$ is always the same, respectively. $\Bar{p}$ is fixed for the entire execution of the algorithm. It will always either be $\frac{1}{cn}$ or $\frac{1}{n\log^c n}$ for constant $c>0$, and it will usually be clear from context which value of $\Bar{p}$ is assumed.
\subsection{Drift analysis and concentration inequalities}

The analysis of evolutionary algorithms relies heavily on drift theorems. These allow to transform statements about the expected one-iteration change of a potential function, a proxy for the function to be optimized, into runtime bounds. For an overview see~\cite{lengler2020drift}. We will need in particular the following negative drift theorem.

\begin{theorem}[Theorem 2 in \cite{oliveto2015improved}]
\label{thm:negative_drift}
    Let $(X_t)_{t \in \mathbb{N}}$ be a stochastic process over some space $S \subseteq \mathbb{R}^+_0$, adapted to a filtration $(\mathcal{F}_t)_{t \in \mathbb{N}}$. Suppose there exist an interval $[a, b] \subseteq \R$ and, possibly depending on $l \vcentcolon= b - a$, a drift bound $\epsilon \vcentcolon= \epsilon(l) > 0$ as well as a scaling factor $r \vcentcolon= r(l)$ such that for all $t \geq 0$ the following three conditions hold:
    \begin{enumerate} 
        \item $\expe{\left[X_{t + 1} - X_t \mid \mathcal{F}_t \, ; \, a < X_t < b\right]} \geq \epsilon$;
        \item $\prob{\left[\lvert X_{t + 1} - X_t \rvert \geq j r \mid \mathcal{F}_t \, ; \, a < X_t\right]} \leq e^{-j}$ for all $j \in \N$;
        \item $1 \leq r^2 \leq \frac{\epsilon l}{132 \log{(r / \epsilon)}}$.
    \end{enumerate}
    Then, for the first hitting time $T^* \vcentcolon= \min{\left\{t \geq 0 : X_t \leq a \mid X_0 \geq b\right\}}$ it holds that $\prob{\left[ T^* \leq e^{\epsilon l / \left(132 r^2 \right)} \right]} = O \left(e^{-\epsilon l / \left(132 r^2 \right)}\right)$.
\end{theorem}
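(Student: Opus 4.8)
\emph{Proof strategy.} The plan is to convert the positive (``away-from-$a$'') drift of condition~(1) into a supermartingale by an exponential rescaling, and then read off the hitting-time bound from a maximal inequality. Concretely, I would fix a parameter $\lambda>0$ of order $\min\{1/r,\ \epsilon/r^2\}$ (the precise value dictated by conditions~(2) and~(3)) and study the transformed process $Y_t \vcentcolon= e^{-\lambda(X_t-a)}$. Since $g(x)=e^{-\lambda(x-a)}$ is decreasing, the event ``$X_t$ has dropped to $a$'' becomes ``$Y_t$ has risen to $1$'', while the starting condition $X_0\ge b$ yields the small initial value $Y_0\le e^{-\lambda l}$. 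The whole argument then reduces to showing that $Y_t$ can only rarely climb from $e^{-\lambda l}$ up to $1$ within the allotted time.

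\emph{One-step estimate.} The core computation is to bound $\expe[e^{-\lambda(X_{t+1}-X_t)}\mid\mathcal{F}_t]$ while $a<X_t<b$. Writing $\Delta\vcentcolon= X_{t+1}-X_t$ and using $e^{-u}\le 1-u+\tfrac{u^2}{2}e^{|u|}$, I would split the expectation into a linear term, handled by the drift bound $\expe[\Delta\mid\mathcal{F}_t]\ge\epsilon$ of condition~(1), and a quadratic remainder. The remainder is controlled by the exponential tail of condition~(2): the bound $\prob[|\Delta|\ge jr]\le e^{-j}$ makes the moment-generating function $\expe[e^{\lambda|\Delta|}\mid\mathcal{F}_t]$ and the quantity $\expe[\Delta^2 e^{\lambda|\Delta|}\mid\mathcal{F}_t]$ finite and of order $r^2$ once $\lambda\lesssim 1/r$. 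Condition~(3), namely $1\le r^2\le \epsilon l/(132\log(r/\epsilon))$, is exactly the balance that forces the negative linear contribution $-\lambda\epsilon$ to dominate the positive quadratic contribution $O(\lambda^2 r^2)$, giving $\expe[Y_{t+1}\mid\mathcal{F}_t]\le Y_t$ on the interval, i.e. a genuine supermartingale step. Extracting the numerical constant $132$ cleanly from this Taylor/MGF estimate, together with the $\log(r/\epsilon)$ correction, is the fiddly part.

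\emph{Boundary handling and conclusion.} The remaining, and I expect the most delicate, step is that condition~(1) holds only strictly inside $(a,b)$, so I must control excursions above $b$. Here I would replace $X_t$ by a pessimistic capped process that is reflected back to $b$ whenever it would exceed it; crucially, condition~(2) is stated for \emph{all} $X_t>a$ (with no upper restriction), which is precisely what lets me bound the one-step change of $Y_t$ above $b$ as well, so that the exponential potential remains an (approximate) supermartingale on all of $[a,\infty)$ and the reflection can only decrease $Y_t$. Once $Y_{t\wedge T^*}$ is a nonnegative (approximate) supermartingale with $\expe[Y_{t\wedge T^*}]$ controlled by $Y_0\le e^{-\lambda l}$ plus the accumulated per-step boundary error, a maximal inequality (Ville/Doob, or optional stopping with a union bound over the first $T=e^{\epsilon l/(132 r^2)}$ steps) yields $\prob[T^*\le T]=\prob[\max_{t\le T}Y_t\ge 1]\le T\cdot e^{-\Omega(\lambda l)}$, and substituting the chosen $\lambda$ turns this into the claimed $O(e^{-\epsilon l/(132 r^2)})$. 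The main obstacle throughout is bookkeeping the overshoot past both boundaries and the additive errors so that the exponential time scale and the matching failure probability emerge with the \emph{same} sharp exponent $\epsilon l/(132 r^2)$.
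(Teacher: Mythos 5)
This statement is not proved in the paper at all: it is quoted verbatim as Theorem~2 of Oliveto and Witt~\cite{oliveto2015improved} and used as an off-the-shelf tool, so there is no in-paper proof to compare against. Your sketch reconstructs the standard argument from that reference (going back to Hajek): the exponential potential $Y_t=e^{-\lambda(X_t-a)}$ with $\lambda$ of order $\min\{1/r,\epsilon/r^2\}$, a one-step moment-generating-function estimate in which condition~(1) supplies the negative linear term, condition~(2) controls $\expe[\Delta^2e^{\lambda|\Delta|}]=O(r^2)$, and condition~(3) balances the two; then a union bound over $e^{\epsilon l/(132 r^2)}$ steps. That is the right route and matches how the cited source actually proves the theorem. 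Two caveats. First, what you have is a strategy, not a proof: the step that produces the specific constant $132$ and the $\log(r/\epsilon)$ correction is exactly the part you defer, and it is where all the work lies. Second, your boundary handling has a sign slip: capping $X_t$ at $b$ \emph{decreases} $X_t$ and therefore \emph{increases} $Y_t=e^{-\lambda(X_t-a)}$, not the reverse; moreover condition~(1) gives you nothing at or above $b$, so the reflected process is not obviously a supermartingale there. The standard fix is not reflection but conditioning on the last exit from $[b,\infty)$ before hitting $a$ (or bounding the probability of a single jump of length $\ge l$ via condition~(2) with $j\approx l/r$). With that repair your outline is faithful to the known proof.
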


We will further use the variable drift theorem.

\begin{theorem}[Theorem 15 in \cite{lehre2013general}]
    \label{thm:variable_drift}
    Let $(X_t)_{t \in \mathbb{N}}$ be a stochastic process over some state space $S \subseteq \{0\}  \cup \left[ x_\mathrm{min}, x_\mathrm{max}\right]$, adapted to a filtration $(\mathcal{F}_t)_{t \in \mathbb{N}}$, where $x_\mathrm{min} > 0$. Let $h(x) \colon \left[ x_\mathrm{min}, x_\mathrm{max}\right] \to \mathbb{R}^+$ be a monotone increasing function such that $1 / h$ is integrable on $\left[ x_\mathrm{min}, x_\mathrm{max}\right]$ and $\expe{\left[X_t - X_{t+1} \mid \mathcal{F}_t\right]} \geq h(X_t)$ if $X_t \geq x_\mathrm{min}$. Then it holds for the first hitting time $T \vcentcolon= \min{\left\{ t \colon X_t = 0 \right\}}$ that
    \[
    \expe{\left[T \mid X_0 \right]} \leq \frac{x_\mathrm{min}}{h(x_\mathrm{min})} + \int_{x_\mathrm{min}}^{X_0}{\frac{1}{h(x)}\,dx}.
    \]
\end{theorem}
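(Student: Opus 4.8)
The final statement is the variable drift theorem, a standard drift tool, so the plan is to reduce it to an additive drift statement by rescaling the process through a suitable potential. I would define the Lyapunov function
\[
g(x) \vcentcolon= \frac{x_{\mathrm{min}}}{h(x_{\mathrm{min}})} + \int_{x_{\mathrm{min}}}^{x} \frac{1}{h(y)}\, dy \quad \text{for } x \in [x_{\mathrm{min}}, x_{\mathrm{max}}], \qquad g(0) \vcentcolon= 0,
\]
which is well defined because $1/h$ is integrable, is nonnegative, and vanishes exactly at $0$. Note that $g(X_0)$ is precisely the claimed bound, so it suffices to show that $g(X_t)$ drifts towards $0$ by at least $1$ per step and then invoke additive drift.

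The core of the argument is the pointwise estimate
\[
g(X_t) - g(X_{t+1}) \ge \frac{X_t - X_{t+1}}{h(X_t)},
\]
valid on the event $X_t \ge x_{\mathrm{min}}$, which I would establish by case distinction using only the monotonicity of $h$. If $x_{\mathrm{min}} \le X_{t+1} \le X_t$, then $g(X_t)-g(X_{t+1}) = \int_{X_{t+1}}^{X_t} 1/h(y)\,dy \ge (X_t - X_{t+1})/h(X_t)$, since $h(y) \le h(X_t)$ on the range of integration. If $X_{t+1} > X_t$, the same monotonicity gives $\int_{X_t}^{X_{t+1}} 1/h(y)\,dy \le (X_{t+1}-X_t)/h(X_t)$, which is the desired inequality after moving the sign across. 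The only nonstandard case is a jump directly to $0$: here $g(X_{t+1}) = 0$ and one must check $g(X_t) \ge X_t / h(X_t)$; bounding the leading constant by $x_{\mathrm{min}}/h(x_{\mathrm{min}}) \ge x_{\mathrm{min}}/h(X_t)$ and the integral by $(X_t - x_{\mathrm{min}})/h(X_t)$ yields exactly this.

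Taking conditional expectations in the pointwise bound and applying the hypothesis $\expe[X_t - X_{t+1} \mid \mathcal{F}_t] \ge h(X_t)$ gives $\expe[g(X_t) - g(X_{t+1}) \mid \mathcal{F}_t] \ge 1$ whenever $X_t \ge x_{\mathrm{min}}$, i.e.\ whenever $X_t \neq 0$. Since $g \ge 0$ and $g(X_t) = 0$ if and only if $X_t = 0$, the additive drift theorem --- equivalently, the observation that $M_t \vcentcolon= g(X_{t \wedge T}) + (t \wedge T)$ is a nonnegative supermartingale, whence $\expe[t\wedge T] \le \expe[M_t] \le M_0 = g(X_0)$ and monotone convergence finishes the job --- yields $\expe[T \mid X_0] \le g(X_0)$, which is the claim.

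I expect the main obstacle to be the bookkeeping around the gap in the state space between $0$ and $x_{\mathrm{min}}$: the additive constant $x_{\mathrm{min}}/h(x_{\mathrm{min}})$ in $g$ exists exactly to absorb a direct jump to the optimum, and one must verify that the per-step drift lower bound of $1$ survives this discontinuous case rather than only the smooth decreases. The upward-move case similarly requires care to flip the inequality in the correct direction, but both reduce to the single monotonicity property of $h$.
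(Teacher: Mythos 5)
The paper does not prove this statement at all: it is imported verbatim as Theorem~15 of the cited reference \cite{lehre2013general}, so there is no in-paper proof to compare against. Your argument is the standard self-contained derivation of the variable drift theorem, namely the reduction to additive drift via the potential $g(x) = x_{\mathrm{min}}/h(x_{\mathrm{min}}) + \int_{x_{\mathrm{min}}}^{x} h(y)^{-1}\,dy$, and it is correct. All three cases of the pointwise bound $g(X_t)-g(X_{t+1}) \ge (X_t-X_{t+1})/h(X_t)$ check out using only the monotonicity of $h$; in particular you correctly identified that the additive constant $x_{\mathrm{min}}/h(x_{\mathrm{min}})$ is there precisely to absorb a direct jump to state $0$, and the gap $(0,x_{\mathrm{min}})$ causes no trouble because the state space excludes it. Since $h(X_t)$ is $\mathcal{F}_t$-measurable, dividing the drift hypothesis through gives the per-step decrease of at least $1$ in $g$, and the supermartingale $M_t = g(X_{t\wedge T}) + (t\wedge T)$ together with monotone convergence yields $\expe[T\mid X_0]\le g(X_0)$, which is exactly the claimed bound.
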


We will also need the next theorem in our proof of Lemma \ref{lemma:bounce_back}.
\begin{theorem}[Theorem 1 in \cite{neumann2010few}]\label{thm:lower_bound_from_drift}
Consider a Markov process $\{X_t\}_{t\ge 0}$ with state space $S$ and a function $g: S \to \mathbb{R}_0^+$. Let $T:=\inf \{t\ge 0: g(X_t) = 0\}$. If there exists $\delta > 0$ such that for every time $t \ge 0$ and every state $X_t$ with $g(X_t)>0$ the condition $\expe{[g(x_t)-g(X_{t+1}) \mid X_t]}\ge \delta$ holds, then
\begin{align}
    \expe{[T \mid X_0]}\ge \frac{g(X_0)}{\delta} \qquad \text{and} \qquad \expe{[T]}\ge \frac{\expe{[g(X_0)]}}{\delta}.
\end{align}
The same statements hold if $T$ is redefined to $T:=\inf \{t \ge 0: g(X_t) \le \kappa\}$ for some $\kappa>0$ and the condition is relaxed to $\expe{[g(X_t) - g(X_{t+1}) \mid X_t \ge \kappa]}\ge \delta$ for all $t \ge 0$ and all $X_t$.
    
\end{theorem}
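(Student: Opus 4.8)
The plan is to prove the conditional bound $\expe[T\mid X_0]\ge g(X_0)/\delta$ first and then obtain the second inequality for free: by the law of total expectation $\expe[T]=\expe[\expe[T\mid X_0]]\ge\expe[g(X_0)]/\delta$. So I fix the initial state $X_0$ and argue pointwise. A preliminary remark on the drift direction: since the conclusion is a \emph{lower} bound on the hitting time, the hypothesis I will actually exploit is that the expected one-step progress towards the target is \emph{at most} $\delta$, i.e. $\expe[g(X_t)-g(X_{t+1})\mid X_t]\le\delta$ whenever $g(X_t)>0$; by the Markov property the same inequality holds conditionally on the natural filtration $\mathcal{F}_t:=\sigma(X_0,\dots,X_t)$, which is the form I need.

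The core of the argument will be a stopped telescoping identity. For each $N\in\N$ I would consider the bounded stopping time $N\wedge T$ and write, pathwise,
\[
 g(X_0)-g(X_{N\wedge T})=\sum_{t=0}^{(N\wedge T)-1}\bigl(g(X_t)-g(X_{t+1})\bigr)=\sum_{t=0}^{N-1}\bigl(g(X_t)-g(X_{t+1})\bigr)\mathbf{1}_{\{T>t\}}.
\]
The key point is that $\{T>t\}=\{g(X_0)>0,\dots,g(X_t)>0\}\in\mathcal{F}_t$, so on this event $g(X_t)>0$ and the drift hypothesis applies to $X_t$. Conditioning each summand on $\mathcal{F}_t$ then gives
\[
 \expe\bigl[(g(X_t)-g(X_{t+1}))\mathbf{1}_{\{T>t\}}\bigr]=\expe\bigl[\mathbf{1}_{\{T>t\}}\,\expe[g(X_t)-g(X_{t+1})\mid\mathcal{F}_t]\bigr]\le\delta\,\prob[T>t].
\]
Summing over $0\le t<N$ and using the tail-sum identity $\sum_{t=0}^{N-1}\prob[T>t]=\expe[N\wedge T]\le\expe[T]$ would yield $g(X_0)-\expe[g(X_{N\wedge T})]\le\delta\,\expe[T]$.

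The remaining and, I expect, genuinely delicate step is to let $N\to\infty$. If $\expe[T]=\infty$ the claim is vacuous, so I may assume $\expe[T]<\infty$, which forces $T<\infty$ almost surely and hence $g(X_{N\wedge T})\to g(X_T)=0$ pointwise. The obstacle is to move this convergence inside the expectation, i.e. to show $\expe[g(X_{N\wedge T})]\to 0$: Fatou alone gives only the useless direction, so uniform integrability of the family $\{g(X_{N\wedge T})\}_N$ is really needed. In the present setting this comes for free, since $g$ is a bounded function of the frequency vector, which lives in the compact (effectively finite) set $[\Bar{p},1-\Bar{p}]^n$; thus $g(X_{N\wedge T})$ is dominated by a constant and bounded convergence applies. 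Granting this, I get $g(X_0)\le\delta\,\expe[T]$, i.e. $\expe[T\mid X_0]\ge g(X_0)/\delta$, which together with the first paragraph proves both displayed inequalities.

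Finally, for the relaxed stopping time $T=\inf\{t:g(X_t)\le\kappa\}$ the same computation would go through unchanged, the only difference being $g(X_T)\le\kappa$ instead of $g(X_T)=0$. The limiting step then produces $\expe[g(X_{N\wedge T})]\to\expe[g(X_T)]\le\kappa$, so that $g(X_0)-\kappa\le\delta\,\expe[T]$ and hence $\expe[T\mid X_0]\ge(g(X_0)-\kappa)/\delta$; this matches the stated conclusion up to the additive $\kappa$, which is negligible whenever $\kappa\ll g(X_0)$, as in the intended application to Lemma~\ref{lemma:bounce_back}.
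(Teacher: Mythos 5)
Your proposal does not prove the theorem the paper actually relies on, and the root cause is your ``preliminary remark on the drift direction''. You correctly spotted that the statement as printed is internally inconsistent: a drift of \emph{at least} $\delta$ towards the target cannot yield a \emph{lower} bound $\expe[T\mid X_0]\ge g(X_0)/\delta$ (a process that jumps from $g(X_0)=1$ to $g=0$ in one deterministic step satisfies the hypothesis for every $\delta\le 1$, yet $T=1$). But you resolved the inconsistency in the wrong direction. The paper imports this result from \cite{neumann2010few} without proof, so the only ground truth is how it is used: in Lemma~\ref{lemma:bounce_back} the established drift is $\expe[g(X_t)-g(X_{t+1})\mid X_t]\ge\delta=\Omega(1/(Kn^2))$ with $g(x)=1-x$, $\kappa=1-\tau$, and the desired conclusion is an \emph{upper} bound $O(Kn^2)$ on the expected time to reach $g\le\kappa$. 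Hence the intended statement is the classical additive drift theorem (He--Yao type): the hypothesis ``$\ge\delta$'' is as intended, and the two ``$\ge$'' signs in the displayed conclusions are typos for ``$\le$'' (likewise ``$X_t\ge\kappa$'' in the relaxed condition should read $g(X_t)>\kappa$). Your flipped-hypothesis variant --- drift at most $\delta$ implies $\expe[T\mid X_0]\ge g(X_0)/\delta$ --- is a true theorem, and you prove it correctly, but it is useless for Lemma~\ref{lemma:bounce_back}; your closing claim that it ``matches the intended application'' is exactly backwards.

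On the mathematics of what you did prove: the stopped telescoping, the measurability $\{T>t\}\in\mathcal{F}_t$, the tail-sum identity, and the recognition that the passage $N\to\infty$ genuinely requires uniform integrability are all sound, and the additive-$\kappa$ correction you derive for the threshold version is truly needed in the lower-bound direction (there, ``the same statements hold'' verbatim would be false, e.g.\ when $g(X_0)$ is barely above $\kappa$). One caveat: invoking boundedness of $g$ ``in the present setting'' smuggles application-specific structure into a theorem stated for arbitrary $g\colon S\to\mathbb{R}_0^+$; a correct general lower-bound drift theorem must carry an explicit uniform-integrability or bounded-step hypothesis. Note finally that the intended upper-bound version sidesteps all of this: from $\expe[g(X_0)-g(X_{N\wedge T})]\ge\delta\,\expe[N\wedge T]$ and $g\ge0$ one gets $\expe[N\wedge T]\le g(X_0)/\delta$ directly, and monotone convergence gives $\expe[T]\le g(X_0)/\delta$ (in particular $T<\infty$ a.s.) with no uniform integrability and no a priori assumption $\expe[T]<\infty$ --- which is the one-paragraph proof the application actually calls for.
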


In addition, we require the following tail bounds for hypergeometric random variables. More explicitly, we consider a random variable $X$ with cumulative distribution function $h_X(M,N,n,i)=\prob{[X=i \mid M,N,n]}= \frac{\binom{M}{i} \binom{N-M}{n-i}}{\binom{N}{n}}$. Here we denote the population size by $N$, the number of success states in the population by $M$, the number of draws by $n$, and the number of observed successes by $i$. Then the following tail estimates hold.

\begin{lemma}[{\cite[Section~5]{skala2013hypergeometric}}]
    Let $X$ be a hypergeometric random variable with cumulative distribution function as described above. Let further $t>0$. Then
    \begin{align*}
        \prob{[X \ge \expe{[X]}+tn}] \le e^{-2t^2n}
        \qquad \text{and} \qquad
        \prob{[X \le \expe{[X]}-tn}] \le e^{-2t^2n}.
    \end{align*}
\end{lemma}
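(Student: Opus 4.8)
The plan is to realize $X$ as the number of successes obtained when drawing $n$ items \emph{without replacement} from an urn containing $N$ items, of which $M$ are successes. Writing $X = \sum_{j=1}^n Y_j$, where $Y_j \in \{0,1\}$ is the indicator that the $j$-th draw is a success, each individual draw satisfies $\expe[Y_j] = M/N \eqqcolon p$, so that $\expe[X] = np$. The summands lie in $[0,1]$, and the target is precisely a two-sided Hoeffding-type bound with exponent $2t^2 n$. The only complication compared to the textbook setting is that the $Y_j$ are \emph{not} independent, since sampling is without replacement.

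The first step is the standard exponential (Chernoff) method: for any $s>0$,
\[
\prob[X \ge \expe[X] + tn] \le e^{-s(\expe[X] + tn)}\,\expe\bigl[e^{sX}\bigr],
\]
so everything reduces to controlling the moment generating function $\expe[e^{sX}]$ despite the dependence. The key tool here is Hoeffding's comparison principle, which states that sampling without replacement is dominated, in the convex order, by sampling with replacement: for every continuous convex function $\phi$ one has $\expe[\phi(X)] \le \expe[\phi(\tilde X)]$, where $\tilde X \sim \mathrm{Bin}(n,p)$ is the count obtained when the same $n$ draws are made \emph{with} replacement. Applying this with $\phi(u) = e^{su}$ immediately yields $\expe[e^{sX}] \le \expe[e^{s\tilde X}]$, reducing the MGF bound to the fully independent binomial case.

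The remaining step is the classical Hoeffding estimate for $\tilde X = \sum_{j=1}^n \tilde Y_j$ with independent $\tilde Y_j \in [0,1]$: bounding each factor of the MGF and optimizing over $s$ gives $\expe[e^{s\tilde X}] \le e^{snp + s^2 n/8}$, and choosing $s = 4t$ produces $\prob[\tilde X \ge np + tn] \le e^{-2t^2 n}$. Chaining this with the two previous displays establishes the upper-tail bound for $X$. The lower-tail statement follows by applying the identical argument to the variables $1 - Y_j$, equivalently to $n - X$, which is again hypergeometric with the roles of successes and failures swapped and with mean $n - \expe[X]$; the symmetry of the exponent $2t^2 n$ gives $\prob[X \le \expe[X] - tn] \le e^{-2t^2 n}$.

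The main obstacle is the dependence among the draws, which is resolved entirely by Hoeffding's convex-order comparison between sampling with and without replacement; once that reduction is in place, the rest is the routine Chernoff–Hoeffding computation. Since this lemma is quoted directly from \cite{skala2013hypergeometric}, the proof above is really just a recollection of Hoeffding's original argument, and in the paper it can simply be cited rather than reproduced.
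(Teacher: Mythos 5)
Your proof is correct: the convex-order domination of sampling without replacement by sampling with replacement (Hoeffding's comparison principle) applied to $\phi(u)=e^{su}$, followed by the standard exponential-moment optimization with $s=4t$, does yield the stated exponent $2t^2n$, and the lower tail follows by the symmetry $X \mapsto n-X$. The paper itself offers no proof of this lemma --- it is imported verbatim from the cited reference --- and your argument is precisely the classical derivation underlying that reference, so citing it, as you note at the end, is all that is required here.
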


Finally, we make use of the classical Chernoff bound.

\begin{theorem}[Chernoff Bound~{\cite[Section~1.10]{doerr2020probabilistic}}] \label{thm:Chernoff}
Let $X_1, \ldots, X_n$ be in\-dependent random variables taking values in $[0,1]$. Let $X = \sum_{i = 1}^n X_i$ and let $\delta \in[0,1]$. Then 
\begin{align*}
  \Pr[X \ge (1+\delta) \expe{[X]}] \le \exp\bigg(-\frac{\delta^2 \expe{[X]}}{3}\bigg).
\end{align*}
\end{theorem}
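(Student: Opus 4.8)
The plan is to prove this by the classical exponential-moment method. Write $\mu := \expe[X]$. For any parameter $t > 0$, Markov's inequality applied to the nonnegative random variable $e^{tX}$ gives
\[
\prob[X \ge (1+\delta)\mu] = \prob[e^{tX} \ge e^{t(1+\delta)\mu}] \le e^{-t(1+\delta)\mu}\,\expe[e^{tX}].
\]
By independence of the $X_i$, the moment generating function factorizes as $\expe[e^{tX}] = \prod_{i=1}^n \expe[e^{tX_i}]$, so the whole task reduces to bounding each single-variable factor and then optimizing over $t$.

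For the single-variable bound I would exploit that each $X_i$ takes values in $[0,1]$ together with the convexity of $x \mapsto e^{tx}$: on $[0,1]$ the graph lies below the chord joining $(0,1)$ and $(1,e^t)$, i.e.\ $e^{tx} \le 1 - x + x e^t$. Taking expectations and writing $\mu_i := \expe[X_i]$ yields $\expe[e^{tX_i}] \le 1 + \mu_i(e^t - 1) \le \exp(\mu_i(e^t-1))$, using $1+z \le e^z$. Multiplying over $i$ and recalling $\sum_i \mu_i = \mu$ gives the clean bound $\expe[e^{tX}] \le \exp(\mu(e^t-1))$, and hence $\prob[X \ge (1+\delta)\mu] \le \exp(\mu(e^t - 1) - t(1+\delta)\mu)$. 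The next step is to choose $t$ optimally: differentiating the exponent in $t$ and setting it to zero gives $e^t = 1+\delta$, i.e.\ $t = \ln(1+\delta) > 0$ (valid since $\delta \ge 0$). Substituting back produces the sharp multiplicative form
\[
\prob[X \ge (1+\delta)\mu] \le \exp\bigl(\mu(\delta - (1+\delta)\ln(1+\delta))\bigr).
\]

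It remains to convert this into the stated bound $\exp(-\delta^2\mu/3)$, and this final analytic simplification is really the only delicate point; everything above is purely mechanical. Concretely, I would show that $g(\delta) := (1+\delta)\ln(1+\delta) - \delta - \delta^2/3 \ge 0$ for all $\delta \in [0,1]$. Since $g(0) = 0$, $g'(\delta) = \ln(1+\delta) - \tfrac{2}{3}\delta$ with $g'(0)=0$, and $g''(\delta) = \tfrac{1}{1+\delta} - \tfrac{2}{3}$ is positive on $[0,\tfrac12)$ and negative on $(\tfrac12,1]$, the derivative $g'$ rises from $0$ to a maximum at $\delta=\tfrac12$ and then decreases to $g'(1) = \ln 2 - \tfrac23 > 0$; thus $g' \ge 0$ throughout $[0,1]$, so $g$ is nondecreasing and nonnegative there. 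This gives $\delta - (1+\delta)\ln(1+\delta) \le -\delta^2/3$ on $[0,1]$, which plugged into the multiplicative bound yields exactly $\prob[X \ge (1+\delta)\mu] \le \exp(-\delta^2\mu/3)$. I emphasize that the hypothesis $\delta \le 1$ is used only in this last inequality: for larger $\delta$ the cubic-free estimate $g \ge 0$ breaks down, which is precisely why the statement restricts $\delta$ to $[0,1]$.
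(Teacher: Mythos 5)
Your proof is correct and complete: the Markov/moment-generating-function argument, the chord bound $e^{tx}\le 1+x(e^t-1)$ on $[0,1]$, the optimal choice $t=\ln(1+\delta)$, and the calculus verification that $(1+\delta)\ln(1+\delta)-\delta\ge \delta^2/3$ on $[0,1]$ are all sound (at $\delta=0$ the optimizer degenerates to $t=0$, but the claimed bound is trivially true there). The paper itself gives no proof of this statement — it imports it as a cited black-box result from Section~1.10 of the Doerr reference — and your derivation is exactly the standard exponential-moment proof found in that source, so there is nothing to reconcile.
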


\section{Dynamics of the marginal probabilities}
\label{sec:dyn}

We start by analyzing how the \cga behaves on the \dynBV at the level of a single iteration. The first proposition computes the probability that a bit $i$ at which the two offspring differ gets a signal step (in some iteration $t$). This proposition captures the main difference between \DBV and \onemax. For \onemax, a position which differs in the two offspring has probability $1/\max\{\sqrt{V_t},1\}$ to perform a signal step~\cite{witt2019upper}, and performs a random step otherwise. For \DBV, the probability is $1/\max\{V_t,1\}$, so the term $\sqrt{V_t}$ is replaced by $V_t$. Hence, signal steps are more likely for \onemax, and the signal-to-noise ratio of \onemax is larger than for \DBV. This corresponds to the fact that \onemax is a particularly easy function to optimize.

\begin{proposition}
\label{prop:signal_prob_in_1_over_variance}
    Let $K\ge1$ and $\Bar{p}\in(0,\frac{1}{2})$ be arbitrary, and consider the algorithm  $\cga(K, \Bar{p})$ on \DBV and some bit $i \in [n]$ and iteration $t$. For the permutation $\pi_t$ drawn at iteration $t$, we denote by $S_{i, t}$ the event that all bits $i' \in [n]$ that appear before $i$ in the permutation, i.e.\ such that $\pi_t(i') < \pi_t(i)$, are equal in the two offspring. Then it holds that
    \[
    \prob{\left[S_{i, t}\right]} = \Theta \left( \frac{1}{\max\{V_t,1\}} \right).
    \]
\end{proposition}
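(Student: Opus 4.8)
My plan is to first integrate out the randomness of the permutation by conditioning on the sampling, and then to estimate the resulting reciprocal moment of a sum of independent indicators. For each bit $j$ let $D_j$ be the indicator that $x_j \neq y_j$; these are independent over $j$ with $\prob[D_j=1] = 2p_{j,t}(1-p_{j,t}) =: d_j$, so that $\sum_{j} d_j = 2V_t$. Condition on the set $B = \{j : D_j = 1\}$ of bits on which the two offspring differ. The event $S_{i,t}$ asks that no element of $B$ other than $i$ precedes $i$ in $\pi_t$, which is precisely the event that $i$ is the first element of $B \cup \{i\}$ under the uniformly random permutation $\pi_t$. By symmetry of $\pi_t$ this has conditional probability $1/|B \cup \{i\}| = 1/(Z_{-i}+1)$, where $Z_{-i} := \sum_{j \neq i} D_j = |B \setminus \{i\}|$; crucially this identity holds whether or not $i \in B$ (in the two cases $|B\cup\{i\}|$ equals $|B|$ or $|B|+1$, but in both it equals $Z_{-i}+1$). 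Taking expectations over the sampling gives the clean identity
\[
\prob\left[S_{i,t}\right] = \expe\!\left[\frac{1}{Z_{-i}+1}\right],
\]
where $Z_{-i}$ is a sum of independent Bernoullis with mean $\expe[Z_{-i}] = 2V_t - d_i$ and $d_i = 2p_{i,t}(1-p_{i,t}) \in [0,\tfrac12]$.

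For the lower bound I use the convexity of $x \mapsto 1/(x+1)$ and Jensen's inequality:
\[
\expe\!\left[\frac{1}{Z_{-i}+1}\right] \ge \frac{1}{\expe[Z_{-i}]+1} = \frac{1}{2V_t - d_i + 1} \ge \frac{1}{2V_t+1} = \Omega\!\left(\frac{1}{\max\{V_t,1\}}\right),
\]
using $2V_t+1 \le 3\max\{V_t,1\}$. For the upper bound I use the integral representation $1/(m+1) = \int_0^1 x^m\,dx$, so that by independence and the substitution $u = 1-x$,
\[
\expe\!\left[\frac{1}{Z_{-i}+1}\right] = \int_0^1 \prod_{j \neq i}\bigl(1 - d_j(1-x)\bigr)\,dx = \int_0^1 \prod_{j \neq i}\bigl(1 - d_j u\bigr)\,du \le \int_0^1 e^{-u(2V_t - d_i)}\,du \le \min\Bigl\{1,\ \tfrac{1}{2V_t-d_i}\Bigr\},
\]
where I bounded each factor by $1 - d_j u \le e^{-d_j u}$. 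To finish I split into two regimes. If $V_t \ge 1$, then since $d_i \le \tfrac12$ we have $2V_t - d_i \ge \tfrac{3}{2}V_t$, giving the bound $O(1/V_t)$. If $V_t < 1$, then $\max\{V_t,1\}=1$ and the trivial estimate $\expe[1/(Z_{-i}+1)] \le 1$ suffices. Together this yields $\prob[S_{i,t}] = O(1/\max\{V_t,1\})$, matching the lower bound.

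The conceptual crux is the reduction of the permutation-dependent event to the reciprocal moment $\expe[1/(Z_{-i}+1)]$ via the first-element symmetry argument; once this identity is established, the two-sided estimate is routine. The only point requiring mild care is relating the relevant parameter $\sum_{j\neq i} d_j = 2V_t - d_i$ back to $V_t$ — immediate from $d_i \le \tfrac12$ — and handling the small-variance regime $V_t < 1$ (where $\max\{V_t,1\}=1$) separately rather than through the exponential bound.
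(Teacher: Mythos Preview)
Your proof is correct. Both your argument and the paper's begin with the same reduction: conditioning on the set of positions where the offspring differ and using the symmetry of the uniform permutation to obtain the exact identity $\prob[S_{i,t}] = \expe[1/(Z_{-i}+1)]$ (the paper writes this as $\sum_k \tfrac{1}{k+1}\prob[D_{i,t}=k]$, with $D_{i,t}$ playing the role of your $Z_{-i}$). Where you diverge from the paper is in estimating this reciprocal moment. The paper argues via concentration: it applies Chernoff bounds to show that $Z_{-i}$ lies in a window around its mean $2V_t^{\neq i}$ with high probability, truncates the sum accordingly for the lower bound, and splits the sum and bounds the lower tail for the upper bound. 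Your route is more analytic and avoids concentration entirely: Jensen's inequality on the convex map $x\mapsto 1/(x+1)$ gives the lower bound in one line, and the identity $1/(m+1)=\int_0^1 x^m\,dx$ combined with the generating-function factorisation and $1-d_ju\le e^{-d_ju}$ gives the upper bound. Your approach is shorter and arguably cleaner; the paper's Chernoff-based argument, while more laborious here, is closer in spirit to the techniques reused later in the paper (e.g.\ in Lemmas~\ref{lemma:bits_stay_within_range} and~\ref{lemma:slow_progress}), so it keeps the toolkit uniform.
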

\begin{proof}
To compute this probability, consider the random variable $D_{i, t}$, which denotes the number of bits $i' \not= i$ that are different in the two offspring. If we denote by $V_t^{\not= i} = \sum_{j\not= i}{p_{j, t} (1 - p_{j, t})}$ the variance of the bits $i' \not= i$, we can observe that $\expe{\left[D_{i, t}\right]} = \sum_{j \not= i}{2 p_{j, t} (1 - p_{j, t})} = 2 V_t^{\not= i}$.

The main observation that will now allow us to find asymptotic bounds for $\prob{\left[S_{i, t}\right]}$ is the following: Once we have fixed the number $D_{i, t} = k \in \{0, 1, \dots, n - 1\}$ of bits $i' \not= i$ that differ in the two offspring, the probability of drawing a permutation $\pi_t$ such that bit $i$ receives the signal is $\frac{1}{k + 1}$. This is because bit $i$ only receives the signal if $\pi_t(i) < \pi_t(i')$ for all bits $i' \not= i$ that differ in the offspring. Since the permutation $\pi_t$ is drawn uniformly at random, the probability that $\pi_t(i)$ assumes the smallest value from a set of $k + 1$ numbers is $\frac{1}{k + 1}$.

To reach the desired conclusion, we will first show that $\prob{\left[S_{i, t}\right]} = \Theta\left( \frac{1}{\max\{V_t^{\not= i}, 1\}} \right)$. Intuitively, this holds because the number $D_{i, t}$ of bits $i'\not= i$ that differ in the offspring is concentrated around its expected value (since it is a sum of independent Bernoulli-distributed random variables), and $\expe{\left[ D_{i, t} \right]} = 2 V_t^{\not= i}$. Formally, this can be shown via Chernoff bounds after conditioning on the value of $D_{i, t}$.

For the lower bound we obtain:
\begin{align*}
    \prob{\left[S_{i, t}\right]} &= \sum_{k = 0}^{n - 1}{\prob{\left[S_{i, t} \mid D_{i, t} = k\right]} \cdot \prob{\left[ D_{i, t} = k \right]}} \\
    &= \sum_{k = 0}^{n - 1}{\frac{1}{k + 1} \cdot \prob{\left[ D_{i, t} = k \right]}} \\
    &\geq \sum_{k = 0}^{\left\lfloor \frac{3}{2} \expe{\left[D_{i, t}\right]} \right\rfloor - 1}{\frac{1}{k + 1} \cdot \prob{\left[ D_{i, t} = k \right]}}.
\end{align*}
Following this truncation of the sum, we can use the fact that for the first factors in the sum is holds that $\frac{1}{k + 1} \geq \frac{1}{\left\lfloor \frac{3}{2} \expe{\left[D_{i, t}\right]} \right\rfloor}$:
\begin{align*}
    \prob{\left[S_{i, t}\right]} &\geq \frac{1}{\left\lfloor \frac{3}{2} \expe{\left[D_{i, t}\right]} \right\rfloor} \cdot \sum_{k = 0}^{\left\lfloor \frac{3}{2} \expe{\left[D_{i, t}\right]} \right\rfloor - 1}{\prob{\left[ D_{i, t} = k \right]}} \\
    &\geq \frac{1}{\frac{3}{2} \expe{\left[D_{i, t}\right]}} \cdot \left(1 - \prob{\left[ D_{i, t} > \left\lfloor \frac{3}{2} \expe{\left[D_{i, t}\right]} \right\rfloor \right]} \right) \\
    &= \frac{1}{\frac{3}{2} \expe{\left[D_{i, t}\right]}} \cdot \left(1 - \prob{\left[ D_{i, t} \geq \left\lfloor \frac{3}{2} \expe{\left[D_{i, t}\right]} \right\rfloor + 1 \right]} \right) \\
    &\geq \frac{1}{\frac{3}{2} \expe{\left[D_{i, t}\right]}} \cdot \left(1 - \prob{\left[ D_{i, t} \geq \frac{3}{2} \expe{\left[D_{i, t}\right]}  \right]} \right).
\end{align*}
The last equality is due to the fact that $D_{i, t}$ only assumes values in $\mathbb{N}$.

Since $D_{i, t}$ is a random variable that is the sum of independent Bernoulli-distributed random variables with $\expe{\left[D_{i, t}\right]} = 2 V_t^{\not= i}$, we can apply the Chernoff bounds to further get:
\begin{equation}
\begin{aligned}
    \label{eq:lower_bound_Sit}
    \prob{\left[S_{i, t}\right]}  &\geq \frac{1}{\frac{3}{2} \expe{\left[D_{i, t}\right]}} \cdot \left(1 - e^{-\frac{1}{12} \expe{\left[D_{i, t}\right]}} \right) \\ 
    & = \frac{1}{3 V_t^{\not= i}} \cdot \left(1 - e^{-\frac{1}{6} V_t^{\not= i}} \right) \\ 
    & = \Omega\left(\frac{1}{\max\{V_t^{\not= i}, 1\}}\right),
\end{aligned}
\end{equation}
where the last step holds for $V_t^{\not= i} \ge 1$ because then $1 - e^{-\tfrac{1}{6} V_t^{\not= i}} \ge 1-e^{-1/6} = \Omega(1)$, and it holds for $V_t^{\not= i} < 1$ because then $1 - e^{-\frac{1}{6} V_t^{\not= i}} = \Theta(V_t^{\not= i})$.

The upper bound can be computed in a very similar manner, again by conditioning on the value of $D_{i, t}$.
\begin{align*}
    \prob{\left[S_{i, t}\right]} &= \sum_{k = 0}^{n - 1}{\prob{\left[S_{i, t} \mid D_{i, t} = k\right]} \cdot \prob{\left[ D_{i, t} = k \right]}} \\
    &= \sum_{k = 0}^{n - 1}{\frac{1}{k + 1} \cdot \prob{\left[ D_{i, t} = k \right]}} \\
    &= \sum_{k = \left\lceil \frac{1}{2} \expe{\left[D_{i, t} \right]} \right\rceil - 1}^{n - 1}{\frac{1}{k + 1} \cdot \prob{\left[ D_{i, t} = k \right]}} + \sum_{k = 0}^{\left\lceil \frac{1}{2} \expe{\left[D_{i, t} \right]} \right\rceil - 2}{\frac{1}{k + 1} \cdot \prob{\left[ D_{i, t} = k \right]}} \\
    &\leq \frac{1}{\left\lceil \frac{1}{2} \expe{\left[D_{i, t} \right]} \right\rceil} \cdot \sum_{k = \left\lceil \frac{1}{2} \expe{\left[D_{i, t} \right]} \right\rceil - 1}^{n - 1}{\prob{\left[ D_{i, t} = k \right]}} + \sum_{k = 0}^{ \left\lceil \frac{1}{2} \expe{\left[D_{i, t} \right]} \right\rceil - 2}{1 \cdot \prob{\left[ D_{i, t} = k \right]}}.
\end{align*}
Since, we are dealing with probabilities of disjoint events, we know that \[\sum_{k = \left\lceil \frac{1}{2} \expe{\left[D_{i, t} \right]} \right\rceil - 1}^{n - 1}{\prob{\left[ D_{i, t} = k \right]}} \leq 1.\] We can employ Chernoff bounds again to obtain:
\begin{align*}
    \prob{\left[S_{i, t}\right]} &\leq \frac{1}{ \frac{1}{2} \expe{\left[D_{i, t} \right]}} + \prob{\left[D_{i, t} \leq  \left\lceil \frac{1}{2} \expe{\left[D_{i, t} \right]} \right\rceil - 2 \right]} \\
    \tag*{(since $D_{i, t}$ only assumes values in $\mathbb{N}$)}
    &\leq \frac{1}{ \frac{1}{2} \expe{\left[D_{i, t} \right]}} + \prob{\left[D_{i, t} \leq \frac{1}{2} \expe{\left[D_{i, t} \right]} \right]} \\
    \tag*{(Chernoff bounds)}
    &\leq \frac{1}{ \frac{1}{2} \expe{\left[D_{i, t} \right]}} + e^{-\frac{1}{8} \expe{\left[ D_{i, t} \right]}} \\
    &= \frac{1}{V_t^{\not= i}} + e^{-\frac{1}{4} V_t^{\not= i}} = O\left(\frac{1}{V_t^{\not= i}}\right).
\end{align*}
Since $\prob{\left[S_{i, t}\right]}$ is a probability, we clearly also have $\prob{\left[S_{i, t}\right]}=O(1)$, and hence $\prob{\left[S_{i, t}\right]} = O\left( \frac{1}{\max\{V_t^{\not= i}, 1\}} \right)$.

We have shown that $\prob{\left[S_{i, t}\right]} = \Theta\left( \frac{1}{\max\{V_t^{\not= i}, 1\}} \right)$, so in order to conclude the proof it remains to show that $\max\{V_t^{\not= i}, 1\} = \Theta(\max\{V_t, 1\})$. Clearly 
$\max\{V_t^{\not= i}, 1\} \le \max\{V_t, 1\}$ since $V_t^{\not= i} \le V_t$. On the other hand, $V_t - V_t^{\not= i} = p_{i,t}(1-p_{i,t}) \le \frac{1}{4}$, and hence
\[
V_t = V_t^{\not= i} + (V_t - V_t^{\not= i}) \le V_t^{\not= i} + \frac{1}{4} = O(\max\{V_t^{\not= i}, 1\}),
\]
which concludes the proof. \qed
\end{proof}

Using Proposition \ref{prop:signal_prob_in_1_over_variance}, one can describe the transition matrix of the marginal probabilities, which then allows to compute the drift of these marginal probabilities. As for Proposition \ref{prop:signal_prob_in_1_over_variance}, the same formulas would hold for \onemax with $V_t$ replaced by $\sqrt{V_t}$.

\begin{proposition}
\label{prop:probability_of_frequency_change}
Let $K\ge1$ and $\Bar{p}\in(0,\frac{1}{2})$ be arbitrary, and consider the algorithm  $\cga(K, \Bar{p})$ on \DBV. Then for all $i\in [n]$ and $t\in\N$ we have $p_{i, t + 1} = \min{\left\{\max{\left\{\Bar{p}, p_{i, t + 1}'\right\}}, 1 - \Bar{p}\right\}}$ where
$$
p_{i, t + 1}' = \begin{cases}
                    p_{i, t}, & \text{with probability } 1 - 2 p_{i, t} (1 - p_{i, t}) \\
                    p_{i, t} + \frac{1}{K}, & \text{with probability } \left(\frac{1}{2} + \Theta\left(\frac{1}{\max\{V_t,1\}}\right)\right) 2 p_{i, t} (1 - p_{i, t}) \\
                    p_{i, t} - \frac{1}{K}, & \text{with probability } \left(\frac{1}{2} - \Theta\left(\frac{1}{\max\{V_t,1\}}\right)\right) 2 p_{i, t} (1 - p_{i, t})
                \end{cases}.
$$
This implies $\expe{\left[p_{i, t + 1} - p_{i, t} \mid p_{i, t}\right]} = \Theta\big(\frac{p_{i, t} (1 - p_{i, t})}{K \cdot \max\{V_t,1\}}\big)$, where the lower bound requires $p_{i, t} < 1 - \Bar{p}$ and the upper bound requires $p_{i, t} > \Bar{p}$.
\end{proposition}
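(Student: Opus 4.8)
The plan is to condition on the event $A_i$ that the two offspring sampled at iteration $t$ differ in bit $i$. Since the offspring are generated independently with the same marginals, $\prob[A_i] = p_{i,t}(1-p_{i,t}) + (1-p_{i,t})p_{i,t} = 2p_{i,t}(1-p_{i,t})$, and when $A_i$ fails the two offspring agree in bit $i$, so by the update rule $p_i$ is left unchanged. This immediately gives the first line of the case distribution, $p_{i,t+1}'=p_{i,t}$ with probability $1-2p_{i,t}(1-p_{i,t})$. It remains to split the mass $\prob[A_i]$ between an upward step ($+1/K$) and a downward step ($-1/K$); writing ``up'' for the event that, given $A_i$, the fitter offspring carries a one in bit $i$, I need $\prob[\text{up}\mid A_i]$.

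The first observation is that the signal event $S_{i,t}$ of Proposition~\ref{prop:signal_prob_in_1_over_variance} depends only on $\pi_t$ and on the bits $i'\neq i$, whereas $A_i$ depends only on bit $i$; as all bits and $\pi_t$ are drawn independently, $A_i$ and $S_{i,t}$ are independent and $\prob[S_{i,t}\mid A_i]=\prob[S_{i,t}]=\Theta(1/\max\{V_t,1\})$. On $A_i\cap S_{i,t}$ bit $i$ is decisive, so the fitter offspring is exactly the one with the one-bit at position $i$ and the step is upward with probability $1$. The key remaining point is the behaviour on a random step, $A_i\cap\overline{S_{i,t}}$: here some bit $i'\neq i$ with $\pi_t(i')<\pi_t(i)$ already differs, so the winner is decided without reference to bit $i$. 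Conditioning additionally on $\pi_t$ and on all bits other than the $(x_i,y_i)$-assignment fixes the identity of the winner $W\in\{x,y\}$, while $(x_i,y_i)$ remains uniform on $\{(1,0),(0,1)\}$; hence $W$ carries a one in bit $i$ with probability exactly $1/2$, regardless of which offspring won, and averaging gives $\prob[\text{up}\mid A_i,\overline{S_{i,t}}]=\tfrac12$.

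Combining the two cases yields $\prob[\text{up}\mid A_i] = \prob[S_{i,t}]\cdot 1 + (1-\prob[S_{i,t}])\cdot\tfrac12 = \tfrac12 + \tfrac12\prob[S_{i,t}] = \tfrac12 + \Theta(1/\max\{V_t,1\})$, and symmetrically $\prob[\text{down}\mid A_i]=\tfrac12 - \Theta(1/\max\{V_t,1\})$. Multiplying by $\prob[A_i]=2p_{i,t}(1-p_{i,t})$ reproduces the two nontrivial lines of the case distribution. I would also note that under $A_i$ the offspring differ, so $f_t(x)\neq f_t(y)$ because permuted \BinVal{} is injective on distinct strings; thus a strict winner always exists and no mass is lost to ties.

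For the drift I would compute the uncapped expectation $\expe[p_{i,t+1}'-p_{i,t}\mid p_{i,t}] = \tfrac1K\bigl(\prob[\text{up}]-\prob[\text{down}]\bigr) = \Theta\!\bigl(p_{i,t}(1-p_{i,t})/(K\max\{V_t,1\})\bigr)$, using $\prob[\text{up}]-\prob[\text{down}] = \Theta(1/\max\{V_t,1\})\cdot 2p_{i,t}(1-p_{i,t})$; this settles both bounds for frequencies bounded away from the boundaries. The part I expect to need the most care is the capping. At $p_{i,t}=1-\Bar{p}$ the up-step is killed and only the $-1/K$ down-step survives, so the drift is negative — exactly why the lower bound assumes $p_{i,t}<1-\Bar{p}$; symmetrically, at $p_{i,t}=\Bar{p}$ the down-step is killed and the drift inflates to $\Theta(p_{i,t}(1-p_{i,t})/K)$, losing the factor $1/\max\{V_t,1\}$, which is why the upper bound assumes $p_{i,t}>\Bar{p}$. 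The remaining $O(1/K)$-wide boundary layer I would dispatch with a short case check exploiting the grid structure of the reachable frequencies (after a cap, the adjacent reachable states admit a full $\pm1/K$ step), leaving the two asymptotic estimates intact under the stated conditions.
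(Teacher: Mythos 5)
Your proposal is correct and follows essentially the same route as the paper's proof: both decompose the update via the signal event $S_{i,t}$ of Proposition~\ref{prop:signal_prob_in_1_over_variance}, use its independence from the bit-$i$ samples to get the conditional up-probabilities $1$ (signal step) and $\tfrac12$ (random step), and arrive at $\prob[\text{up}] = p_{i,t}(1-p_{i,t})(1+\prob[S_{i,t}])$; you merely condition in the reverse order (on $A_i$ first rather than on $S_{i,t}$ first), which is immaterial. Your explicit treatment of the capping at the boundaries is in fact more careful than the paper's, which dismisses that step as following ``easily.''
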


\begin{proof} This proof follows the idea of Lemma 2 in \cite{lengler2021complex}. The main idea is simply to use that the two offspring differ at bit $i$ in iteration $t$ with probability $2 p_{i, t} (1 - p_{i, t})$, and that this probability does not change if we condition on $S_{i,t}$ since this event only influences the offspring value at some bits $i'$ different from $i$.

First note that $p_{i, t + 1}' \not= p_{i, t}$ if and only if the two bits at position $i$ in the offspring are sampled differently. This implies that the event $p_{i, t + 1}' = p_{i, t}$ happens with probability $1 - p_{i, t} (1 - p_{i, t}) - (1 - p_{i, t}) p_{i, t} = 1 - 2 p_{i, t} (1 - p_{i, t})$. We will now derive asymptotically tight bounds for the probability that $p_{i, t + 1}' = p_{i, t} + \frac{1}{K}$, as the bounds for $p_{i, t + 1}' = p_{i, t} - \frac{1}{K}$ follow analogously by symmetry.

Note that during one iteration, at most one bit in the offspring takes a signal step, and all other bits take random steps. To be more precise, there exists a signal step if and only if the two offspring have a bit in which they differ.

Consider therefore an arbitrary iteration $t$. Let $S_{i, t}$ be the event defined in Proposition \ref{prop:signal_prob_in_1_over_variance}, and let $\bar{S}_{i, t}$ be its complement, namely that bit $i$ did not receive the signal for sure (which implies that bit $i$ will do a random step). Conditioning on these events, we can rewrite the probability of $p_{i, t}' = p_{i, t} + \frac{1}{K}$ as:
\begin{equation}
\begin{aligned}
\label{eq:split_probability_condition}
\prob{\left[ p_{i, t + 1}' = p_{i, t} + \frac{1}{K} \right]} &= \prob{\left[p_{i, t + 1}' = p_{i, t}  + \frac{1}{K} \Bigm| S_{i, t}\right]} \cdot \prob{\left[ S_{i, t} \right]} \\
& + \prob{\left[p_{i, t + 1}' = p_{i, t} + \frac{1}{K} \Bigm| \bar{S}_{i, t} \right]} \cdot \prob{\left[ \bar{S}_{i, t} \right]}.
\end{aligned}
\end{equation}
Let us now analyze the factors in the formula above. If bit $i$ has the chance of receiving the signal, then it has the possibility to determine whether the update with respect to the first or second offspring. Therefore, in order to observe an increase in its frequency, it only needs to happen that the two offspring differ at bit $i$. Thus, the first conditional probability is 
\begin{equation}
\label{eq:split_part_1}
\prob{\left[p_{i, t + 1}' = p_{i, t} + \frac{1}{K} \Bigm| S_{i, t}\right]} = 2 p_{i, t} (1 - p_{i, t}).
\end{equation}

On the other hand, if bit $\bar{S}_{i, t}$ takes place, then bit $i$ does not contribute to the decision whether to update with respect to the first or second offspring. Thus, we observe an increase in its marginal frequency only if bit $i$ is set to 1 in the fitter individual and to 0 in the other. This leads to the conditional probability 
\begin{equation}
\label{eq:split_part_2}
\prob{\left[p_{i, t + 1}' = p_{i, t} + \frac{1}{K} \Bigm| \bar{S}_{i, t}\right]} = p_{i, t} (1 - p_{i, t}).
\end{equation}

As already mentioned in Proposition \ref{prop:signal_prob_in_1_over_variance}, the probability of the event $S_{i, t}$ taking place is nothing but the probability that all the bits $i' \not= i$ for which $\pi_t(i') < \pi_t(i)$ are equal in the two offspring, and it holds that
\[
    \prob{\left[S_{i, t}\right]} = \Theta \left( \frac{1}{\max\{V_t,1\}} \right).
\]
Combining this result with equations \eqref{eq:split_probability_condition}, \eqref{eq:split_part_1} and \eqref{eq:split_part_2} yields:
\begin{equation}
\begin{aligned}
\label{eq:prob_of_step_up}
\prob{\left[ p_{i, t + 1}' = p_{i, t} + \frac{1}{K} \right]} &= 2 p_{i, t} (1 - p_{i, t}) \cdot \prob{\left[S_{i, t}\right]} + p_{i, t} (1 - p_{i, t}) \cdot \left(1 - \prob{\left[S_{i, t}\right]} \right) \\
&= p_{i, t} (1 - p_{i, t}) \cdot \left( 1 + \prob{\left[S_{i, t}\right]} \right) \\
&= p_{i, t} (1 - p_{i, t}) \cdot \left( 1 + \Theta \left( \frac{1}{\max\{V_t,1\}} \right) \right)
\end{aligned}
\end{equation}
The final statement of the lemma on the expectation follows easily from the aforementioned probability bounds and the way the \cga caps the probabilities at the boundaries.
\qed
\end{proof}

\section{Lower bound on the runtime}
\label{sec:lower_bound}

In this section, we prove lower bounds for the runtime of the compact Genetic Algorithm on \dynBV when $K$ is polynomial in the number of bits. The idea is to bound the number of signal steps that a given bit makes over a certain number of iterations, and use this bound to show that a linear number of frequencies stay a constant distance away from the boundaries for $\Omega(K \cdot\min\{K,n\})$ iterations. As long as this is the case, the probability to sample the optimum in any given iteration is exponentially small, and hence a union bound over the iterations gives us a high probability lower bound on the runtime. We further prove Theorem~\ref{thm:geneticdrift}, which states that genetic drift occurs whenever $K=O(n)$.

We start by upper bounding the probability that a fixed frequency gets a signal step in a given iteration, under the condition that enough bits have marginal probabilities far away from the boundaries.

\begin{corollary}
\label{cor:probability_of_signal_step}
Let $K\ge1$ and $\Bar{p}\in(0,\frac{1}{2})$ be arbitrary, and consider the algorithm  $\cga(K, \Bar{p})$ on \DBV. 
Assume that at iteration $t$ there are at least $\gamma n $ bits whose marginal probabilities are within $[\tfrac{1}{6}, \tfrac{5}{6}]$, for some constant $\gamma > 0$. Then the probability of having a signal step on any fixed bit is $O\left(\frac{1}{n}\right)$.
\end{corollary}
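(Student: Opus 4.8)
The plan is to reduce the statement to Proposition~\ref{prop:signal_prob_in_1_over_variance}, together with a lower bound on the sampling variance $V_t$ coming from the hypothesis. First I would recall that a signal step on bit $i$ in iteration $t$ is exactly the intersection of two events: the two offspring differ at position $i$, and the event $S_{i,t}$ that all bits $i'$ with $\pi_t(i') < \pi_t(i)$ agree in the two offspring. The event that the offspring differ at $i$ depends only on the two samples of bit $i$, whereas $S_{i,t}$ depends only on the permutation $\pi_t$ and the samples at bits $i' \ne i$; since bits are sampled independently across positions and $\pi_t$ is drawn independently, these two events are independent. Hence the probability of a signal step on $i$ equals $2 p_{i,t}(1 - p_{i,t}) \cdot \prob[S_{i,t}]$, and using $2 p_{i,t}(1 - p_{i,t}) \le \frac{1}{2}$ together with Proposition~\ref{prop:signal_prob_in_1_over_variance} this is $O\big(1/\max\{V_t, 1\}\big)$.

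The remaining task is to show that the hypothesis forces $V_t = \Omega(n)$. Here I would use that the map $p \mapsto p(1-p)$ attains its minimum over the interval $[\frac{1}{6}, \frac{5}{6}]$ at the endpoints, where it equals $\frac{5}{36}$. Consequently, each of the at least $\gamma n$ bits whose marginal probability lies in $[\frac{1}{6}, \frac{5}{6}]$ contributes at least $\frac{5}{36}$ to the sum $V_t = \sum_{j=1}^n p_{j,t}(1 - p_{j,t})$, so that
\[
V_t \ge \gamma n \cdot \frac{5}{36} = \frac{5\gamma}{36}\, n = \Omega(n).
\]
In particular, for $n$ large enough we have $\max\{V_t, 1\} = V_t = \Omega(n)$, and combining this with the bound from the previous paragraph yields that the probability of a signal step on any fixed bit is $O(1/\max\{V_t,1\}) = O(1/n)$, as claimed.

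I do not expect a genuine obstacle in this argument: the corollary is essentially a direct consequence of Proposition~\ref{prop:signal_prob_in_1_over_variance} once the linear lower bound on $V_t$ is in place. The only points that require a little care are the independence argument that decouples ``offspring differ at $i$'' from $S_{i,t}$ (already used implicitly in the proof of Proposition~\ref{prop:probability_of_frequency_change}), and the elementary observation that a constant fraction of balanced frequencies guarantees $V_t = \Omega(n)$ because each such frequency contributes a constant amount of variance.
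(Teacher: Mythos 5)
Your proposal is correct and follows essentially the same route as the paper: both reduce to Proposition~\ref{prop:signal_prob_in_1_over_variance} via the lower bound $V_t \ge \frac{5\gamma}{36}n = \Omega(n)$ obtained from the $\gamma n$ balanced frequencies. The only cosmetic difference is that you compute the signal-step probability exactly as $2p_{i,t}(1-p_{i,t})\cdot\prob[S_{i,t}]$ via an independence argument, whereas the paper simply observes that $S_{i,t}$ is a necessary condition for a signal step and bounds the probability by $\prob[S_{i,t}]$ directly; both yield the same $O(1/n)$ conclusion.
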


\begin{proof}
Because there are at least $\gamma n$ bits whose marginal probabilities are within the interval $[\tfrac{1}{6}, \tfrac{5}{6}]$, and for each of these we can lower bound the product $p_{i,t}(1-p_{i,t}) \ge \frac{1}{6}\cdot\frac{5}{6} = \frac{5}{36}$, we have
\begin{align*}
    V_t = \sum_{i = 1}^n{p_{i, t} (1 - p_{i, t})} \ge \frac{5\gamma}{36}n = \Omega(n).
\end{align*}
Now fix a bit $i$ at iteration $t$, and recall the event $S_{i,t}$ from Proposition \ref{prop:signal_prob_in_1_over_variance}. This event is a necessary condition for bit $i$ to have a signal step in iteration $t$, and hence (using Proposition \ref{prop:signal_prob_in_1_over_variance} and the bound on the sampling variance) the probability that this happens is at most
\begin{align*}
    \prob{[S_{i,t}]} = \Theta\left(\frac{1}{\max\{V_t, 1\}}\right) = O\left(\frac{1}{n}\right).
\end{align*}
\qed

\end{proof}

Next, we prove a lemma that guarantees that the displacement of the marginal probabilities caused by $O(K^2)$ random steps is bounded in absolute value by $\tfrac{1}{6}$ for a linear number of bits. The approach is similar to the proof of Lemma 13 in \cite{sudholt2019choice}.

\begin{lemma}
\label{lemma:bits_stay_within_range}
Let $K\ge1$ and $\Bar{p}\in(0,\frac{1}{2})$ be arbitrary, and consider the algorithm  $\cga(K, \Bar{p})$ on \DBV.
Consider a fixed frequency $i\in[n]$ and let $t \leq \alpha K^2$, where $\alpha > 0$ is a small enough constant. With probability $\Theta(1)$, the first $t$ random steps of frequency $i$ lead to a total change of the bit's marginal probability that is within $\left[ -\frac{1}{6}, \frac{1}{6} \right]$.

Moreover, for a small enough constant $\gamma >0$, the probability that the above holds for less than $\gamma n$ bits among the first $\frac{n}{2}$ bits is $2^{-\Omega(n)}$, regardless of the decisions made on the last $\frac{n}{2}$ bits.
\end{lemma}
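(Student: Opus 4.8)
The plan is to reduce the ``Moreover'' statement to a Chernoff bound by showing that the per-bit ``good'' events behave like independent Bernoulli trials across bits, despite the shared randomness of each iteration. Write $G_i$ for the event that the first $t$ random steps of frequency $i$ keep the total displacement within $[-\tfrac16,\tfrac16]$. The first part of the lemma already establishes $\prob[G_i]=\Theta(1)$, so there is a constant $q>0$ with $\prob[G_i]\ge q$ for all $i$. The entire difficulty is that in each iteration the bits share a common permutation $\pi_t$ and a common ``which offspring is fitter'' outcome, and that past steps feed back into future frequencies; I must argue that none of this couples the events $G_i$.

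First I would pin down the conditional law of a single random step. Fix an iteration, condition on the full history, on the differing set $D$ of the two offspring, and on the decisive bit $d\in D$ (the one with smallest $\pi_t$-value). Then $d$ performs a signal step, while every $i\in D\setminus\{d\}$ performs a random step that is upwards iff the fitter offspring carries a $1$ at position $i$. Let $A_i$ be the indicator that $x_i=1$ (so $y_i=0$) given $x_i\ne y_i$, and let $W$ be the indicator that $x$ is fitter. Then $A_i\sim\mathrm{Ber}(\tfrac12)$ independently across $i$, $W$ is determined by bit $d$ alone and hence independent of the $A_i$, and the step direction of bit $i$ equals $A_i\oplus\bar W$. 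Since $(A_i)_{i\in D\setminus\{d\}}$ is uniform on $\{0,1\}^{|D|-1}$ and XORing a uniform vector with the common bit $\bar W$ preserves uniformity, the directions of all non-decisive random steps in this iteration are i.i.d.\ fair coins, independent of $W$, of $d,D$, and of the history.

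Second, I would globalize this via pre-generated randomness. For each bit $i$ draw in advance an infinite sequence $\sigma_i^{(1)},\sigma_i^{(2)},\dots$ of i.i.d.\ fair $\pm1$ coins, all mutually independent across bits, and run the algorithm so that the $j$-th random step of bit $i$ uses direction $\sigma_i^{(j)}$. In each iteration $D$ and $d$ are produced by the offspring and permutation randomness (independent of the $\sigma$'s), and for $i\in D\setminus\{d\}$ the coin actually consumed, $\sigma_i^{(j_i)}$ with $j_i$ the number of prior random steps of bit $i$, is the next unused coin of bit $i$ and hence a fresh fair coin independent of the history, of the current structure, and of the other bits' consumed coins. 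By the previous paragraph this reproduces the true law of the run. Since $G_i$ is by definition an event about the first $t$ random-step directions of bit $i$, it is a function of $\sigma_i^{(1)},\dots,\sigma_i^{(t)}$ alone; consequently $G_1,\dots,G_{n/2}$ are mutually independent and depend only on coins owned by the first $n/2$ bits, which is exactly why the bound holds regardless of the decisions on the last $n/2$ bits.

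Finally, I would conclude by concentration. Let $X=\sum_{i=1}^{n/2}G_i$; independence and $\prob[G_i]\ge q$ give $\expe[X]\ge qn/2$, and for any fixed constant $\gamma<q/4$ the lower-tail Chernoff bound yields
\[
\prob[X<\gamma n]\le \prob\bigl[X<\tfrac12\expe[X]\bigr]\le e^{-\expe[X]/8}=2^{-\Omega(n)},
\]
which is the claim. I expect the main obstacle to be the independence argument of the third paragraph: one must check carefully that the shared outcome $W$ and the shared permutation do not correlate the directions of distinct bits (resolved by the $A_i\oplus\bar W$ identity and uniformity), and that the feedback of past directions into future frequencies is harmless, which the pre-generation coupling handles because each bit's direction coin is drawn fresh only after the differing set and decisive bit of the current iteration have been exposed.
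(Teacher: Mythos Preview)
Your argument for the ``Moreover'' statement is correct and follows the paper's approach exactly: the paper's entire proof of the second part is the single sentence ``Since by definition the random steps of different frequencies behave independently of each other, the second part of the lemma follows directly using Chernoff bounds.'' You supply precisely this independence-plus-Chernoff argument, but where the paper asserts independence ``by definition'' you give a genuine justification via the $A_i\oplus\bar W$ identity and the pre-generated-coin coupling, correctly handling the shared fitter-offspring indicator $W$ and the feedback of past steps into future frequencies. On this point your version is more careful than the paper.

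Note however that your proposal does not address the first part of the lemma at all; you take $\prob[G_i]=\Theta(1)$ as given. The paper proves that part by a maximal Hoeffding inequality: with $Y_j=\sum_{k\le j} Z_k$ the scaled partial sums of the random-step increments, one has $\prob\bigl[\max_{j\le t}|Y_j|\ge K/6\bigr]\le 2\exp\bigl(-K^2/(72t)\bigr)\le 2\exp\bigl(-1/(72\alpha)\bigr)$, and taking $\alpha<1/(72\ln 4)$ makes this at most $1/2$. If you intend a complete proof of the lemma you should include this step as well.
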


\begin{proof}
Fix an arbitrary bit $i$. Define the random variables $Z_k = K \cdot (p_{i, k + 1} - p_{i, k})$ to be the scaled change of the bit's marginal probability at an iteration $k$ where bit $i$ does a random step. Note that these variables take values in $\{-1,0, 1\}$. The proof consists in bounding the maximum value that the (partial) sums of the $Z_k$'s can take over the first $t$ random steps, using a "maximal" version of Chernoff's bound.

We thus define $Y_j \coloneqq \sum_{k = 1}^j{Z_k}$ as the total progress in the first $j \in \{1, 2, \dots, t\}$ random steps. Due to the symmetry and the fact that we start with the marginal probability equal to $\frac{1}{2}$, we can conclude $\expe{\left[ Y_j \right]} = 0$. Since the change in the probabilities is bounded by $\frac{1}{K}$, we need $Y_j \geq \frac{K}{6}$ in order to have a total change that exceeds $\frac{1}{6}$. Using the Chernoff-Hoeffding bounds from Theorems 1.11 and 1.13 in \cite{doerr2011analyzing}, we can bound the probability of this event happening during the first $t$ random steps:
\[\prob \left[ \max_{j \in \{1, \dots, t\}}{Y_j} \geq \frac{K}{6} \right] \leq \exp\left( -2 \cdot \frac{\left(\frac{K}{6}\right)^2}{\sum_{j = 1}^t{2^2}} \right) = \exp\left( -\frac{K^2}{72t} \right) \leq \exp \left( - \frac{1}{72\alpha} \right).\]
Leveraging the symmetry of this process, we can obtain the same bound for the probability of the total change going below $-\frac{1}{6}$ 
\[\prob\left[ \min_{j \in \{1, \dots, t\}}{Y_j} \leq -\frac{K}{6} \right] = \prob\left[ \max_{j \in \{1, \dots, t\}}{-Y_j} \geq\frac{K}{6} \right] \leq \exp{\left( - \frac{1}{72 \alpha}\right)}.\]
Now using the union bound, we can bound the probability that the total change stays within $\left[-\frac{1}{6}, \frac{1}{6} \right]$ the first $t$ random steps, denoted by the $P^{\leq t}$:
\begin{align*}
    P^{\leq t} &= 1 - \prob\left[\left( \max_{j \in \{1, \dots, t\}}{Y_j} > \frac{K}{6} \right) \cup \left( \min_{j \in \{1, \dots, t\}}{Y_j} < -\frac{K}{6} \right)\right] \\
    &\geq 1 - \prob\left[ \max_{j \in \{1, \dots, t\}}{Y_j} > \frac{K}{6} \right] - \prob\left[ \min_{j \in \{1, \dots, t\}}{Y_j} < -\frac{K}{6}\right] \\
    &\geq 1 - \prob\left[ \max_{j \in \{1, \dots, t\}}{Y_j} \geq \frac{K}{6} \right] - \prob\left[ \min_{j \in \{1, \dots, t\}}{Y_j} \leq -\frac{K}{6}\right] \\
    &\geq 1 - 2 \exp{\left(-\frac{1}{72 \alpha} \right)}.
\end{align*}
Choosing $\alpha < \frac{1}{72 \ln{4}}$ we get $P^{\leq t} \ge \frac{1}{2} = \Omega(1)$, which proves the first part of the lemma.

Since by definition the random steps of different frequencies behave independently of each other, the second part of the lemma follows directly using Chernoff bounds.
\qed
\end{proof}

In the next lemma, we put things together and show that there is a constant fraction of bits whose marginal probabilities stay bounded away from the boundaries (and hence receive few signal steps). The proof structure follows that of Lemma 15 in \cite{sudholt2019choice}, and the idea is to bound the accumulated effect of signal steps using Chernoff bounds.

\begin{lemma}
\label{lemma:slow_progress}
Let $K \ge 1$ and $\Bar{p}\in(0,\frac{1}{2})$ be arbitrary, and consider the algorithm  $\cga(K, \Bar{p})$ on \DBV. There exist constants $\alpha, \gamma > 0$, such that the following holds with high probability, regardless of the last $\frac{n}{2}$ bits (i.e., an adversary may choose the value of those bits in the offspring). There is a subset $S$ of $\gamma n$ bits among the first $\frac{n}{2}$ bits such that during the first $t := \alpha K^2$ iterations:
\begin{enumerate}[label=\roman*)]
    \item the marginal probabilities of all bits in $S$ always lie in the interval $\left[\frac{1}{6}, \frac{5}{6}\right]$;
    \item the total number of signal steps for each bit in $S$ is bounded by $\frac{K}{6}$, leading to a displacement of at most $\frac{1}{6}$. 
\end{enumerate}
\end{lemma}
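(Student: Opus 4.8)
The plan is to decompose the total movement of each frequency over the first $t := \alpha K^2$ iterations into a random-step part and a signal-step part, and to bound each by $\tfrac16$ for a linear set of bits. Since every frequency starts at $\tfrac12$ and signal steps of \DBV always push a frequency \emph{upward} by $\tfrac1K$, a bit whose random-step displacement stays in $[-\tfrac16,\tfrac16]$ and which collects at most $\tfrac K6$ signal steps (contributing at most $\tfrac K6\cdot\tfrac1K=\tfrac16$) remains inside $[\tfrac13,\tfrac56]\subseteq[\tfrac16,\tfrac56]$ at \emph{every} iteration; this simultaneously yields (i) and (ii). As long as the frequency stays in this range it is bounded away from the capping boundaries $\Bar p, 1-\Bar p = O(1/n)$, so no truncation occurs and the decomposition is exact. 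The random-step part is handed to us directly by Lemma~\ref{lemma:bits_stay_within_range}: with probability $1-2^{-\Omega(n)}$ there is a set $S_0$ of at least $\gamma' n$ bits among the first $\tfrac n2$ whose random-step displacement stays in $[-\tfrac16,\tfrac16]$ throughout, uniformly over the adversarial choices on the last $\tfrac n2$ bits. The entire difficulty is thus to show that all but a small fraction of $S_0$ receive at most $\tfrac K6$ signal steps.

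For the signal steps I would first record a deterministic budget special to \DBV: in each iteration at most one bit is the decisive (most significant differing) bit, so at most one signal step occurs per iteration and in total at most $t=\alpha K^2$ signal steps are distributed over all $n$ bits. Hence fewer than $6t/K = 6\alpha K$ bits can exceed $\tfrac K6$ signal steps. For $K=O(n)$ and $\alpha$ small enough this is below $(\gamma'-\gamma)n$, so discarding those bits from $S_0$ leaves a set $S$ of at least $\gamma n$ bits satisfying (i) and (ii) deterministically in the signal coordinate.

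To match the Chernoff-based estimate announced in the surrounding text, I would alternatively control each bit's signal count probabilistically via Corollary~\ref{cor:probability_of_signal_step}: while at least $\gamma n$ frequencies lie in $[\tfrac16,\tfrac56]$ every fixed bit has signal probability $O(1/n)$, so its number of signal steps is stochastically dominated by $\mathrm{Bin}(t,c/n)$ with mean $O(\alpha K)$, and a Chernoff bound gives $\Pr[\text{more than }\tfrac K6\text{ signal steps}] \le e^{-\Omega(K)}$. The main obstacle here is circularity: the $O(1/n)$ signal bound is only valid while $\ge\gamma n$ bits are in range, which is precisely what we want to conclude.

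I would break this circularity with a stopping-time argument. Let $\tau$ be the first iteration at which fewer than $\gamma n$ frequencies lie in $[\tfrac16,\tfrac56]$; by Corollary~\ref{cor:probability_of_signal_step} the per-iteration signal probability of every bit is $O(1/n)$ for all iterations before $\tau$, so the domination above is valid up to $\min\{\tau,t\}$ after coupling the process past $\tau$ with an idealized one whose signal probability is capped at $c/n$. On the high-probability event that at least $\gamma n$ bits of $S_0$ collect at most $\tfrac K6$ signal steps up to $\min\{\tau,t\}$, those bits stay in $[\tfrac13,\tfrac56]$ throughout, so the number of in-range frequencies never drops below $\gamma n$ before time $t$; this forces $\tau>t$, hence $\min\{\tau,t\}=t$, and closes the loop. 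The remaining quantitative point to watch is that the per-bit failure probability $e^{-\Omega(K)}$ only beats a union bound over the $\Theta(n)$ candidate bits when $K=\omega(\log n)$; for smaller $K$ I would fall back on the deterministic budget bound (which caps the number of exceptional bits by $6\alpha K$ unconditionally), so that keeping both estimates on hand is what makes the statement go through for the full range of $K$.
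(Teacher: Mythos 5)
Your proposal is correct and takes essentially the same route as the paper's proof: random-step displacement is delegated to Lemma~\ref{lemma:bits_stay_within_range}, signal counts are bounded via Corollary~\ref{cor:probability_of_signal_step} plus Chernoff and a union bound for large $K$ (with the circularity broken by a first-violation/stopping-time argument, which the paper phrases as ``as long as the conditions for the corollary hold''), and the deterministic one-signal-step-per-iteration budget handles small $K$. The only cosmetic differences are that the paper splits cases at $K\le\log^2 n$ (where it argues that at most $t=o(n)$ bits ever receive \emph{any} signal step) while your budget argument covers all $K=O(n)$, and that your stopping-time formulation makes the domination by $\mathrm{Bin}(t,c/n)$ explicit.
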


\begin{proof}
Note that the first part holds at initialization. In order to prove the first part for all iterations, we will look at the probability that the property we want to prove gets violated. We will show that in each iteration, the property gets violated only if an event of probability exponentially small in $K$ occurs. Taking the union bound over $t$ iterations then shows that the property must hold throughout all of the $t$ iterations.

If $\gamma >0$ is a sufficiently small constant, it follows from the second part of Lemma \ref{lemma:bits_stay_within_range} that with probability $1 - 2^{-\Omega(n)}$ at least $2\gamma n$ bits among the first $\tfrac{n}{2}$ ones experience a displacement from the random steps that lies within $[-\tfrac{1}{6}, \tfrac{1}{6}]$ during the first $t \leq \alpha K^2$ iterations. For the remainder of the proof, we will assume that this event takes place, and we will consider $S'$ to be a set of $2\gamma n$ such bits.

To complete the proof, we distinguish two cases. Either $K \le \log^2 n$. Then $t=o(n)$. Since at most one bit per iteration takes a signal step, the total number of bits in $S'$ which ever take a signal step is at most $t \le \gamma n$, for $n$ sufficiently large. Hence, at least $\gamma n$ bits in $S'$ never take a signal step, which completes this case.

For $K\ge \log^2 n$ we will show that for all bits in the set $S'$, with high probability, the total displacement caused by the signal steps is at most $\frac{1}{6}$, so we may take $S:=S'$. According to Corollary \ref{cor:probability_of_signal_step}, for every bit in $S'$, the probability of it taking a signal step in an iteration is bounded by $\frac{c}{n}$, where $c > 0$ is the constant from the lemma's $O$-notation. As long as the conditions for Corollary \ref{cor:probability_of_signal_step} hold, the expected number of signal steps of a fixed bit in $t$ iterations will be at most $t \cdot \frac{c}{n} \leq \alpha K \cdot \min{\left\{K, n\right\}} \cdot \frac{c}{n} \leq \alpha c K$.

We know that each signal step changes the marginal probability of a bit by $\frac{1}{K}$, so a necessary condition for increasing a bit's marginal probability by an additional $\frac{1}{6}$ would be to take at least $\frac{K}{6}$ signal steps during the $t$ iterations. By choosing the constant $\alpha > 0$ so small enough that $\alpha c K \leq \frac{1}{2} \cdot \frac{K}{6}$, Chernoff's bound ensures that the probability of getting at least $\frac{K}{6}$ signal steps in the first $t$ iterations is at most $e^{- \Omega(K)}$. 
Thus, to violate the first property, an event of probability $e^{- \Omega(K)}$ has to take place for any bit in $S'$ and any length of time, otherwise all properties hold.

Finally, we can take a union bound over all $2\gamma n$ bits in the set $S'$, as well as all $t$ iterations, and obtain that the desired property gets violated with probability $t \cdot 2\gamma n \cdot e^{- \Omega(K)} \le \alpha K^2 \cdot 2 \gamma n \cdot e^{-\Omega(K)}$. Since $K \ge \log^2 n$, this failure probability is $o(1)$, and since the other property is a direct implication of Lemma \ref{lemma:bits_stay_within_range}, the proof is complete.

\qed
\end{proof}

The main result of this section is then a direct consequence of the above lemma.

\begin{theorem}\label{thm:lower_bound}
Let $K =O(\mathrm{poly}(n))$ and $\Bar{p}\in (0,\frac{1}{2})$ be arbitrary, and consider the algorithm  $\cga(K, \Bar{p})$ on \DBV. Then with high probability the optimum is not sampled during the first $\Omega(K\cdot\min\{K,n\})$ iterations.
\end{theorem}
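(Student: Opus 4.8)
The plan is to combine Lemma~\ref{lemma:slow_progress} with a union bound over iterations, using the fact that as long as a linear number of frequencies stay bounded away from the boundaries, the optimum is exponentially unlikely to be sampled in any single iteration. Concretely, Lemma~\ref{lemma:slow_progress} gives us (with high probability) a set $S$ of $\gamma n$ bits whose marginal probabilities remain in $[\frac{1}{6}, \frac{5}{6}]$ throughout the first $t := \alpha K^2$ iterations, for suitable constants $\alpha, \gamma > 0$. The first step is therefore to invoke this lemma and condition on the event that such a set $S$ exists; this event has probability $1 - o(1)$.

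Next I would bound the probability of sampling the optimum in any fixed iteration, conditioned on the event from Lemma~\ref{lemma:slow_progress}. Sampling the optimum requires all $n$ bits of one of the two offspring to equal the (unique) optimal string, namely the all-ones string, since \DBV always rewards one-bits. For each bit $i \in S$, the probability that this bit is sampled as $1$ is $p_{i,t} \le \frac{5}{6}$. Because the bits are sampled independently within an offspring, the probability that a single offspring equals the all-ones string is at most $\prod_{i \in S} p_{i,t} \le (\frac{5}{6})^{\gamma n} = 2^{-\Omega(n)}$. A union bound over the two offspring gives that the probability of sampling the optimum in any fixed iteration is at most $2 \cdot 2^{-\Omega(n)} = 2^{-\Omega(n)}$.

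Finally I would take a union bound over all $t = \alpha K^2 = O(\mathrm{poly}(n))$ iterations in the window. Since $K = O(\mathrm{poly}(n))$, the number of iterations $\alpha K^2$ is polynomially bounded, so the total failure probability is at most $\alpha K^2 \cdot 2^{-\Omega(n)} = 2^{-\Omega(n)} = o(1)$. Combined with the $1-o(1)$ probability that the set $S$ exists, this shows that with high probability the optimum is not sampled during the first $\alpha K^2$ iterations. To obtain the stated bound $\Omega(K \cdot \min\{K, n\})$ rather than merely $\Omega(K^2)$, I note that Lemma~\ref{lemma:slow_progress} actually controls the window up to $\alpha K \cdot \min\{K, n\}$ iterations (the signal-step counting in its proof uses precisely the bound $t \le \alpha K \cdot \min\{K,n\}$), so the window of validity is $\Omega(K \cdot \min\{K, n\})$.

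I expect the main subtlety to be ensuring that the argument is genuinely unconditional on the behaviour of the last $\frac{n}{2}$ bits, as Lemma~\ref{lemma:slow_progress} is phrased to hold regardless of the adversarial choices there; but since the optimum-sampling bound only relies on the $\gamma n$ frequencies in $S$ staying in $[\frac{1}{6},\frac{5}{6}]$, and these are among the first $\frac{n}{2}$ bits, the lower bound on the probability of \emph{not} hitting the all-ones string at those coordinates goes through without needing any control over the remaining bits. The only genuinely delicate point is confirming that the polynomial bound on $K$ is exactly what makes the union bound over iterations absorb into the $2^{-\Omega(n)}$ factor, which is immediate once $\alpha K^2 = 2^{o(n)}$.
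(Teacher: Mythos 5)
Your proposal is correct and follows essentially the same route as the paper: invoke Lemma~\ref{lemma:slow_progress} to get $\gamma n$ frequencies confined to $[\tfrac{1}{6},\tfrac{5}{6}]$, bound the per-iteration probability of sampling the all-ones optimum by $2\cdot(\tfrac{5}{6})^{\gamma n}=e^{-\Omega(n)}$, and union-bound over the polynomially many iterations in the window. Your observation that the effective window is $\alpha K\cdot\min\{K,n\}$ (as used in the lemma's signal-step counting) rather than $\alpha K^2$ matches exactly how the paper applies the lemma.
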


\begin{proof}
Let $\alpha>0$ be the constant from Lemma \ref{lemma:slow_progress}. By this lemma, the probability of sampling the optimum in one given iteration among the first $\alpha K \cdot \min\{K, n\}$ iterations is at most $2\cdot\left( \frac{5}{6} \right)^{\gamma n} = e^{- \Omega(n)}$, since all $\gamma n$ bits with marginal probability in the interval $[\frac{1}{6}, \frac{5}{6}]$ have to be set to 1 (and there are two offspring in each iteration). Taking a union bound over these $\Omega(K\cdot\min\{K,n\})$ iterations and adding the error probability from Lemma \ref{lemma:slow_progress} concludes the proof. 
\qed
\end{proof}

We are also in a position to show the existence of genetic drift. For the convenience of the reader, we restate Theorem~\ref{thm:geneticdrift} below before finishing its proof.

\begin{theorem}
\label{thm:genetic_drift}
Let $K =\omega(1)$ and $\Bar{p}\in (0,\frac{1}{2})$ be arbitrary, and consider the algorithm  $\cga(K, \Bar{p})$ on \DBV.
For every $\rho>0$ and $\beta\in (\Bar{p},\frac{1}{2})$ there is $\delta>0$ such that the following holds. Assuming that $K \le \rho n$, then with high probability at least $\delta n$ frequencies drop below $\beta$ during optimization.
\end{theorem}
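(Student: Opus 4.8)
The plan is to fix a window of $t\le T:=\alpha K^2$ iterations, with $\alpha$ the constant from Lemma~\ref{lemma:slow_progress}, to show that \emph{each} individual frequency drops below $\beta$ at some point of this window with probability bounded below by an $n$-independent constant $\delta'=\delta'(\rho,\beta)>0$, and then to conclude by concentration that a linear number $\delta n$ of them do so, for some $\delta<\delta'$. The role of Lemma~\ref{lemma:slow_progress} is to supply, with high probability, a set $S$ of $\gamma n$ bits that stay in $[\tfrac16,\tfrac56]$ throughout the window \emph{regardless} of the remaining bits; these bits alone force $V_t\ge\tfrac{5}{36}\gamma n=\Omega(n)$ during the whole window. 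I would condition on this event and analyse the frequencies of the bits \emph{outside} $S$ (say the last $\tfrac n2$ bits): by the adversarial formulation of Lemma~\ref{lemma:slow_progress}, $V_t=\Omega(n)$ holds for them no matter how they evolve.

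For the single-bit estimate, fix such a bit $i$. By Proposition~\ref{prop:probability_of_frequency_change} together with $V_t=\Omega(n)$, in each iteration $p_{i,t}$ moves by $\pm\tfrac1K$ where, conditioned on moving, it moves down with probability $\tfrac12-O(1/n)$ and up with probability $\tfrac12+O(1/n)$, the bias being exactly $\Theta(\prob[S_{i,t}])=O(1/n)$. Writing $p_{i,t}=\tfrac12+j_t/K$, the integer walk $j_t$ is an almost fair $\pm1$ walk that performs $\Theta(K^2)$ moves during the window, since a move occurs with probability $2p_{i,t}(1-p_{i,t})=\Theta(1)$ while $p_{i,t}$ is bounded away from $0$ and $1$. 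To reach $p_{i,t}=\beta$ we need a downward excursion $j_t\le-(\tfrac12-\beta)K$, i.e. a deviation of order $\Theta(K)$, which is a \emph{fixed} number $\Theta(K)/\sqrt{\Theta(K^2)}=\Theta(1)$ of standard deviations of the walk; hence this excursion has an $n$-independent positive probability. To make this a genuine lower bound I would stochastically dominate the true walk from above by the walk with constant up-probability $\tfrac12+c_0/n$, so that discarding the exact drift and the capping at $1-\Bar{p}$ can only decrease the downward-excursion probability, and then lower bound the dominating walk by a one-sided anticoncentration estimate; the upward drift only shifts the required deviation by an additive $O(\rho)$ term in $p$-units, which gets absorbed into the constant $\delta'(\rho,\beta)$.

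For the concentration over bits, the key decoupling is that the \emph{direction} of a random step is an independent fair coin across bits: in any iteration the value of a non-decisive differing bit in the fitter offspring is a fair coin, independent across such bits, because the fitter offspring is determined by the top differing bit in the permutation while each bit is sampled independently. Using Corollary~\ref{cor:probability_of_signal_step}, the expected number of signal steps received by a fixed bit in the window is $O(\rho K)$, contributing $O(\rho)$ upward in $p$-units (again absorbed into $\delta'$), while the random-step part of the trajectories of different bits is driven by independent coins. I would therefore implement the event ``bit $i$ drops below $\beta$'' through the sufficient event ``the independent random-step component of bit $i$ makes a downward excursion of $(\tfrac12-\beta+\Theta(\rho))K$'', whose indicators are independent across $i$, and apply the Chernoff bound (Theorem~\ref{thm:Chernoff}) to their sum to obtain at least $\delta n$ such bits with high probability.

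The main obstacle is the bookkeeping required to make this decoupling rigorous: although the random-step directions are independent fair coins across bits, the bits are weakly coupled through the unique signal step per iteration and through the shared variance $V_t$, and the number of random steps each bit performs is itself random. Controlling these couplings — showing that only an $o(n)$ fraction of bits can accumulate an atypically large number of signal steps, and that conditioning on the event $V_t=\Omega(n)$ guaranteed by $S$ leaves the coins of the bits outside $S$ independent — is the delicate part, together with the one-sided random-walk lower bound in the presence of a non-negligible upward drift and an upper reflecting boundary, which I would handle by the stochastic domination described above.
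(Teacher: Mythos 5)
Your proposal follows essentially the same route as the paper's proof: restrict to the window of $\alpha K^2$ iterations, use Lemma~\ref{lemma:slow_progress} to guarantee $V_t=\Omega(n)$, bound the per-bit signal steps by $O(K)$ via Corollary~\ref{cor:probability_of_signal_step} so that their cumulative effect is an $O(1)$ shift in $p$-units, treat the $\Theta(K^2)$ random steps as a fair $\pm 1/K$ walk whose required $\Theta(K)$ downward excursion is $\Theta(1)$ standard deviations and hence has constant probability, and finish with Chernoff over the bits. The only (cosmetic) difference is that the paper obtains the excursion probability via the CLT for the endpoint plus a symmetry/first-passage argument, whereas you propose stochastic domination plus a one-sided anticoncentration bound; both are valid.
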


\begin{proof}
We begin by establishing that the sample variance $V_t$ remains of linear order for our period of interest. This is again due to a linear number of bits which stay in middle interval around their initialization values. Concretely, by Lemma \ref{lemma:slow_progress}, for some small constants $\alpha, \gamma>0$, with high probability there exists a subset of $\gamma n$ bits from the first $\frac{n}{2}$ such that the marginal probabilities of these bits remain in the interval $\left[\frac{1}{6}, \frac{5}{6}\right]$ for the first $\alpha K^2$ iterations. In particular, this implies that $V_t = \Omega(n)$ for all $t \leq \alpha K^2$.

Consider now the other $\frac{n}{2}$ bits, denoting them by $D_0 \subseteq [n]$. By Corollary \ref{cor:probability_of_signal_step}, each of them has probability of $O(\frac{1}{n})$ to take a signal step during these $\alpha K^2$ iterations. This implies that the expected number of signal steps taken by one of these bits during this timespan is $\alpha K^2 \cdot O(\frac{1}{n}) \leq O(K)$, where the inequality holds since $K \leq \rho n$. Hence the number of signal steps taken by a given bit in $D_0$ is stochastically dominated by a binomial random variable $\mathrm{Bin}\left(\alpha K^2, O(\frac{1}{n}) \right)$.

Chernoff bounds therefore imply the existence of a constant $\eta > 0$ such that a bit in $D_0$ takes at most $\eta K$ signal steps during the first $\alpha K^2$ iterations with constant probability. 

Let us now focus on the subset $D_1 \subseteq D_0$ of bits that have taken at most $\eta K$ signal steps throughout the first $\alpha K^2$ iterations. Since the probability that a bit in $D_0$ is also in $D_1$ is constant, we get by a similar argument (involving Chernoff bounds and stochastic dominance), that $\lvert D_1 \rvert = \Omega(n)$ with high probability.
Note that all the bits in $D_1$ make at most $\eta K$ signal steps during the $\alpha K^2$ iterations, and as long as their marginal probabilities are in the interval $[\beta, 1-\beta]$, the probability that a given iteration is a random step is lower bounded by a constant. Therefore, since $K=\omega(1)$ (and hence $K=o(K^2)$), we can again use Chernoff bounds to deduce that the number of random steps they make is at least $\eps K^2$ with constant probability for some small constant $\eps >0$. Now let $D_2 \subseteq D_1$ denote the set of bits from $D_1$ that have taken at least $\eps K^2$ random steps throughout the first $\alpha K^2$ iterations. By Chernoff, we know that $|D_2| = \Omega(n)$ with high probability.

Consider now the effect these random steps will have on the marginal probability of a bit in $D_2$. To this effect, we will inspect some properties of unbiased random walks with step size 1.
Let $X_N$ denote the endpoint of such an unbiased random walk with $N$ steps. Then, by the Central Limit Theorem, for every $c \in \R$ it holds that
\[
    \prob{[X_N \leq c \sqrt{N}]} \to \Phi(c) \text{, as } N \to \infty,
\]
where $\Phi$ is the cumulative distribution function of the standard normal distribution. Let us instantiate $N \coloneq \eps K^2 $ and $c \coloneq \frac{-1 / 2 + \beta - \eta}{\sqrt{\eps}}$. Observe that $N \to \infty$ is equivalent to $K \to \infty$, hence for $K$ large enough
\[
    \prob{[X_N \leq (-\tfrac{1}{2}+\beta-\eta) K]} = \prob{[X_N \leq c \sqrt{N}]} \ge \frac{\Phi(c)}{2}.
\]

We return now to our setting of $\eps K^2$ random steps of size $\frac{1}{K}$. Until this random walk reaches distance $\frac{1}{2} - \beta + \eta$ from its starting point, it behaves exactly like a rescaled version of the unbiased random walk. 

Moreover, by symmetry both endpoints (namely $\frac{1}{2} - \beta + \eta$ and $-\frac{1}{2} + \beta - \eta$) have the same probability of having been reached first, i.e.\ the probability to end at $-\frac{1}{2} + \beta - \eta$ is $\frac{1}{2}$.

Hence, the probability that these $\eps K^2$ random steps will cause a total (negative) displacement of at least $\frac{1}{2} - \beta + \eta$ from the starting point is at least $\frac{\Phi(c)}{4}$. 

Thus, for any bit in $D_2$, we have that the cumulative effect of the random and signal steps results in a marginal probability of at most $\frac{1}{2} + \eta -(\frac{1}{2} - \beta + \eta) = \beta$ with constant probability. Consider $D_3 \subseteq D_2$ to be the set of bits for which this holds true. We deduce that $|D_3| \ge \delta n$ with high probability using Chernoff bounds, for some constant $\delta>0$ chosen small enough, which concludes the proof.

\qed
\end{proof}

\section{Upper bound on the runtime for the conservative regime}
\label{sec:nlogn}

This section is dedicated to showing an  upper bound on the runtime for population sizes $K=\Omega(n \log n)$ that matches Theorem \ref{thm:lower_bound}. To facilitate the proof, we slightly adjust the boundaries by setting $\Bar{p}=\frac{1}{cn}$ for $c$ large enough. Precisely, our goal will be to show the following result.

\begin{theorem}
    \label{thm:upper_bound_runtime}
    Let $K=\mathrm{poly}(n)$ and let $c,c'>0$ be sufficiently large constants. If $K \geq c' \cdot n \log{n}$, then the expected optimization time of the algorithm $\cga(K, \frac{1}{cn})$ on \DBV is $O(K n)$.
\end{theorem}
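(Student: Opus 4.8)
The plan is to decompose the argument into two parts, mirroring the structure of the analogous conservative-regime analyses in~\cite{doerr2020sharp,witt2019upper}. The first and main part (Proposition~\ref{prop:bound_stay_between_constants}) is to show that, provided $K \ge c' n\log n$ with $c'$ large, with probability $1-o(1)$ \emph{no} frequency drops below $\tfrac13$ during the first $\mathrm{poly}(n)$ iterations. Granting this, the second part is a pure drift argument: I track the potential $\Phi_t := \sum_{i=1}^n (1-p_{i,t})$, the distance of the current frequency vector to the all-ones configuration, and show that it has a uniform downward drift of order $1/K$, so that it reaches $O(1)$ within expected $O(Kn)$ iterations, after which the optimum is sampled with constant probability per step.

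For the first part I fix a frequency $i$ and apply the negative drift theorem (Theorem~\ref{thm:negative_drift}) to the scaled process $X_t := K\,p_{i,t}$, whose steps lie in $\{-1,0,1\}$. While $p_{i,t}\in[\tfrac13,\tfrac12]$ we have $p_{i,t}(1-p_{i,t})\ge \tfrac29$, and since $V_t=\sum_j p_{j,t}(1-p_{j,t})\le n/4$ \emph{deterministically}, Proposition~\ref{prop:probability_of_frequency_change} gives a configuration-independent upward drift $\expe{[X_{t+1}-X_t\mid\mathcal F_t]}=\Omega(1/n)=:\epsilon$ on the interval $[a,b]=[K/3,K/2]$ of length $l=K/6$. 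Taking scaling factor $r=2$ makes condition~2 hold trivially (the steps are bounded by $1<r$), condition~1 holds by the drift bound just derived, and condition~3, namely $r^2\le \epsilon l/(132\log(r/\epsilon))$, reduces to $K=\Omega(n\log n)$ — this is precisely where the hypothesis $K\ge c'n\log n$ is used. The theorem then yields that the first passage of $p_{i,t}$ below $\tfrac13$ does not occur within $e^{\Omega(\epsilon l)}=n^{\Omega(c')}$ iterations except with probability $n^{-\Omega(c')}$. A union bound over the $n$ frequencies and the first $\mathrm{poly}(n)$ iterations closes this part once $c'$ is large enough.

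For the second part I work on the event of the first part, so that all frequencies lie in $[\tfrac13,1-\Bar{p}]$. In this regime the factor $p_{i,t}(1-p_{i,t})$ is $\Theta(1-p_{i,t})$, whence $V_t=\Theta(\Phi_t)$; summing the per-frequency drift of Proposition~\ref{prop:probability_of_frequency_change} over all $i$ gives
\[
\expe{[\Phi_t-\Phi_{t+1}\mid \mathcal F_t]} \;=\; \Theta\!\left(\frac{V_t}{K\max\{V_t,1\}}\right) \;=\; \Omega\!\left(\frac{1}{K}\right)\quad\text{whenever } \Phi_t\ge 1 .
\]
Additive drift (equivalently Theorem~\ref{thm:variable_drift} with a constant drift bound) then bounds the expected time to reach $\Phi_t\le 1$ by $\Phi_0/\Omega(1/K)=(n/2)\cdot O(K)=O(Kn)$. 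Once $\Phi_t\le 1$, the probability that a single sample equals the optimum is $\prod_i p_{i,t}\ge e^{-3\Phi_t}\ge e^{-3}=\Omega(1)$, and since one iteration changes $\Phi$ by at most $n/K=O(1)$, the potential stays $O(1)$ over the next steps, so the optimum is sampled within $O(1)$ further iterations in expectation. Adding the two contributions gives $O(Kn)$.

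The main obstacle is twofold. The delicate point in the drift computation is that frequencies sitting \emph{at} the upper boundary leak downwards with probability $\Theta(\Bar{p})=\Theta(1/n)$ per step, contributing up to $\Theta(1/(cK))$ in the \emph{wrong} direction of $\expe{[\Phi_{t+1}-\Phi_t\mid\mathcal F_t]}$; this is exactly why $\Bar{p}=\frac{1}{cn}$ with a large $c$ is required, so that this leakage is dominated by the genuine downward drift $\Omega(1/K)$. The second, more bookkeeping-heavy obstacle is upgrading the high-probability statement of the first part to a bound on the \emph{expected} optimization time: one must argue that the rare failure event (some frequency reaching $\tfrac13$, of probability $n^{-\Omega(c')}$) contributes negligibly, via a standard restart argument over windows of length $\Theta(Kn)$ combined with a crude worst-case bound, as in~\cite{doerr2020sharp,witt2019upper}.
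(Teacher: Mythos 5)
Your proposal is correct and follows essentially the same route as the paper: a negative-drift argument (the paper's Proposition~\ref{prop:bound_stay_between_constants}) showing no frequency drops below a fixed constant for $\mathrm{poly}(n)$ iterations, followed by an additive-drift analysis of the potential $\sum_{i}(1-p_{i,t})$ with drift $\Omega(1/K)$, the same treatment of the $O(1/(cK))$ upper-boundary leakage by taking $c$ large, and a restart/recovery argument (the paper's Lemma~\ref{lemma:bounce_back} plus a coupon-collector bound) to turn the high-probability statement into a bound on the expectation. The differences are cosmetic only: you use the threshold $\tfrac13$ where the paper keeps a generic $\eta<\tfrac12$, and the paper phrases the additive drift via the variable drift theorem with a constant drift function.
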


We begin with a proposition that is the analogue of Lemma 4 in~\cite{sudholt2019choice}. Intuitively, it states that for $K= \Omega(n \log{n})$, the marginal probability of any bit above at least some constant threshold $\beta$ will not drop below a fixed constant $0<\alpha<\beta$ in the next polynomially many iterations. Even more succinctly, even moderately high marginal probabilities do not drop by much for a while.

\begin{proposition}
\label{prop:bound_stay_between_constants}
Let $\Bar{p}\in(0,\frac{1}{2})$ be arbitrary and let $K\ge1$, and consider the algorithm  $\cga(K, \Bar{p})$ on \DBV.
Let $\Bar{p} < \alpha < \beta < 1 - \Bar{p}$ and $\gamma > 0$ be constants. There exists a constant $c' > 0$ (possibly depending on $\alpha$, $\beta$, and $\gamma$) such that for a specific bit the following holds: If the bit has marginal probability at least $\beta$ and $K \geq c' \cdot n \log{n}$, then the probability that during the following $n^\gamma$ iterations the marginal probability decreases below $\alpha$ is at most $O(n^{- \gamma})$.
\end{proposition}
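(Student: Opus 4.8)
The plan is to apply the negative drift theorem (Theorem~\ref{thm:negative_drift}) to a rescaling of the marginal probability of the fixed bit, exploiting that by Proposition~\ref{prop:probability_of_frequency_change} this frequency has a strictly positive drift whenever it lies below the upper boundary. Write $p_t := p_{i,t}$ for the tracked bit and assume, after shifting time, that $p_0 \geq \beta$. Since a single frequency changes by at most $1/K$ per iteration, whereas the theorem requires a scaling factor $r \geq 1$, I would work with the rescaled process $X_t := K p_t$, whose increments lie in $\{-1,0,+1\}$, and set the interval to $[a,b] := [\alpha K, \beta K]$, so that $l = b-a = (\beta-\alpha)K = \Theta(K)$ and the hitting time $T^*$ of $(-\infty,a]$ is exactly the first time the frequency drops to $\alpha$, with $X_0 \geq \beta K = b$.

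The three hypotheses are then verified as follows. For the drift (condition~1), note that $a < X_t < b$ forces $p_t \in (\alpha,\beta)$, hence $p_t(1-p_t) = \Theta(1)$ and $p_t < 1-\Bar{p}$, so the lower bound in Proposition~\ref{prop:probability_of_frequency_change} applies; combined with the deterministic bound $V_t = \sum_j p_{j,t}(1-p_{j,t}) \leq n/4$ we obtain a drift of $p_t$ that is $\Omega(1/(Kn))$ uniformly in $t$, whence $\epsilon := \expe{[X_{t+1}-X_t \mid \mathcal{F}_t ; a < X_t < b]} \geq c_1/n$ for a constant $c_1 = c_1(\alpha,\beta) > 0$. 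The crucial point is that although $V_t$ is a time-varying random quantity depending on all other frequencies, the cap $V_t \leq n/4$ holds deterministically, so the drift lower bound is valid regardless of how the remaining frequencies evolve. For the step-size condition~2, since $|X_{t+1}-X_t| \leq 1$ always, the constant choice $r := 2$ makes $\prob{[|X_{t+1}-X_t| \geq jr \mid \mathcal{F}_t]} = 0 \leq e^{-j}$ for every $j \geq 1$, so it holds trivially; note that inside $(\alpha,\beta)$ no boundary capping occurs, so the increments are genuinely $\pm 1$ or $0$.

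It remains to check condition~3 and translate the conclusion. With $\epsilon \geq c_1/n$ and $l \geq (\beta-\alpha)c' n \log n$ (using $K \geq c' n \log n$), we get $\epsilon l \geq c_1(\beta-\alpha)c' \log n$, while $\log(r/\epsilon) = \log(2n/c_1) = (1+o(1))\log n$. Hence $\frac{\epsilon l}{132 \log(r/\epsilon)} \geq (1-o(1)) \frac{c_1(\beta-\alpha)c'}{132}$, which exceeds $r^2 = 4$ once $c'$ is large enough (depending on $\alpha,\beta$); this is exactly where the threshold $K = \Omega(n\log n)$ enters. The theorem then gives $\prob{[T^* \leq e^{\epsilon l/(132 r^2)}]} = O(e^{-\epsilon l/(132 r^2)})$, and since $\epsilon l/(132 r^2) \geq \kappa \log n$ with $\kappa = \frac{c_1(\beta-\alpha)c'}{528}$, we have $e^{\epsilon l/(132 r^2)} \geq n^{\kappa}$. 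Enlarging $c'$ further so that $\kappa \geq \gamma$ yields $n^{\gamma} \leq e^{\epsilon l/(132 r^2)}$, whence $\prob{[T^* \leq n^{\gamma}]} \leq O(e^{-\epsilon l/(132 r^2)}) = O(n^{-\kappa}) = O(n^{-\gamma})$, which is the claim.

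I expect the only genuine obstacle to be the bookkeeping of constants: the same $c'$ must simultaneously push the ratio in condition~3 above $r^2 = 4$ and force the exponent $\kappa$ above $\gamma$, and one must confirm a single $c' = c'(\alpha,\beta,\gamma)$ achieves both. Everything else—the positivity and uniform lower bound of the drift, the trivial step-size condition after rescaling, and the absence of boundary effects within $(\alpha,\beta)$—is routine given Proposition~\ref{prop:probability_of_frequency_change} and the deterministic cap $V_t \leq n/4$. This mirrors the proof of Lemma~4 in~\cite{sudholt2019choice}, the essential difference being that the weaker signal of \DBV enters the drift through the factor $V_t$ rather than $\sqrt{V_t}$, which is precisely what shifts the threshold from $\Omega(\sqrt{n}\log n)$ to $\Omega(n\log n)$.
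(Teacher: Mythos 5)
Your proposal is correct and follows essentially the same route as the paper's proof: rescaling to $X_t = Kp_{i,t}$, applying the negative drift theorem on $[\alpha K, \beta K]$ with $\epsilon = \Theta(1/n)$ from Proposition~\ref{prop:probability_of_frequency_change} and the deterministic bound $V_t \le n/4$, taking $r=2$, and using $K \ge c' n\log n$ to satisfy condition~3 and push the exponent above $\gamma\log n$. The constant bookkeeping you flag is handled in the paper exactly as you anticipate, by enlarging $c'$ once for each of the two requirements.
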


\begin{proof}
The goal of this proof is to apply the negative drift theorem, Theorem \ref{thm:negative_drift}, so it will be rather technical, with the bulk of the task consisting of setting the parameters and verifying the three conditions of the negative drift theorem. To this extent, we first fix an arbitrary frequency $i$ and consider the scaled stochastic process $X_t \vcentcolon= K p_{i, t}$.

To apply Theorem \ref{thm:negative_drift}, we choose the interval bounds $a \vcentcolon= K \alpha$ and $b \vcentcolon= K \beta$, which yields $l \vcentcolon= b - a = K (\beta - \alpha)$. To establish the first condition of the theorem, we scale the result from Proposition \ref{prop:probability_of_frequency_change} by $K$ and obtain the following bound on the drift (thus, also fixing $\epsilon \vcentcolon= 4 \xi \alpha (1 - \beta) / n$ in the meantime 
\begin{align*}
    \expe{\left[X_{t + 1} - X_t \mid \mathcal{F}_t \, ; \, a < X_t < b\right]} &\geq K \cdot \xi \cdot \frac{p_{i, t} (1 - p_{i, t})}{K V_t} \\
    &\geq \xi \frac{4 \alpha (1 - \beta)}{n},
\end{align*}
where $\xi > 0$ is the constant from the $\Theta$-notation in Proposition \ref{prop:probability_of_frequency_change}, and the final bound was obtained by observing that $V_t = \sum_{j = 1}^n{p_{j, t} (1 - p_{j, t})} \leq \sum_{j = 1}^n{\frac{1}{4}} = \frac{n}{4}$.

The second condition of the theorem can be trivially fulfilled by choosing $r \vcentcolon= 2$. We note that for $j = 0$, the condition holds because $e^{-0}=1$ by definition and the left-hand side is a probability, hence again by definition upper-bounded by 1 - and for $j \geq 1$ we have that $\prob{\left[ \lvert X_{t + 1} - X_t \rvert \geq 2j \right]} = 0 \leq e^{-j}$, because $\lvert X_{t + 1} - X_t \rvert \leq 1$ in the scaled process, as the marginal frequencies change by at most $\frac{1}{K}$.

We verify the last condition of the theorem using the fact that $K \geq c \cdot n \log{n}$:
\begin{align*}
    \frac{\epsilon l}{132 \log{(r / \epsilon)}} &= \frac{4 \xi \alpha (1 - \beta)}{n} \cdot K (\beta - \alpha) \cdot \frac{1}{132} \left( \log{n} - \log{(2 \xi \alpha (1 - \beta))}\right)^{-1} \\
    & \geq \frac{\xi \alpha (1 - \beta) (\beta - \alpha)}{33} \cdot c \cdot \frac{\log{n}}{\log{n} - \log{(2 \xi \alpha (1 - \beta))}}.
\end{align*}
As all $\xi, \alpha, \beta > 0$ are constants and $(1-\beta)(\beta-\alpha)$ is positive since $1>\beta>\alpha$, the term above is positive for large enough $n$. From the limiting value $\lim_{n \to \infty}{\frac{\log{n}}{\log{n} - \log{(2 \xi \alpha (1 - \beta))}}} = 1$, we can conclude that there exists a constant $c > 0$ for which the term above is greater or equal to $4 = r^2$.

Furthermore, note that the constant $c > 0$ in $K \geq c \cdot n \log{n}$ can further be increased to ensure that the following inequality holds
\begin{align*}
    \frac{\epsilon l}{132 r^2} &= \frac{4 \xi \alpha (1 - \beta) \cdot K (\beta - \alpha)}{132 \cdot 4 \cdot n} \\
    &\geq \frac{\xi \alpha (1 - \beta) (\beta - \alpha)}{132} \cdot c \cdot \log{n} \\
    &\geq \gamma \ln{n},
\end{align*}
for an arbitrarily large constant $\gamma$. This is possible because all other quantities, that is, $\xi, \beta, \alpha$ do not depend on $n$ and have been fixed beforehand. \\ Now we are ready to conclude. Since, by assumption, we started with $X_0 \geq b$, we can establish via the negative drift theorem that $\prob{\left[ T \leq n^\gamma \right]} = O \left( n^{- \gamma} \right)$, where $T$ is the first time when the fixed marginal frequency drops below $\alpha$.
\qed
\end{proof}

The next lemma, which is modeled after Lemma 2 in \cite{neumann2010few}, gives an upper bound on the time until the marginal probability reaches any threshold $\tau$, irrespective of where the marginal probability starts. In the proof of the main theorem of this section, this lemma will allow us to upper-bound the "recovery time" needed for bits whose marginal probabilities travel to the lower boundary.

\begin{lemma}
    \label{lemma:bounce_back}
    Let $c>0$ be a constant and let $K\ge 1$. Let $\tau \in \left[ \frac{1}{c n}, 1 - \frac{1}{c n} \right]$ and consider some fixed bit $i$ in the  $\cga(K, \frac{1}{cn})$ on \DBV with an arbitrary value for the initial marginal probability. Then the expected time until the marginal probability $p_i$ of this bit reaches at least $p_i=\tau$ is $O(K n^2)$.
\end{lemma}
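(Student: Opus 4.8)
The plan is to reduce the statement to a single application of additive drift. The key observation is that, by Proposition~\ref{prop:probability_of_frequency_change}, the marginal probability of bit $i$ always has a \emph{positive} upward drift, so the only question is how slow this drift can be while $p_i$ is still below $\tau$. Crucially, the lower bound in that proposition holds unconditionally: it only uses $V_t \le n/4$, which is true for every configuration of the remaining bits. Hence we never have to control the other frequencies — this is what makes the argument short, at the cost of a loose bound.

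First I would establish a uniform lower bound on the drift. For any value $p_{i,t} \in [\Bar{p}, \tau)$ we have $p_{i,t} < 1 - \Bar{p}$ (since $\tau \le 1 - \Bar{p}$), so the lower bound of Proposition~\ref{prop:probability_of_frequency_change} applies and gives
\[
\expe{[p_{i,t+1} - p_{i,t} \mid p_{i,t}]} = \Omega\!\left(\frac{p_{i,t}(1 - p_{i,t})}{K \max\{V_t, 1\}}\right) = \Omega\!\left(\frac{p_{i,t}(1 - p_{i,t})}{K n}\right),
\]
where the last step uses $\max\{V_t,1\} \le n/4$. It remains to bound $p_{i,t}(1-p_{i,t})$ from below on $[\Bar{p}, \tau]$. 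Since $x \mapsto x(1-x)$ is concave, its minimum on this interval is attained at an endpoint; using $\Bar{p} = 1/(cn)$ and $1 - \tau \ge \Bar{p}$ one checks that both $\Bar{p}(1-\Bar{p})$ and $\tau(1-\tau)$ are $\Omega(1/n)$. Hence the drift is at least some $\delta = \Omega(1/(Kn^2))$, uniformly over $[\Bar{p}, \tau)$. The worst case is the lower boundary $p_{i,t} = \Bar{p}$, where $p_{i,t}(1-p_{i,t})$ is smallest; this bottleneck is exactly what produces the factor $n^2$ rather than the $n\log n$ that a position-dependent estimate would yield.

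Given the uniform bound, I would apply additive drift to the gap $X_t := \tau - p_{i,t}$ and the hitting time $T = \inf\{t : p_{i,t} \ge \tau\}$, which is the constant-rate special case ($h \equiv \delta$) of the variable drift theorem, Theorem~\ref{thm:variable_drift}. Since $X_0 \le \tau - \Bar{p} < 1$ and $X_t$ decreases in expectation by at least $\delta$ per step while $X_t > 0$, this yields $\expe{[T]} \le X_0/\delta = O(Kn^2)$, as claimed. Equivalently, one checks that $X_t + \delta t$ is a supermartingale and applies optional stopping, using that $X_t \in [-1,1]$ is bounded.

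The only place requiring care is the behaviour right below the target: because $p_i$ moves on the grid $p_{i,0} + \Z/K$, the final step can overshoot $\tau$, which would make the one-step drift of the \emph{capped} gap $\max\{0,\tau - p_{i,t}\}$ too small. I would sidestep this by working with the uncapped gap $X_t = \tau - p_{i,t}$ — whose one-step drift equals that of $p_{i,t}$ and is therefore $\ge \delta$ throughout — and defining $T$ via $p_{i,t} \ge \tau$; the boundedness of $X_t$ then makes the optional-stopping argument go through with no alignment assumption on $\tau$. Everything else is routine.
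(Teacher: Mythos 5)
Your proposal is correct and follows essentially the same route as the paper: both derive a uniform additive drift of $\Omega(1/(Kn^2))$ from Proposition~\ref{prop:probability_of_frequency_change} by combining the boundary bound $p_{i,t}(1-p_{i,t}) = \Omega(1/n)$ with $V_t \le n/4$, and then apply an additive drift argument to a linear distance function (the paper uses $g(x)=1-x$ with threshold $1-\tau$, you use the equivalent gap $\tau - p_{i,t}$). Your extra care about overshoot and the optional-stopping formulation is a cosmetic variation, not a different method.
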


\begin{proof}
    Let us inspect the expected change in probability for the marginal frequency of bit $i$ at an arbitrary iteration $t$ for which $p_{i, t} < \tau$. From Proposition \ref{prop:probability_of_frequency_change} we can fix $\xi > 0$ as the constant in the $\Theta$-notation and write:
    \begin{align*}
        \expe{\left[p_{i, t + 1} \mid p_{i, t}\right]} &\geq p_{i, t} +  \xi \cdot \frac{p_{i, t} (1 - p_{i, t})}{K V_t} \\
        &\geq p_{i, t} + \xi \cdot \frac{1}{cn} \left(1 - \frac{1}{cn} \right) \cdot \frac{1}{K} \cdot \frac{4}{n} \\
        &= p_{i, t} + \Omega \left( \frac{1}{K n^2} \right),
    \end{align*}
    where the second inequality stems from the fact that the marginal frequencies are capped at $\frac{1}{cn}$ and $1 - \frac{1}{cn}$ at the boundaries, and $V_t = \sum_{j = 1}^n{p_{j, t} (1 - p_{j, t})} \leq \sum_{j = 1}^n{\frac{1}{4}} = \frac{n}{4}$.

    The claim now follows by applying Theorem \ref{thm:lower_bound_from_drift} with drift bound $\delta = \Omega \left( \frac{1}{K n^2} \right)$, threshold $\kappa = 1 - \tau$, and distance function $g$ with $g(x) = 1 - x$.
    \qed
\end{proof}

We are now ready to prove the main theorem of this section. Its proof follows the structure of the proof of Theorem 2 in \cite{sudholt2019choice}.

\begin{proof}[of Theorem \ref{thm:upper_bound_runtime}]
    Recall that we assumed without loss of generality that $\frac{1}{K}$ ``divides'' $\frac{1}{2} - \frac{1}{c n}$, which implies that the marginal frequencies are restricted to the set \[\left\{ \frac{1}{c n}, \frac{1}{c n} + \frac{1}{K}, \dots, \frac{1}{2}, \dots, 1 - \frac{1}{c n} - \frac{1}{K}, 1 - \frac{1}{c n} \right\}.\]

    The structure of this proof will follow the one of Theorem 2 in \cite{sudholt2019choice}. We will show that starting with a state where all frequencies are at least $\frac{1}{2}$ at the same time, with probability $\Omega(1)$, after $O(K n)$ iterations either the global optimum has been found, which we call \emph{success}, or at least one of the frequencies has dropped below some arbitrary constant $\eta < \frac{1}{2}$, which we call \emph{failure}. We do not instantiate the constant $\eta$. Concretely, if a failure occurred, then we will prove that one can recover from it  - without too big of an overhead in expectation. Recovering means that we return to a state where all marginal frequencies are at least $\frac{1}{2}$. The expected time until either a success or a failure takes place is $O(K n)$. Then, if no failure occurs during $O(K n)$ iterations, we show that with probability $\Omega(1)$ the optimum has been found.

    Firstly, let us prove that recovering from a failure is not too costly. Choose a constant $\gamma > 0$ such that $n^\gamma \geq K \cdot n^5$ (such a constant exists since $K = \mathrm{poly}(n)$). By Proposition \ref{prop:bound_stay_between_constants}, instantiated with $\alpha = \eta < \frac{1}{2}$ and $\beta = \frac{1}{2}$, the probability of a failure in $n^\gamma$ iterations is at most $O(n^{- \gamma})$, provided the constant $c$ in $K \geq c \cdot n \log{n}$ is large enough. From Lemma \ref{lemma:bounce_back} we can conclude that the expected time for the  marginal probability of a specific bit to reach the upper boundary is always bounded by $O(K n^2)$, regardless of the initial state. We note that the expected time until all probabilities have reached the upper boundary at least once is in $O(K n^3 \log{n})$, as it is stochastically dominated by a simple modification of the coupon collector problem: We split the time into phases of length $\kappa \cdot Kn^2$ for some constant $\kappa > 0$ that is large enough. In each of these phases, by Markov's inequality and Lemma \ref{lemma:bounce_back}, each marginal probability travels to the upper boundary with probability at least $p > 0$, where $p$ is a constant. Therefore, by the coupon collector problem, we need in expectation $O(n \log{n})$ such phases for all bits to reach the upper boundary at least once. Once a bit reaches the upper boundary, we apply Proposition \ref{prop:bound_stay_between_constants} with $\alpha = \frac{1}{2}$ and $\beta = \frac{2}{3}$ to get that the probability of a marginal frequency decreasing below $\frac{1}{2}$ in time $n^\gamma$ is at most $O(n^{- \gamma})$. By the union bound, the probability that there is a bit for which this happens is at most $O(n^{- \gamma + 1})$. If this does not happen, all bits assume values of at least $\frac{1}{2}$ simultaneously, and we can say that we successfully recovered from a failure.

    As the probability of a bit falling below $\frac{1}{2}$ during the recovery phase is at most $n^{- \gamma + 1}$, the expected number of restarts of this recovery phase is $O(n^{- \gamma + 1})$. Therefore, considering the expected time until all bits recover to values that are at least $\frac{1}{2}$ only adds an additional term of $O(n^{- \gamma + 1} \cdot K n^3 \log{n}) = o(1)$ to the expectation (recall that $n^\gamma \geq K n^5$).

    All that remains to show now is that after $O(K n)$ iterations without failure the probability of having found the optimum is in $\Omega(1)$. To this end we will consider a suitable potential function, and apply the variable drift theorem (see \cite{lengler2020drift}). We will apply the variable drift theorem as long as the sampling variance stays above an arbitrary constant threshold. Since the variance will be a lower bound on the potential, this will imply that the potential also assumes values above a certain threshold. In the subsequent, we will show that once the variance drops below a certain constant threshold, the potential will as well (as we are in the case where all marginal probabilities are at least $\eta$), and we will sample the optimum with at least constant probability.

    The potential function we consider is $\varphi_t \vcentcolon= n - \frac{1}{c} - \sum_{i = 1}^n{p_{i, t}}$, which measures the distance to the ideal setting where all frequencies lie at the upper boundary. Recall here that we are inspecting the modified version of the \cga where the upper boundary is at $1 - \frac{1}{c n}$. 

    Assume for now that $V_t \geq 1$ holds for the sampling variance $V_t$. This assumption is helpful, as it will allow us to get asymptotic bounds on the drift whose constants do not depend on $c$ (see Proposition \ref{prop:probability_of_frequency_change}).
    
    We will now inspect the drift of a fixed bit $i \in \{1, \dots, n\}$ by distinguishing two cases based on whether the frequency is at the upper boundary or not:
    \begin{enumerate}[label=\Roman*.]
        \item $p_{i, t} > 1 - \frac{1}{c n} - \frac{1}{K}$: By our assumption that $\frac{1}{K}$ ``divides'' $\frac{1}{2} - \frac{1}{c n}$, we conclude that $p_{i, t} = 1 - \frac{1}{c n}$, therefore the frequency of bit $i$ can only decrease. A decrease by $\frac{1}{K}$ happens with probability at most $\frac{1}{c n}$: We have to sample a $0$ on bit $i$ in the ``losing'' offspring, and a $1$ in the ``winning'' one. \footnote{I.e. the less fit respectively fitter offspring.} This happens with probability $p_{i, t} (1 - p_{i, t}) \leq \frac{1}{c n}$. Thus:
        \begin{align*}
            \expe{\left[ 1 - \frac{1}{c n} -  p_{i, t + 1} \Bigm| p_{i, t} \right]} &\leq \left( 1 - \frac{1}{c n} - p_{i, t} \right) + \frac{1}{K c n} \\
            &= \frac{1}{K c n}.
        \end{align*}
        \item $p_{i, t} \leq 1 - \frac{1}{c n} - \frac{1}{K}$: In this case, Proposition \ref{prop:probability_of_frequency_change} becomes applicable and we obtain:
        \[
            \expe{\left[1 - \frac{1}{c n} - p_{i, t + 1} \Bigm| p_{i, t}\right]} \leq \left( 1 - \frac{1}{c n} - p_{i, t} \right) - \xi \cdot \frac{p_{i, t} (1 - p_{i, t})}{K V_t},
        \]
        where $\xi > 0$ is the constant from Proposition \ref{prop:probability_of_frequency_change} hidden by the $\Theta$-notation and does not depend on $c'$.

        The sampling variance $V_t$ can be bounded from above by:
        \[
            V_t = \sum_{j = 1}^n{p_{j, t} (1 - p_{j, t})} \leq \sum_{j = 1}^n{(1 - p_{j, t})} = \varphi_t + \frac{1}{c}.
        \]
        Given that we assumed to be in the situation where no failure occurred, we may assume that $p_{i, t} \geq \eta$ and $1 - p_{i, t} \geq 1 - \frac{1}{c n} - p_{i, t}$. This yields the following bound on the drift:
        \begin{align*}
            \expe{\left[1 - \frac{1}{c n} - p_{i, t + 1} \Bigm| p_{i, t}\right]} &\leq \left( 1 - \frac{1}{c n} - p_{i, t} \right) - \xi \cdot \frac{\eta (1 - p_{i, t})}{ K (\varphi_t + 1 / c)} \\
            &\leq \left( 1 - \frac{1}{c n} - p_{i, t} \right) \cdot \left(1 - \frac{\xi \eta}{K (\varphi_t + 1 / c)}\right).
        \end{align*}
    \end{enumerate}
    This case distinction now allows us to write the expression for the total drift. Let therefore $B_t \subseteq \{1, 2, \dots, n\}$ be the set of the indices of the bits that are at the upper boundary at iteration $t$. The expected change in potential is: 
    \begin{equation}
    \begin{aligned}
    \label{eq:drift_to_bound_by_eps}
        \expe{\left[\varphi_{t + 1} \Bigm| \varphi_t \right]} &= \sum_{i \in B_t}{\expe{\left[1 - \frac{1}{c n} - p_{i, t + 1} \Bigm| p_{i, t} \right]}} + \sum_{i \not\in B_t}{\expe{\left[1 - \frac{1}{c n} - p_{i, t + 1} \Bigm| p_{i, t} \right]}} \\
        &\leq \sum_{i \in B_t}{\frac{1}{K c n}} + \sum_{i \not\in B_t}{\left( 1 - \frac{1}{c n} - p_{i, t} \right) \cdot \left(1 - \frac{\xi \eta}{K (\varphi_t + 1 / c)}\right)} \\
        &= \frac{\lvert B_t \rvert}{K c n} + \left(1 - \frac{\xi \eta}{K (\varphi_t + 1 / c)}\right) \cdot \sum_{i \not\in B_t}{\left(1 - \frac{1}{c n} - p_{i, t} \right)} \\
        &\leq \frac{1}{K c} + \left(1 - \frac{\xi \eta}{K (\varphi_t + 1 / c)}\right) \cdot \sum_{i \not\in B_t}{\left(1 - \frac{1}{c n} - p_{i, t} \right)} \\
        &\leq \frac{1}{K c} + \left(1 - \frac{\xi \eta}{K (\varphi_t + 1 / c)}\right) \cdot \sum_{i = 1}^n{\left(1 - \frac{1}{c n} - p_{i, t} \right)} \\
        &= \frac{1}{K c} + \left(1 - \frac{\xi \eta}{K (\varphi_t + 1 / c)}\right) \cdot \varphi_t \\
        &= \varphi_t + \frac{1}{K} \cdot \left( \frac{1}{c} - \frac{\xi \eta \cdot \varphi_t}{\varphi_t + 1 / c} \right),
    \end{aligned}
    \end{equation}
    where the second inequality holds because $\lvert B_t \rvert \leq n$, and the third one holds due to the fact that the frequencies are capped at $1 - \frac{1}{c n}$, and, thus, we have $1 - \frac{1}{c n} - p_{i, t} = 0$ for $i \in B_t$.
    
    We will now prove that for any given $\varepsilon > 0$, we can fix the constant $c > 0$ in such a way that $\frac{\xi \eta \varphi_t}{\varphi_t + 1 / c} \geq (1 + \varepsilon) \cdot \frac{1}{c}$ holds for all large enough $\varphi_t$. Intuitively, this should hold, because once $\varepsilon > 0$ is fixed, $\lim_{\varphi_t \to \infty}{\frac{\varphi_t}{\varphi_t + 1 / c}} = 1$, both $\xi$ and $\eta$ are constants that are already fixed, and then $c'$ can be adapted appropriately.

    In particular, we may choose $c = \max{\left\{1, \frac{2 (1 + \varepsilon)}{\xi \eta}\right\}}$. Then, for all $\varphi_t \geq V_t \geq 1$ we obtain
    \begin{align*}
        (1 + \varepsilon) \cdot \frac{1}{c} &= \min{\left\{\frac{(1 + \varepsilon) \xi \eta}{2 (1 + \varepsilon)}, 1 + \varepsilon\right\}} \\
        &\leq \xi \eta \cdot \frac{1}{2} \\
        &\leq \xi \eta \cdot \frac{\varphi_t}{\varphi_t + 1} \\
        &\leq \xi \eta \cdot \frac{\varphi_t}{\varphi_t + 1 / c},
    \end{align*}
    where in the second inequality we used that $f(x)=\frac{x}{x+1}$ is increasing on $[1, \infty)$ and lower-bounded by $f(1)=\frac{1}{2}$ and in the last inequality our choice of $c$.
   
    Let us now fix an arbitrary $\varepsilon > 0$, and take $c = \max{\left\{1, \frac{2 (1 + \varepsilon)}{\xi \eta}\right\}}$. We know from our previous derivation that for $\varphi_t \geq 1$ it holds that $\frac{\xi \eta \varphi_t}{\varphi_t + 1 / c} \geq (1 + \varepsilon) \cdot \frac{1}{c}$. Thus, we can continue bounding Equation \eqref{eq:drift_to_bound_by_eps} for $\varphi_t \geq 1$ by:
    \begin{align*}
        \expe{\left[\varphi_{t + 1} \mid \varphi_t \right]} &\leq \varphi_t + \frac{1}{K} \cdot \left( \frac{1}{c} - \frac{\xi \eta \cdot \varphi_t}{\varphi_t + 1 / c} \right) \\
        &\leq \varphi_t + \frac{1}{K} \cdot \left( \frac{1}{c} - (1 + \varepsilon) \cdot \frac{1}{c} \right) \\
        &= \varphi_t - \frac{1}{K} \cdot \frac{\varepsilon}{c}.
    \end{align*}
    In other words, we have obtained that $\expe{\left[\varphi_{t} - \varphi_{t + 1} \mid \varphi_t \right]} \geq \frac{\varepsilon}{c} \cdot \frac{1}{K}$, if $\varphi_t \geq 1$.

    Recall that for the previous bounds on the drift we have assumed $V_t \geq 1$ (which also implied $\varphi_t \geq 1$). If, however, we are in the case that $V_t < 1$, we can leverage the fact that we are in a series of iterations without failure. In particular, this means that all the frequencies are at least equal to the constant $\eta$, which also implies that there is a constant upper bound $C \geq 1$ on the potential. 
    To see the latter, recall that $V_t=\sum_{j=1}^n p_{j,t}(1-p_{j,t)}  $.  Using $p_{i,t} \ge \eta$ and $V_t<1$, we get \[1>V_t =\sum_{j=1}^n p_{j,t}(1-p_{j,t)} \ge \sum_{j=1}^n \eta (1-p_{j,t)}= \eta (\varphi_t+\frac{1}{c}).\] Therefore any constant $C>\frac{1}{\eta}- \frac{1}{c}$ works.

    We now apply the variable drift theorem, Theorem \ref{thm:variable_drift}, to bound the expected time for the potential $\varphi$ to decrease from any initial value $\varphi \leq n$ to a value $\varphi \leq C$. For this, we will use the (constant) drift function $h(\varphi) = \frac{\varepsilon}{K c}$, since we have established in the paragraphs above that it represents a lower bound on the expected change if the potential is at least $C \geq 1$. 

    While the variable drift theorem only considers the hitting time of 0, we note that it is still acceptable to use this theorem in our case: The process we consider instead is one where all the states with potential $\varphi \in [0, C)$ are merged into a single ``state 0''. In this modified process, the smallest state larger than 0 is $x_\mathrm{min} = C$. This modification can only increase the drift (all iterations that previously reduced a potential above $C$ to one below $C$ now reduce the potential directly to 0 instead), so the drift of this process is still bounded below by $h(\varphi)$ for all states with $\varphi \geq C$.

    Thus, by the variable drift theorem, the expected time until we hit a state with potential 0 in our modified process is:
    \[
    \frac{C}{h(C)} + \int_C^n{\frac{1}{h(x)} \, dx} = \frac{K c}{\varepsilon} + \int_C^n{\frac{K c}{\varepsilon}\,dx} = O(K n).
    \]

    In order to end our proof, consider an iteration where $\varphi \leq C$. Given that we assumed we are not in a failure scenario, all the marginal frequencies of the bits attain values that are at least $\eta$, (recall that $\eta < \frac{1}{2}$ was a fixed constant). The probability of sampling a value of $1$ on all bits simultaneously in this case is minimal in the extreme setting where a maximal number of bits have marginal probabilities equal to $\eta$, and all other bits (except at most one) have marginal probabilities at their upper boundaries. Then, the probability of sampling the optimum in one iteration is at least $\left(1 - \frac{1}{c n} \right)^n \cdot \eta^{\left\lceil \frac{C}{1 - \eta} \right\rceil} = \Omega(1)$. Therefore, in a successful phase the the optimum is found with at least constant probability.
    \qed
\end{proof}

\section{Upper bound on the runtime for the aggressive regime}
\label{sec:logn}

In this section, we analyze the \cGA when $K=\Theta(\log^2 n)$ and prove Theorem \ref{thm:boundslogn}. Throughout the whole section we will consider a capping probability of $\Bar{p}\coloneqq\frac{1}{n\log^c n}$ for some constant $c>0$, but our simulations indicate that the result should also hold for the classical $\frac{1}{n}$ capping probability. We prove our main theorem in this regime for $c=7$, and the proof would go through for any $c \ge 7$. It would be possible to reduce this constant, but we aimed for simpler proofs and have not tried to optimize it. 

As in the previous section, we assume that $K\cdot(\frac{1}{2}-\frac{1}{n\log^c n})$ is an integer, so that the marginal frequencies are always in the set 
\[
\left\{ \frac{1}{n \log^c{n}}, \frac{1}{n \log^c{n}} + \frac{1}{K}, \dots, \frac{1}{2}, \dots, 1 - \frac{1}{n \log^c{n}} - \frac{1}{K}, 1 - \frac{1}{n \log^c{n}} \right\}.
\]
The proof of Theorem \ref{thm:boundslogn} proceeds in four main steps.

First, we will show that due to the high genetic drift, the frequencies essentially start by executing random walks until they reach one of the boundaries. As a consequence, the sampling variance $V_t$ drops from $\Theta(n)$ to $O(\log{n})$ during the first $O(\mathrm{polylog}(n))$ iterations, and then stays below $O(\log{n})$ for the remainder of the optimization time with high probability. We call this initial phase the \emph{burn-in phase}.

Then, given this bound on the sampling variance $V_t = O(\log n)$, we prove that it is unlikely to have frequencies at the upper boundary drop below a constant. This basically ensures that, while frequencies from the lower boundary may reach the upper boundary, the converse does not happen. 

Using standard arguments on random walks and geometric distributions, we then argue that indeed all the frequencies starting from the lower boundary after the burn-in phase reach the upper boundary (at least once) within $O(n \cdot \mathrm{polylog}(n))$ iterations.

This puts us in a situation where $n-O(\mathrm{polylog}(n))$ frequencies are at the upper boundary, and the remaining $O(\mathrm{polylog}(n))$ frequencies are lower bounded by a constant. We refine the analysis for this situation to show that with high probability all $O(\mathrm{polylog}(n))$ frequencies reach the upper boundary while no frequency detaches from the upper boundary, and this process only takes $O(\mathrm{polylog}(n))$ iterations. Hence we finally reach a state where $p_{i,t} = 1-\frac{1}{n\log^c n}$ for all positions $i$, from which the optimum is then sampled with high probability in a single iteration, and that terminates the algorithm.

The first proposition covers the initial burn-in phase, and shows that afterwards the sampling variance stays low for at least a quadratic number of iterations (which suffices for our purposes since we will prove that whp the algorithm terminates in quasi-linear time). We first require a lemma that bounds the time until a given frequency reaches one of the boundaries.

\begin{lemma}[adapted from Lemma 6 in \cite{lengler2021complex}]
    \label{lemma:random_walk_bounds}
    Let $c>0$ be a constant and $K=\omega(1)$, and consider the frequency $p_{i, t}$ of a bit $i$ of the algorithm  $\cga(K, \frac{1}{n \log^c n})$ on \DBV. Let $T$ denote the first time that $p_{i, t}$ reaches one of the boundaries. Then for every initial value $p_{i, 0}$ and all $r \geq 8$, $\expe{[T \mid p_{i, 0}]} \leq 4 K^2 \ln{K}$ and $\prob{[T \geq r K^2 \ln{K} \mid p_{i, 0}}] \le 2^{-\lfloor r/8 \rfloor}$.
\end{lemma}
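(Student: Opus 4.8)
The plan is to treat a single frequency as a lazy, almost symmetric random walk and to drive it to a boundary by a variable-drift argument on a variance-type potential. By Proposition~\ref{prop:probability_of_frequency_change}, in each iteration $p_{i,t}$ either stays put or moves by $\pm 1/K$, a move occurring with probability $2p_{i,t}(1-p_{i,t})$ and, conditioned on a move, being upward with probability $\tfrac12+\Theta(1/\max\{V_t,1\})$. I would measure progress by the potential $w_t \coloneqq (p_{i,t}-\Bar{p})(1-\Bar{p}-p_{i,t}) = p_{i,t}(1-p_{i,t}) - \Bar{p}(1-\Bar{p})$, which is nonnegative and vanishes exactly when $p_{i,t}$ sits at a boundary. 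The key quantitative features are that $w_t \le \tfrac14$ always, while its smallest positive value, attained at the penultimate frequencies $\Bar{p}+\tfrac1K$ and $1-\Bar{p}-\tfrac1K$, is $w_{\min} = \tfrac1K(1-2\Bar{p}-\tfrac1K) = \Theta(1/K)$ because $\Bar{p}\ll 1/K$. This $\Theta(1/K)$ floor is exactly what produces the logarithmic factor.

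Using $w(p\pm\tfrac1K)-w(p) = \pm\tfrac{1-2p}{K} - \tfrac{1}{K^2}$, a short computation shows that in the symmetric case $\expe[w_t - w_{t+1}\mid\mathcal{F}_t] = \tfrac{2p_{i,t}(1-p_{i,t})}{K^2} \ge \tfrac{2w_t}{K^2}$, so $w_t$ decreases multiplicatively at rate $\Theta(1/K^2)$. I would then invoke the variable drift theorem (Theorem~\ref{thm:variable_drift}) with $h(w) = 2w/K^2$ and $x_{\min} = w_{\min}$, after merging all boundary states (where $w=0$) into the target state $0$; here $h$ is increasing and $1/h$ is integrable, as required. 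This gives
\[
\expe[T] \le \frac{x_{\min}}{h(x_{\min})} + \int_{w_{\min}}^{1/4}\frac{dx}{h(x)} = \frac{K^2}{2}\Bigl(1 + \ln\frac{1/4}{w_{\min}}\Bigr) = \frac{K^2}{2}\bigl(1 + \ln\Theta(K)\bigr) \le 4K^2\ln K
\]
for $K$ large enough, which is the claimed expectation bound.

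For the tail, I would use a restart argument exploiting that the bound above holds from \emph{every} starting value $p_{i,0}$. Markov's inequality gives $\prob[T \ge 8K^2\ln K \mid p_{i,0}] \le \tfrac12$; applying this repeatedly via the (time-homogeneous) Markov property to whatever state the chain occupies after each block of $8K^2\ln K$ iterations yields $\prob[T \ge 8jK^2\ln K] \le 2^{-j}$ for all $j\ge 1$. Taking $j=\lfloor r/8\rfloor$ and noting $8j K^2\ln K \le rK^2\ln K$ gives $\prob[T \ge rK^2\ln K] \le 2^{-\lfloor r/8\rfloor}$ for $r\ge 8$, as desired.

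The main obstacle is the upward bias $\Theta(1/\max\{V_t,1\})$ caused by genetic drift. For $p_{i,t} < \tfrac12$ it adds a term $+\,2p_{i,t}(1-p_{i,t})\cdot\tfrac{2\,\Theta(1/V_t)(1-2p_{i,t})}{K}$ to $\expe[w_{t+1}-w_t]$, so when $V_t = O(K)$ this can dominate the $-2w_t/K^2$ term and $w_t$ fails to be a supermartingale. I expect this to be the crux of the proof. The resolution is that in exactly this regime the bias points toward the upper boundary and is strong, so the frequency is pushed to a boundary at least as fast; concretely I would split the trajectory at $p_{i,t}=\tfrac12$, handling $p_{i,t}\ge \tfrac12$ (where $(1-2p_{i,t})\le 0$ makes the bias \emph{help} $w_t$ decrease, so the clean drift estimate applies irrespective of $V_t$) and the large-$V_t$ case by the computation above, and bounding the time spent below $\tfrac12$ separately by a variable-drift estimate on $\tfrac12 - p_{i,t}$ that uses the upward drift. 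In the intended application (the burn-in phase) $V_t$ remains $\Omega(K)$ until most frequencies are already fixed, so the direct variable-drift computation covers the bulk of the argument.
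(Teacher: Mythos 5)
Your symmetric-case analysis is sound and gives exactly the claimed constants: the potential $w_t = p_{i,t}(1-p_{i,t}) - \Bar{p}(1-\Bar{p})$ with drift $\expe[w_t - w_{t+1}\mid\mathcal{F}_t] = 2p_{i,t}(1-p_{i,t})/K^2 \ge 2w_t/K^2$, the variable drift theorem over $[\Theta(1/K),\tfrac14]$, and the restart-plus-Markov argument for the tail are all correct. For calibration, the paper does not actually reprove this lemma; it defers entirely to Lemma~6 of \cite{lengler2021complex} and only verifies, via Proposition~\ref{prop:probability_of_frequency_change}, that the transition structure matches the hypothesis of that lemma.

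The gap is exactly where you flagged it, and your proposed resolution does not close it. First, your closing claim --- that in the intended application $V_t$ stays $\Omega(K)$ so the symmetric computation covers the bulk --- is false: the lemma is invoked \emph{after} the burn-in phase (Proposition~\ref{prop:bit_goes_up} and the proof of Theorem~\ref{thm:boundslogn}), when $V_t = O(\log n) = o(K)$ for $K=\Theta(\log^2 n)$, and precisely for frequencies starting at the lower boundary, i.e.\ deep in the region $p_{i,t}<\tfrac12$ where the bias term $2\epsilon_t(1-2p_{i,t})/K$ with $\epsilon_t=\Theta(1/\max\{V_t,1\})$ overwhelms the $-1/K^2$ contraction. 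Second, your three regimes interleave adaptively and cannot be handled by concatenating two separate drift arguments: while $p_{i,t}<\tfrac12$ and $V_t$ is large, the upward drift of $p_{i,t}$ itself is only $\Theta\bigl(p_{i,t}(1-p_{i,t})/(KV_t)\bigr)$, far too weak for a variable-drift bound on $\tfrac12 - p_{i,t}$; and during excursions into the small-$V_t$, sub-$\tfrac12$ regime the potential $w_t$ can increase, so neither potential is a supermartingale along the whole trajectory. Note also that monotonicity is not free here: an adapted nonnegative upward bias \emph{can} increase the expected exit time of a two-sided interval by a constant factor (a small constant drift away from the nearby boundary does exactly that), so one genuinely needs either a single potential valid in all regimes or a coupling/phase argument of the kind supplied by the cited Lemma~6 of \cite{lengler2021complex}. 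As written, your proof establishes the lemma only for an idealized unbiased walk, not for the process the \cga actually generates.
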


\begin{proof}
This was proven in~\cite[Lemma~6]{lengler2021complex}. Note that even though the lemma is stated for the case of the \onemax function, the proof does not employ any properties of the function to optimize, but rather gives a general statement for any random walk that has a certain probability of being stationary in each state. This result thus also applies to the \dynBV function, since by Proposition \ref{prop:probability_of_frequency_change} the first display equation in the proof of Lemma 6 in \cite{lengler2021complex} is satisfied in our case as well.
\qed
\end{proof}

\begin{proposition}
\label{prop:log_variance_after_burnin}
For $K=\Theta\left(\log^2{n}\right)$ consider the algorithm $\cga(K, \frac{1}{n\log^7{n}})$ on \DBV. After the first $O\left(K^3 \log{n}\right)$ iterations, with high probability the sampling variance $V_t$ will stay below $O(\log{n})$ for at least $n^2$ consecutive iterations.
\end{proposition}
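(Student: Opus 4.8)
The plan is to reduce the variance bound to a counting statement about frequencies that are strictly between the two boundaries. Call a frequency \emph{in transit} at time $t$ if $\Bar{p} < p_{i,t} < 1-\Bar{p}$, and let $M_t$ denote their number. Every frequency sitting at a boundary contributes at most $\Bar{p}(1-\Bar{p}) \le \Bar{p}$ to $V_t$, whereas an in-transit frequency contributes at most $\tfrac14$; hence $V_t \le n\Bar{p} + \tfrac14 M_t = \tfrac{1}{\log^7 n} + \tfrac14 M_t$. Since $n\Bar{p} = o(1)$, it suffices to prove that, with high probability, $M_t = O(\log n)$ simultaneously for all $t$ in the window $[t_0, t_0 + n^2]$, where $t_0 = \Theta(K^3\log n)$.

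\textbf{Burn-in and absence of long excursions.} I would first use Lemma~\ref{lemma:random_walk_bounds} to control the time a frequency spends away from the boundaries. Each frequency's initial journey from $\tfrac12$ to a boundary obeys the tail $\prob[T \ge rK^2\ln K] \le 2^{-\lfloor r/8\rfloor}$; taking $t_0 = \Theta(K^3\log n) = \Theta(\log^7 n)$ corresponds to $r = \Theta(\log^3 n/\log\log n) = \omega(\log n)$, so the per-frequency failure probability is $2^{-\omega(\log n)}$ and, by a union bound over the $n$ frequencies, whp every frequency has reached a boundary at least once by $t_0$. Next, call an \emph{excursion} any maximal time interval during which a frequency, having just left a boundary, stays in transit until it next hits a boundary. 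By Lemma~\ref{lemma:random_walk_bounds} (which is bias-agnostic and applies here via Proposition~\ref{prop:probability_of_frequency_change}), each excursion length is stochastically dominated, conditionally on the past, by the same tail. Fixing $\Delta := C K^2 \ln K \, \log n$ for a large constant $C$, a single excursion exceeds $\Delta$ with probability at most $n^{-\Omega(C)}$. As the number of possible (frequency, start-time) pairs over the $O(n^2)$ iterations is $O(n^3)$, a union bound shows that whp no excursion lasts $\ge \Delta$ during the entire run.

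\textbf{Counting in-transit frequencies.} On the event that every frequency has reached a boundary by $t_0$ and that no excursion exceeds $\Delta$, any frequency in transit at a time $t \ge t_0$ must have left a boundary during $(t-\Delta, t]$. Therefore $M_t$ is bounded by the number of \emph{leaving events} in that window. The crucial observation is that a boundary frequency leaves only if the two offspring differ at that bit, which happens with probability $2\Bar{p}(1-\Bar{p}) \le 2\Bar{p}$ \emph{conditionally on the whole history} (it depends only on $p_{i,t}$); hence the number of leaving events in $(t-\Delta,t]$ is stochastically dominated by $\mathrm{Bin}(n\Delta, 2\Bar{p})$, whose mean is $\mu = 2n\Delta\Bar{p} = \Theta(\log\log n/\log^2 n) = o(1)$. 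A Chernoff/Poisson tail then yields $\prob[\mathrm{Bin}(n\Delta, 2\Bar{p}) \ge \ell] \le (e\mu/\ell)^\ell \le n^{-4}$ for a suitable $\ell = \Theta(\log n/\log\log n) = O(\log n)$. A union bound over the $\le n^2$ times $t$ in the window, combined with the $o(1)$ failure probabilities from the burn-in and no-long-excursion events, gives $M_t = O(\log n)$ throughout whp, and hence $V_t = O(\log n)$ for all $t \in [t_0, t_0+n^2]$.

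\textbf{Main obstacle.} The principal difficulty is the rigorous stochastic-domination bookkeeping, because the frequencies are coupled through the shared variance $V_t$ that sets the signal strength in Proposition~\ref{prop:probability_of_frequency_change}. Two features make this tractable: the per-step leaving probability $2\Bar{p}$ of a boundary frequency is a history-independent upper bound (it depends only on the offspring agreeing at that bit), and the excursion-length tail of Lemma~\ref{lemma:random_walk_bounds} holds regardless of the drift direction; together these let me couple the true dependent process to dominating independent binomials and apply concentration. A secondary subtlety is calibrating $\Delta$ so that it is simultaneously long enough that excursions rarely exceed it (an $n^{-\Omega(C)}$ tail) and short enough that the expected number of leaving events in a window of length $\Delta$ stays $o(1)$; the wide gap between $\Bar{p} = 1/(n\log^7 n)$ and the $O(K^2\log K)$ excursion length leaves ample room, which is exactly why the generous exponent $7$ in the boundary is convenient.
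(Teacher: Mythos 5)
Your proposal is correct and follows essentially the same route as the paper's proof: a burn-in argument and excursion-length control both via Lemma~\ref{lemma:random_walk_bounds}, plus stochastic domination of boundary-detachment events by a binomial with per-trial probability $2\bar p$ and a Chernoff/union bound. The only difference is bookkeeping — you use a sliding window of length $\Delta=\Theta(K^2\ln K\,\log n)$ where the paper partitions time into fixed phases of length $\Theta(K^3\log n)$ — which changes nothing of substance.
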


\begin{proof}
Let us divide the optimization time into phases of length $\kappa \cdot K^3 \log{n}$, where $\kappa > 0$ is a constant that will be fixed later. The idea of the proof is to show that with high probability, during a single such phase, all the frequencies that were not at the boundaries at the start of the phase will return to one of the boundaries, and the number of frequencies that detach from the boundaries during that same phase is in $O(\log{n})$. Then, this means that with the exception of the first phase, all other phases will have at most $O(\log{n})$ frequencies that are not at any boundary, which implies that the variance will stay within $O(\log{n})$ for the rest of the optimization time. This follows from the fact that the variance of any frequency at the boundary is $\frac{1}{n \log^7{n}} \left( 1 - \frac{1}{n \log^7{n}} \right) < \frac{1}{n \log^7{n}}$, and the variance of a frequency not at a boundary is at most $\frac{1}{2} \cdot \left( 1 - \frac{1}{2} \right) = \frac{1}{4}$. Thus, an upper bound on the variance is $n \cdot \frac{1}{n \log^7{n}} + O(\log{n}) \cdot \frac{1}{4} = O(\log{n})$.

Consider an arbitrary phase. Our goal is to show that all the bits that are not at a boundary at the start of the phase will eventually reach one of the boundaries by the end of the $ \kappa \cdot K^3 \log{n}$ iterations with high probability. Then, we can conclude the proof by showing that the number of frequencies that detach from a boundary during this same phase is within $C \cdot \log{n}$, where $C > 0$ is a constant that will be fixed later.

To prove the first part, let us fix an arbitrary $i \in \{1, 2, \dots, n\}$ and assume that the current phase we are interested in has started at some iteration $t > 0$. Then, it follows from Lemma \ref{lemma:random_walk_bounds} that with probability at least $\frac{1}{2}$, there will be an iteration $t' \in [t, \, t + 8 \cdot K^3]$ for which $p_{i, t'} \in \left\{\frac{1}{n \log^7{n}}, \, 1 - \frac{1}{n \log^7{n}} \right\}$. By segmenting our phase of $\kappa \cdot K^3 \log{n}$ iterations into intervals of length $8 \cdot K^3$, we obtain that the probability frequency $i$ never reaching one of the boundaries within the phase is at most $\left(1 - \frac{1}{2} \right)^{\left\lfloor \kappa \cdot K^3 \log{n} / (8 \cdot K^3) \right\rfloor} = \left( \frac{1}{2} \right)^{\left\lfloor \kappa \log{n} / 8 \right\rfloor}$. By choosing $\kappa := 40$ this number is at most $n^{-3.4}$ for sufficiently large $n$, 

and via a union bound over the first $n^2$ phases and $n$ frequencies, with high probability all bits that start any phase off-boundary will return to one of the boundaries within $\kappa \cdot K^3 \log{n}$ iterations .

What is now left to show is that there exists a constant $C > 0$ such that at most $C \cdot \log{n}$ frequencies detach from the boundaries during any phase of $\kappa \cdot K^3 \log{n}$ iterations with high probability.

Let the random variable $N$ denote the number of bits that detach from any of the two boundaries during the $\kappa K^3 \log{n}$ iterations of the phase. $N$ is then stochastically dominated by a sum of $n \cdot \kappa K^3 \log{n}$ independent Bernoulli-distributed random variables with parameter $\frac{2}{n \log^7{n}}$. This is because in each of the iterations, at most $n$ bits can detach, and the probability of detaching from a boundary is bounded by $\frac{2}{n \log^7{n}}$: A frequency can only detach if the two bits in the offspring differ, and for a frequency at a boundary, this happens with probability at most $2 \cdot \frac{1}{n \log^7{n}} \cdot \left( 1 - \frac{1}{n \log^7{n}} \right) < \frac{2}{n \log^7{n}}$.

Hence, in expectation at most $\expe{[N]} \leq 2 \cdot \frac{n \cdot \kappa K^3 \log{n}}{n \log^7{n}} = 2\kappa \cdot \frac{K^3}{\log^{6}{n}} = O(1)$ frequencies detach during the phase, where we used $K = \Theta\left(\log^2{n} \right)$. Since $N$ is stochastically dominated by a binomial distribution $\mathrm{Bin}\left(n \cdot \kappa K^3 \log{n}, \, \frac{2}{n \log^7{n}}\right)$, we can employ Chernoff bounds to get $\prob{[N \geq C \log{n}]} = e^{-C \cdot \Omega(\log{n})}$. As in the first part of this proof, we can now choose a suitable $C$ that will allow us to perform a union bound over all phases. Hence, with high probability, no phase has more than $C \cdot \log{n} = O(\log{n})$ bits detaching from any boundary during the entire optimization time.
\qed    
\end{proof}

In the next lemma, we show that frequencies that reach the upper boundary stay above a constant for the following $n^\gamma$ iterations with high probability, where $\gamma$ is a constant that we are free to choose (we will later pick $\gamma=2$). The proof is a fairly straightforward application of the negative drift theorem (Theorem \ref{thm:negative_drift}).

\begin{lemma}
\label{lemma:bits_at_ub_stay_there}
    Let $\Bar{p}\in(0,\frac{1}{2})$ be arbitrary and let $K\ge1$, and consider the algorithm $\cga(K, \Bar{p})$ on \DBV.
    Let $\Bar{p} < \alpha < \beta < 1-\Bar{p}$ and $\gamma > 0$ be constants. Assume that $V_t = O(\log{n})$ holds for the variance all throughout the optimization time. Then there exists a constant $c' > 0$ (possibly depending on $\alpha$, $\beta$, and $\gamma$) such that for a specific bit the following holds: If the bit has marginal probability at least $\beta$ and $K \geq c' \cdot \log^2{n}$, then the probability that during the following $n^\gamma$ iterations the marginal probability decreases below $\alpha$ is at most $O(n^{- \gamma})$.
\end{lemma}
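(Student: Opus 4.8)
The plan is to follow the argument of Proposition~\ref{prop:bound_stay_between_constants} almost verbatim, applying the negative drift theorem (Theorem~\ref{thm:negative_drift}) to the scaled process $X_t := K p_{i,t}$, but now exploiting the stronger hypothesis $V_t = O(\log n)$ in place of the trivial bound $V_t \le n/4$ that was used there. First I would fix the frequency $i$ and choose the interval endpoints $a := K\alpha$ and $b := K\beta$, so that $l := b - a = K(\beta - \alpha)$.

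For the first condition, I would scale the drift lower bound of Proposition~\ref{prop:probability_of_frequency_change} by $K$: for $p_{i,t}\in(\alpha,\beta)$ one has $p_{i,t}(1-p_{i,t}) \ge \alpha(1-\beta)$, and since $\max\{V_t,1\} = O(\log n)$ by assumption, this yields a drift bound of order $\epsilon = \Theta(1/\log n)$, i.e.\ $\epsilon \ge c_1/\log n$ for a constant $c_1>0$ depending on $\alpha,\beta$ and the implied constant in $V_t=O(\log n)$. The second condition is checked exactly as before with $r := 2$: the scaled frequency changes by at most $1$ per step, so $\lvert X_{t+1}-X_t\rvert \le 1$ and the tail bound holds trivially (it is $0$ for $j\ge 1$ and at most $1 = e^{-0}$ for $j=0$).

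The only genuinely new computation is the third condition together with the final exponent, and this is exactly where the gain over Proposition~\ref{prop:bound_stay_between_constants} appears. With $\epsilon = \Theta(1/\log n)$ the logarithmic correction becomes $\log(r/\epsilon) = \log(2/\epsilon) = \Theta(\log\log n)$ rather than $\Theta(\log n)$, while $l = \Theta(K)$, so that
\[
\frac{\epsilon l}{132 \log(r/\epsilon)} = \Theta\!\left(\frac{K}{\log n \cdot \log\log n}\right).
\]
Under the hypothesis $K \ge c'\log^2 n$ this equals $\Theta\!\left(c'\log n/\log\log n\right)$, which diverges, so for $c'$ and $n$ large enough it exceeds $r^2 = 4$ and condition (3) holds. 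For the conclusion I would observe that $\epsilon l/(132\,r^2) = \Theta(K/\log n) = \Theta(c'\log n)$, which can be made at least $\gamma\ln n$ by choosing $c'$ large enough in terms of $\alpha,\beta,\gamma$; the negative drift theorem then gives $\prob[T \le n^\gamma] = O(e^{-\gamma\ln n}) = O(n^{-\gamma})$ for the first time $T$ at which the frequency drops below $\alpha$, as required.

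The main obstacle is purely the bookkeeping in condition (3): tracking how the bound $V_t = O(\log n)$ propagates simultaneously through $\epsilon$ and through the $\log(r/\epsilon)$ term, and confirming that $K = \Theta(\log^2 n)$ is precisely the threshold at which $\epsilon l/(132\,r^2)$ grows like $\Theta(\log n)$ — which is exactly what is needed to dominate the target failure probability $n^{-\gamma}$. No new probabilistic idea beyond the one in Proposition~\ref{prop:bound_stay_between_constants} is needed; the lemma is its aggressive-regime analogue, with the weaker requirement on $K$ made possible solely by the smaller sampling variance.
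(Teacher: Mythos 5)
Your proposal is correct and matches the paper's own proof essentially step for step: both apply the negative drift theorem to $X_t = Kp_{i,t}$ on $[K\alpha, K\beta]$ with $r=2$, replace the trivial bound $V_t \le n/4$ from Proposition~\ref{prop:bound_stay_between_constants} by the assumed $V_t = O(\log n)$ to get $\epsilon = \Theta(1/\log n)$, and use $K \ge c'\log^2 n$ to make $\epsilon l/(132 r^2) = \Theta(c'\log n) \ge \gamma \ln n$. No gaps; this is the paper's argument.
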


\begin{proof}

The proof of this lemma follows very closely that of Proposition \ref{prop:bound_stay_between_constants}. The only difference is in the lower bound on the drift, where we here use our stronger bound $V_t = O(\log n)$, while the proof of Proposition \ref{prop:bound_stay_between_constants} uses the general bound $V_t \le \frac{n}{4}$.

We first fix an arbitrary frequency $i$ and look at the scaled stochastic process $X_t \vcentcolon= K p_{i, t}$.

In order to apply Theorem \ref{thm:negative_drift}, we choose the interval bounds $a \vcentcolon= K \alpha$ and $b \vcentcolon= K \beta$, and therefore also get $l \vcentcolon= b - a = K (\beta - \alpha)$. To establish the first condition of the theorem, we scale the result from Proposition \ref{prop:probability_of_frequency_change} by $K$ and obtain the following bound on the drift (thus, also fixing $\epsilon \vcentcolon= \xi \alpha (1 - \beta)$ in the meantime):
\begin{align*}
    \expe{[X_{t + 1} - X_t \mid \mathcal{F}_t \, ; \, a < X_t < b]} &\geq K \cdot \xi \cdot \frac{p_{i, t} (1 - p_{i, t})}{K \cdot \max{\{1, V_t}\}} \\
    &\geq \frac{\xi \alpha (1 - \beta)}{C \log{n}},
\end{align*}
where $\xi > 0$ is the constant from the $\Theta$-notation in Proposition \ref{prop:probability_of_frequency_change}, and the last inequality uses the fact that by assumption $V_t \leq C \cdot \log{n}$ for some constant $C > 0$.

The second condition of the theorem can be trivially fulfilled by choosing $r \vcentcolon= 2$. We note that for $j = 0$ the condition obviously holds, and for $j \geq 1$ we have that $\prob{[\lvert X_{t + 1} - X_t \rvert \geq 2j]} = 0 \leq e^{-j}$, because $\lvert X_{t + 1} - X_t \rvert \leq 1$ in the scaled process, as the marginal frequencies change by at most $\frac{1}{K}$.

We now move on to verifying the last condition of the theorem using the fact that $K \geq c' \cdot \log^2{n}$:
\begin{align*}
    \frac{\epsilon l}{132 \log{(r / \epsilon)}} &= \frac{\xi \alpha (1 - \beta)}{C \log{n}} \cdot K (\beta - \alpha) \cdot \frac{1}{132 \cdot  \log{\left( (2 C \log{n}) / (\xi \alpha (1 - \beta)) \right)} }\\
    & \geq \frac{\xi \alpha (1 - \beta) (\beta - \alpha)}{132 C \cdot  \log{\left( (2 C \log{n}) / (\xi \alpha (1 - \beta)) \right)} } \cdot c' \cdot \log{n}.
\end{align*}
Note that $\xi > 0$ can be chosen small enough such that the term above is positive for any choice of $\alpha$ and $\beta$. As all $\xi, \alpha, \beta > 0$ are constants, the term above is greater or equal to $4 = r^2$ for large enough $n$, since $\frac{\log{n}}{\log{\log{n}}} = \omega(1)$.

Furthermore, note that the constant $c' > 0$ in $K \geq c' \cdot \log^2{n}$ can further be adapted to fulfill the following inequality:
\begin{align*}
    \frac{\epsilon l}{132 r^2} &= \frac{\xi \alpha (1 - \beta) \cdot K (\beta - \alpha)}{132 \cdot 4 \cdot C \log{n}} \\
    &\geq \frac{\xi \alpha (1 - \beta) (\beta - \alpha)}{528 C} \cdot c' \cdot \log{n} \\
    &\geq \gamma \ln{n},
\end{align*}
because all other quantities which do not depend on $n$ are fixed constants.

Since, by assumption, we started with $X_0 \geq b$, we can establish via the negative drift theorem that $\prob{[T \leq n^\gamma]} = O \left( n^{- \gamma} \right)$, where $T$ is first time the fixed marginal frequency goes below $\alpha$. \qed
\end{proof}

The next proposition bounds the expected time until a given frequency, starting from the lower boundary, reaches the upper boundary.

\begin{proposition}
\label{prop:bit_goes_up}
Let $c>0$ be a constant and $K\ge1$, and consider the algorithm $\cga(K, \frac{1}{n\log^c{n}})$ on \DBV.
Let $i \in \{1, \dots, n \}$ be an arbitrary bit at the lower boundary, and assume that $V_t = O(\log{n})$ for the rest of the optimization. Then, the expected number of iterations until the frequency of bit $i$ reaches the upper boundary is in $O(K^4 n \log^{c}{n})$.
\end{proposition}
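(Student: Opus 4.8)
The plan is to model the frequency $p_{i,t}$ as a lazy, upward-biased random walk on the grid $\{\Bar{p},\Bar{p}+\tfrac1K,\dots,1-\Bar{p}\}$ and to estimate the expected time to travel from the lower boundary $\Bar{p}=\frac1{n\log^c n}$ to the upper one. By Proposition~\ref{prop:probability_of_frequency_change}, in each iteration the walk is \emph{active} (moves by $\pm\tfrac1K$) with probability $2p_{i,t}(1-p_{i,t})$, and conditioned on being active it moves up with probability $\tfrac12+\Theta(1/\max\{V_t,1\})$ and down otherwise; under the standing assumption $V_t=O(\log n)$ this upward bias is $\Omega(1/\log n)$, and in all cases it is strictly positive. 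The decisive structural feature is that near the lower boundary, where $p_{i,t}=\Theta(\Bar{p})$, active steps are extremely rare, so the walk is effectively pinned there for a long time, while at an interior point it equilibrates quickly. Accordingly I would split the journey into repeated \emph{trials}, each consisting of (i) leaving the lower boundary and (ii) an excursion that runs until the walk next hits some boundary, and then bound the number of trials by a geometric argument.

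For step (i), at the lower boundary the per-iteration probability of an upward active step is $(\tfrac12+\Theta(1/\max\{V_t,1\}))\cdot 2\Bar{p}(1-\Bar{p})=\Theta(\Bar{p})$ (a downward attempt is capped and leaves the frequency unchanged), so the expected number of iterations until the frequency first moves up to $\Bar{p}+\tfrac1K$ is $\Theta(1/\Bar{p})=\Theta(n\log^c n)$. This is the expensive part of each trial and is the origin of the $n\log^c n$ factor. For step (ii), once the walk sits at $\Bar{p}+\tfrac1K$, Lemma~\ref{lemma:random_walk_bounds} guarantees it reaches one of the two boundaries within expected $O(K^2\log K)$ iterations.

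It remains to lower bound the probability $q$ that an excursion is \emph{successful}, i.e.\ reaches the upper boundary rather than returning to the lower one. Writing $M=\lfloor K(1-2\Bar{p})\rfloor\le K$ so that the walk lives on $\{0,1,\dots,M\}$ with the boundaries at $0$ and $M$, and $u_j,d_j$ for the up/down probabilities at interior state $j$, the standard gambler's-ruin formula gives, for a start at state $1$, $q=\big(\sum_{k=0}^{M-1}\prod_{i=1}^{k} d_i/u_i\big)^{-1}$. Since the walk is upward biased at every interior state we have $d_i/u_i\le 1$, hence each product is at most $1$, the denominator is at most $M$, and therefore $q\ge 1/M\ge 1/K$. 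Because a failed excursion returns exactly to the lower boundary, successive trials start from the same state, so the number of trials until the first success is stochastically dominated by a geometric random variable with success probability $1/K$, whose expectation is at most $K$.

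Finally I would combine the pieces. Each trial costs in expectation $O(n\log^c n)$ iterations to leave the boundary plus $O(K^2\log K)$ for the excursion, and these per-trial expectations hold regardless of the history (in particular of the current value of $V_t$, since the bias $\in[0,\tfrac12]$ does not change the $\Theta(\Bar{p})$ order of the escape probability, Lemma~\ref{lemma:random_walk_bounds} is unconditional, and $q\ge 1/M$ only uses nonnegative bias). Summing over the at most $K$ trials in expectation — rigorously via $\prob[\text{at least }k\text{ trials}]\le(1-1/K)^{k-1}$ together with the uniformly bounded conditional per-trial cost — yields total expected time $O\!\big(K\cdot(n\log^c n+K^2\log K)\big)=O(Kn\log^c n+K^3\log K)$, which lies within the stated $O(K^4 n\log^c n)$. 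The main obstacle is step (i): the rare-signal regime at the lower boundary prevents a direct application of the variable-drift theorem, because the drift $\Theta(p(1-p)/(K\max\{V_t,1\}))$ is \emph{weakest} far from the target and thus violates the monotonicity hypothesis of Theorem~\ref{thm:variable_drift}; this forces the explicit, separate accounting of the pinned time. If one instead exploits the sharper bias $\Omega(1/\log n)$ to obtain $q=\Omega(1/\log n)$, the number of trials drops to $O(\log n)$ and the estimate improves to $O(n\log^{c+1}n)$, but the crude $q\ge 1/K$ already suffices for the proposition.
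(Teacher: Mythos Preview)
Your argument is correct and rests on the same three ingredients as the paper's proof: the $\Theta(\Bar{p})$ per-step escape probability from the lower boundary, the gambler's-ruin lower bound $q\ge 1/K$ for an excursion to succeed, and Lemma~\ref{lemma:random_walk_bounds} for the expected excursion length. The only real difference is bookkeeping. The paper defines a \emph{phase} to start at any iteration with the frequency at the lower boundary and to end at the first later iteration at either boundary; most phases therefore have length one (no detachment), the success probability of a phase is $\Omega\bigl(\tfrac{1}{Kn\log^c n}\bigr)$, the expected number of phases until success is $O(Kn\log^c n)$, and each phase is bounded (loosely) by $O(K^2\ln K)=O(K^3)$, giving the stated $O(K^4 n\log^c n)$. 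Your \emph{trial} instead packages the entire geometric waiting time at the boundary together with one excursion, so you get only $O(K)$ expected trials of expected cost $O(n\log^c n + K^2\log K)$ each, which actually yields the sharper $O(Kn\log^c n + K^3\log K)$. One minor technical remark: the gambler's-ruin product formula you invoke is for a time-homogeneous chain, whereas here the up/down probabilities depend on $t$ through $V_t$; the conclusion $q\ge 1/M$ is still valid, but the clean justification is the submartingale/optional-stopping argument (or the comparison with the fair walk that the paper uses informally), not the explicit formula.
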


\begin{proof}
Fix a frequency $i$ and assume that it is at the lower boundary.
The proof proceeds in three steps. In the first step, we lower bound the probability that frequency $i$ leaves the lower boundary in a given iteration. In the second step, we couple the random walk $p_{i,t}$ of this frequency to the fair coin gambler ruin's random walk with self-loops, which allows us to leverage the classical bounds on this problem. Finally, we use Lemma \ref{lemma:random_walk_bounds} to deal with the self-loops and bound the number of iterations needed for the gambler's ruin random walk to terminate.

Let $D_{i, t}$ be the event that frequency $i$ detaches from the lower boundary at iteration $t$, and $U_{i, t}$ be the event that, assuming $p_{i,t}=\frac{1}{K}$, the next boundary it reaches (after an arbitrary number of iterations) is the upper boundary. As a first step of the proof, we will be interested in showing that for any $t$:
\[
\prob{[D_{i, t} \cap U_{i, t + 1}]} = \Omega\left( \frac{1}{K n \log^{c}{n}} \right).
\]
This can be shown by writing $\prob{[D_{i, t} \cap U_{i, t + 1}]} = \prob{[D_{i, t}]} \cdot \prob{[U_{i, t + 1} \mid D_{i, t}]}$ and inspecting the two factors separately.

A simple lower bound for $\prob{[D_{i, t}]}$ can be obtained as follows: Since frequency $i$ is at a lower boundary, its value can only increase. This happens exactly when different values for bit $i$ are sampled in the two offspring, and the winning offspring is the one that sampled bit $i$ valued at 1. Moreover, the probability that the winning offspring is the one which has sampled the 1 is at least $\frac{1}{2}$. This yields the lower bound
\[
\prob{[D_{i, t}]} \ge 2 \cdot \frac{1}{n \log^c{n}} \left(1 - \frac{1}{n \log^c{n}} \right) \cdot  \frac{1}{2} = \Omega\left( \frac{1}{n \log^c{n}} \right).
\] 
For the factor $\prob{[U_{i, t + 1} \mid D_{i, t}]}$ we will inspect the following process that dominates ours stochastically. Let us consider the gambler's ruin problem, where the probability of transitioning to the adjacent states is $\frac{1}{2}$ each. This is a pessimistic scenario, as we have already shown in Proposition \ref{prop:probability_of_frequency_change} that there is a (possibly small) positive drift towards higher values of the marginal frequencies: by that proposition, $\prob{[p_{i,t'+1} = p+1/K \mid p_{i,t'} = p]} \ge \prob{[p_{i,t'+1} = p-1/K \mid p_{i,t'} = p]}$ for all $p < 1 - 1/(n\log^c n)$ and all $t' \ge t$. But in this gambler's ruin process, we reach the upper boundary with probability $\frac{1}{K}$ \cite{coolidge1909gambler}, therefore we can conclude that $\prob{[U_{i, t + 1} \mid D_{i, t}]} = \Omega\left(\frac{1}{K}\right)$. Note that since we are only interested in the probability of reaching the upper boundary (and not in the number of iterations within this happens), ignoring self-loops in our random walk is admissible.

We will now inspect the frequency of bit $i$, that is at the lower boundary, and split the iterations until it reaches the upper boundary into phases. A phase starts at the iteration $t'$ when $p_{i, t'}$ is at the lower boundary, and ends at the first iteration $t'' > t'$ for which $p_{i, t''} \in \left\{\frac{1}{n \log^c{n}}, \, 1 - \frac{1}{n \log^c{n}}  \right\}$. Thus, a phase contains all iterations $t \in (t', t'')$ in which the frequency of bit $i$ is at neither of the two boundaries. We call a phase successful if at its end the frequency of bit $i$ is at the upper boundary, otherwise we deem it unsuccessful. 

Above, we have shown that the probability of a phase being successful is in $\Omega\left( \frac{1}{K n \log^c{n}} \right)$. We therefore expect to go through $O(K n \log^c{n})$ phases until a success is encountered.

All that is left to show for us to reach our desired conclusion, is that the expected duration of a phase is $O(K^3)$ iterations. But this follows immediately from Lemma \ref{lemma:random_walk_bounds}, since
\[
4K^2 \ln{K} = O(K^3).
\]  \qed
\end{proof}

We are now ready to prove Theorem \ref{thm:boundslogn}. The proof first consists of putting Proposition \ref{prop:log_variance_after_burnin}, Lemma \ref{lemma:bits_at_ub_stay_there} and Proposition \ref{prop:bit_goes_up} together, in order to reach a situation where almost all frequencies are at the upper boundary, and the remaining frequencies are lower bounded by a constant. From there, we just use a union bound over a series of high probability events that together imply the termination of the algorithm within an additional $\mathrm{polylog}(n)$ iterations. For convenience, we restate the main theorem before starting the proof.

\begin{theorem}
\label{thm:boundslogn_restated}
For $K = \Theta(\log^2 n)$ consider the algorithm $\cga( K, \frac{1}{n\log^7{n}})$ on \DBV. Then there exists a constant $c'$ such that if $K\ge c'\log^2n$, then the optimum is sampled in $O(n \cdot \mathrm{polylog}(n))$ iterations with high probability.
\end{theorem}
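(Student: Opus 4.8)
The plan is to realise the four-phase outline sketched before the statement, conditioning throughout on the high-probability event of Proposition~\ref{prop:log_variance_after_burnin}: after a \emph{burn-in} of $T_0 = O(K^3\log n) = O(\mathrm{polylog}(n))$ iterations the sampling variance satisfies $V_t = O(\log n)$ for the next $n^2$ iterations. Since I will bound the optimization time by $O(n\cdot\mathrm{polylog}(n)) = o(n^2)$, this low-variance guarantee is in force during the whole relevant window, which is exactly the hypothesis required by Lemma~\ref{lemma:bits_at_ub_stay_there} and by the drift estimate of Proposition~\ref{prop:probability_of_frequency_change} specialised to $V_t = O(\log n)$. In particular, for a frequency bounded away from the lower boundary the latter yields a positive drift of order $\Omega(1/(K\log n)) = \Omega(1/\log^3 n)$ towards the upper boundary.

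\emph{Global climb.} First I would show that within $T_1 = O(n\cdot\mathrm{polylog}(n))$ iterations after the burn-in, with high probability every frequency has reached the upper boundary at least once. By Lemma~\ref{lemma:random_walk_bounds} a frequency reaches \emph{some} boundary in $O(\mathrm{polylog}(n))$ expected iterations, and by Proposition~\ref{prop:bit_goes_up} a frequency at the lower boundary reaches the \emph{upper} one in expected $O(K^4 n\log^7 n) = O(n\log^{15} n)$ iterations; composing these (restart from the lower boundary whenever the wrong boundary is hit first) gives expected time $O(n\cdot\mathrm{polylog}(n))$ to touch the top from any state. Markov's inequality then yields a constant success probability over a window of that length, and repeating $\Theta(\log n)$ such windows pushes the per-frequency failure probability below $n^{-2}$, so a union bound over all $n$ frequencies makes the global event hold whp while keeping $T_1 = O(n\log^{16} n)$, which matches the claimed exponent. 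Combining this with Lemma~\ref{lemma:bits_at_ub_stay_there} (instantiated with $\gamma = 2$ and a constant threshold $\alpha$) and a union bound, every frequency is $\ge\alpha$ at the end of this phase, and since $T_1 = o(n^2)$ none falls below $\alpha$ again; by the bounded detachment rate and the variance bound, all but $O(\mathrm{polylog}(n))$ of them in fact sit at the upper boundary.

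\emph{Final convergence.} It remains to drive the off-boundary frequencies to the top within a short window $T_2 = O(\mathrm{polylog}(n))$ without letting any frequency detach from the top. For the climb I would apply the variable drift theorem (Theorem~\ref{thm:variable_drift}) to the potential $1 - p_{i,t}$: its \emph{multiplicative} drift $\Omega((1-p_{i,t})/\log^3 n)$ yields expected hitting time $O(\mathrm{polylog}(n))$ for the upper boundary from any start $\ge\alpha$, which a Markov-plus-restart argument (valid because Lemma~\ref{lemma:bits_at_ub_stay_there} keeps the frequency $\ge\alpha$) boosts to a high-probability bound; a union bound over the (at most $n$) off-boundary frequencies keeps the window at $T_2 = O(\mathrm{polylog}(n))$. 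For the no-detachment event I note that a frequency at the top detaches in one iteration only with probability at most $2\Bar{p} = 2/(n\log^7 n)$, so a union bound over all $n$ frequencies and all $T_2$ iterations bounds the total detachment probability by $O(T_2/\log^7 n) = o(1)$. On the intersection of these two whp events every frequency ends at $1-\Bar{p}$, and then the all-ones string is sampled in a single iteration with probability $(1-\Bar{p})^n \ge e^{-2/\log^7 n} = 1 - o(1)$, which terminates the algorithm.

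The main obstacle is this final convergence step. The drift towards the top is only $\Theta(1/\log^3 n)$ per iteration and weakens to $\Theta((1-p)/\log^3 n)$ as $p\to 1-\Bar{p}$, so one must exploit the multiplicative structure (hence the variable drift theorem rather than a crude additive bound) to reach the boundary in polylogarithmic time, and then upgrade this to a bound robust to a union bound over many frequencies. The genuinely delicate point, however, is the quantitative balance between the length $T_2$ of this window and the per-iteration detachment probability $2\Bar{p}$: the union bound over $n$ frequencies and $T_2$ iterations is $o(1)$ precisely when $T_2 = o(\log^7 n)$, which is why the non-standard boundary value $\Bar{p} = 1/(n\log^7 n)$ is chosen with the generous exponent $c=7$ (any $c$ strictly exceeding the polylog exponent of $T_2$ would suffice). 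All remaining steps are routine combinations of the cited results with Markov's inequality, Chernoff bounds, and union bounds.
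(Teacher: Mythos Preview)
Your four-phase decomposition, the use of Proposition~\ref{prop:log_variance_after_burnin} for the burn-in, Proposition~\ref{prop:bit_goes_up} boosted via Markov-plus-restart for the global climb, Lemma~\ref{lemma:bits_at_ub_stay_there} (with $\gamma=2$) for the ``no frequency falls below a constant again'' guarantee, and the variance-based count of $O(K\log n)=O(\log^3 n)$ off-boundary frequencies all match the paper's proof exactly, including the $O(n\log^{16}n)$ bookkeeping.

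The only substantive difference is the \emph{final convergence} step, which you flag as the main obstacle. You propose applying the variable drift theorem to $1-p_{i,t}$ with multiplicative drift $\Omega((1-p_{i,t})/\log^3 n)$, then Markov-plus-restart. This is correct (after the harmless shift to $1-\bar p-p_{i,t}$ so the potential actually attains~$0$) and yields $T_2=O(\mathrm{polylog}(n))$ as required for the detachment union bound. The paper, however, sidesteps the drift analysis entirely: since every off-boundary frequency is already $\ge 3/4$ by Lemma~\ref{lemma:bits_at_ub_stay_there}, the first boundary it hits is necessarily the upper one, so Lemma~\ref{lemma:random_walk_bounds} with $r=\Theta(\log n)$ already gives per-frequency failure probability $n^{-\Omega(1)}$ within a window of $\log^6 n$ iterations. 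This is shorter and reuses an existing lemma verbatim, dissolving what you identified as the delicate part; your route, on the other hand, makes the multiplicative drift explicit and would transfer more easily to settings where a boundary-hitting lemma like Lemma~\ref{lemma:random_walk_bounds} is not already available.
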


\begin{proof}
From Proposition \ref{prop:log_variance_after_burnin} we know that for this chosen configuration of boundaries, with high probability we will have $V_t = O(\log{n})$ for the next $n^2$ iterations after a burn-in phase that lasts $O(K^3 \log{n}) = O(\log^7{n})$ iterations.

We want to now show that, eventually, all frequencies will reach the upper boundary and not drop below a constant. By applying Lemma \ref{lemma:bits_at_ub_stay_there} with $\gamma=2$ and taking a union bound over all $n$ frequencies, we know that by choosing $c'$ large enough then with high probability no frequency that ever reaches the upper boundary will drop below $\alpha = \frac{3}{4}$ (this value for $\alpha$ is just an arbitrary constant) during the following $n^2$ iterations.

Now, we can use Markov's inequality and Proposition \ref{prop:bit_goes_up} to claim that there is a constant probability for one specific bit $i$, which started at the lower boundary, to reach the upper boundary within $O(K^4 n \log^7{n})$ iterations (note that the proof of Proposition \ref{prop:log_variance_after_burnin} implies that every bit will reach either of the boundaries after the burn-in phase, as all of them start out in the middle). Thus, by using arguments similar to the ones in the proof of Proposition \ref{prop:log_variance_after_burnin}, with high probability bit $i$ will have reached the upper boundary after $O(K^4 n \log^7{n} \cdot \log{n}) = O(n \log^{16}{n})$. Moreover, the $O$-notation again allows us to pick suitable constants such that a union bound over all $n$ frequencies will still yield that all of them will have reached the upper boundary after $O(n \log^{16}{n})$ iterations with high probability.

Once all the frequencies have reached the upper boundary at least at some point, then the variance bound $V_t = O(\log n)$ implies that all but $O(\log^3 n)$ of the frequencies are at the upper boundary. Indeed, any frequency $p_{i,t}$ that is not at the upper boundary must be in the interval $[3/4, 1-1/K]$, and in particular contributes at least $p_{i,t}(1-p_{i,t}) \ge \Omega(1/K)$ to the variance $V_t$. Hence, there can be at most $O(K\log n) = O(\log^3 n)$ such frequencies. Let $C \ge 0$ be such that the number of frequencies that are not at the upper boundary is $C \log^3{n}$, and without loss of generality assume that these frequencies are indexed by $i = 1, 2, \ldots, C \log^3{n}$. Since we are in a situation where none of the frequencies will drop below $3/4$, we know that if the frequencies return to a boundary, they will return to the upper one.

We claim that from this point on, with high probability the optimum is sampled during the next $\log^6 n$ iterations. Consider the following events: for $i \in \setof{1, 2, \dots, C \log^3{n}}$, let $R_i$ denote the event that frequency $i$ reaches the upper boundary during this phase of $\log^6{n}$ iterations. Additionally, consider the event $D$ that no frequency that is currently at the upper boundary or reaches the upper boundary detaches from it during this phase.

By Lemma \ref{lemma:random_walk_bounds}, we know that $\prob{[R_i]} \geq 1 - \frac{1}{n}$, since $\log^6{n} =\omega(\log{n} \cdot K^2 \ln{K})$ and hence we can take $r_{\ref{lemma:random_walk_bounds}} = A \log n$ for an arbitrarily large constant $A$. Hence, by a union bound over the $C \log^3{n}$ bits that are not at the upper boundary, we have
\[
    \prob{\left[\bigcap_{i = 1}^{C\log^3{n}}{R_i}\right]} \geq 1 - O\left(\frac{\log^3{n}}{n}\right).
\]
On the other hand, by a union bound over all $n$ frequencies and all $\log^6 n$ iterations, $\prob{[D]} \ge 1 - n\log^6 n \cdot O\left(\frac{1}{n\log^7 n}\right) = 1 - O\left(\frac{1}{\log{n}}\right)$, since the probability that one given bit detaches from the upper boundary during one given iterations is $O\left(\frac{1}{n\log^7 n}\right)$.

Hence  $\prob{\left[D \cap \bigcap_{i = 1}^{C \log^3{n}}{R_i}\right]} \ge 1-o(1)$, and on the event $D \cap \bigcap_{i = 1}^{C \log^3{n}}{R_i}$ we have at least one iteration where all the frequencies are at the upper boundary. When all frequencies are at the upper boundary, then the probability of sampling the optimum in that iteration is at least (since we are sampling two offspring)
\begin{align*}
    \left(1 - \frac{1}{n \log^7{n}}\right)^n &= \left(\left(1 - \frac{1}{n \log^7{n}}\right)^{-n \log^7{n}}\right)^{-1 / \log^7{n}} = 1 - o(1).
\end{align*}
The RHS tends to 1 as $n \to \infty$, because $\lim_{n \to \infty}{\left(1 - \frac{1}{n \log^7{n}}\right)^{- n \log^7{n}}} = e$ and $\lim_{n \to \infty}{\frac{1}{\log^7{n}}} = 0$.
\qed
\end{proof}

\section{Simulations}
\label{sec:sim}
In this section, we provide simulations that complement our theoretical analysis. All figures depict the optimization of \DBV but for varying hypothetical population size $K$. \footnote{For $6 \le K \le 420$ all integer values of $K$ are simulated, for $421 \le K \le 1000$ all integer multiples of $5$, for $1001 \le K \le 6000$ integer multiples of $20$, for $ 6001 \le K \le 10000$ integer multiples of $500$.} The dimension of the search space is always $n=300$. 
The probabilities $p_i, i=1,\dots, n$ are initialized with $\frac{1}{2}$ as in the pseudocode of the algorithm. The lower and upper boundary are set at $\frac{1}{n}$ and at $1-\frac{1}{n}$ respectively. The algorithm stops when the optimum has been sampled, or after $200'000$ iterations if the optimum has not been sampled at that point. The code for the simulations is provided on request.

\begin{figure}
    \centering

         \subfloat{\includegraphics[width=0.5\textwidth]{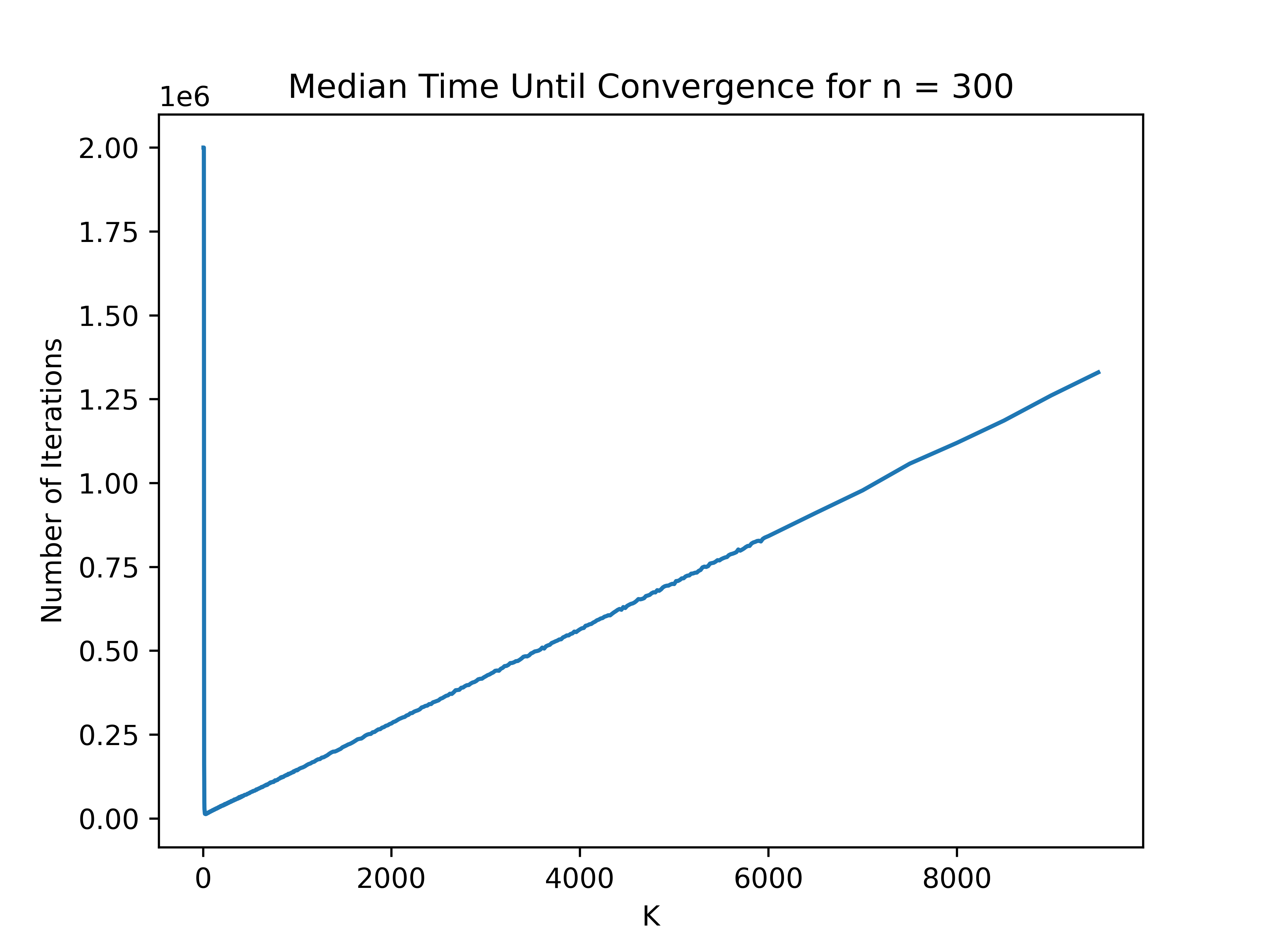}}
        \subfloat{\includegraphics[width=0.5\textwidth]{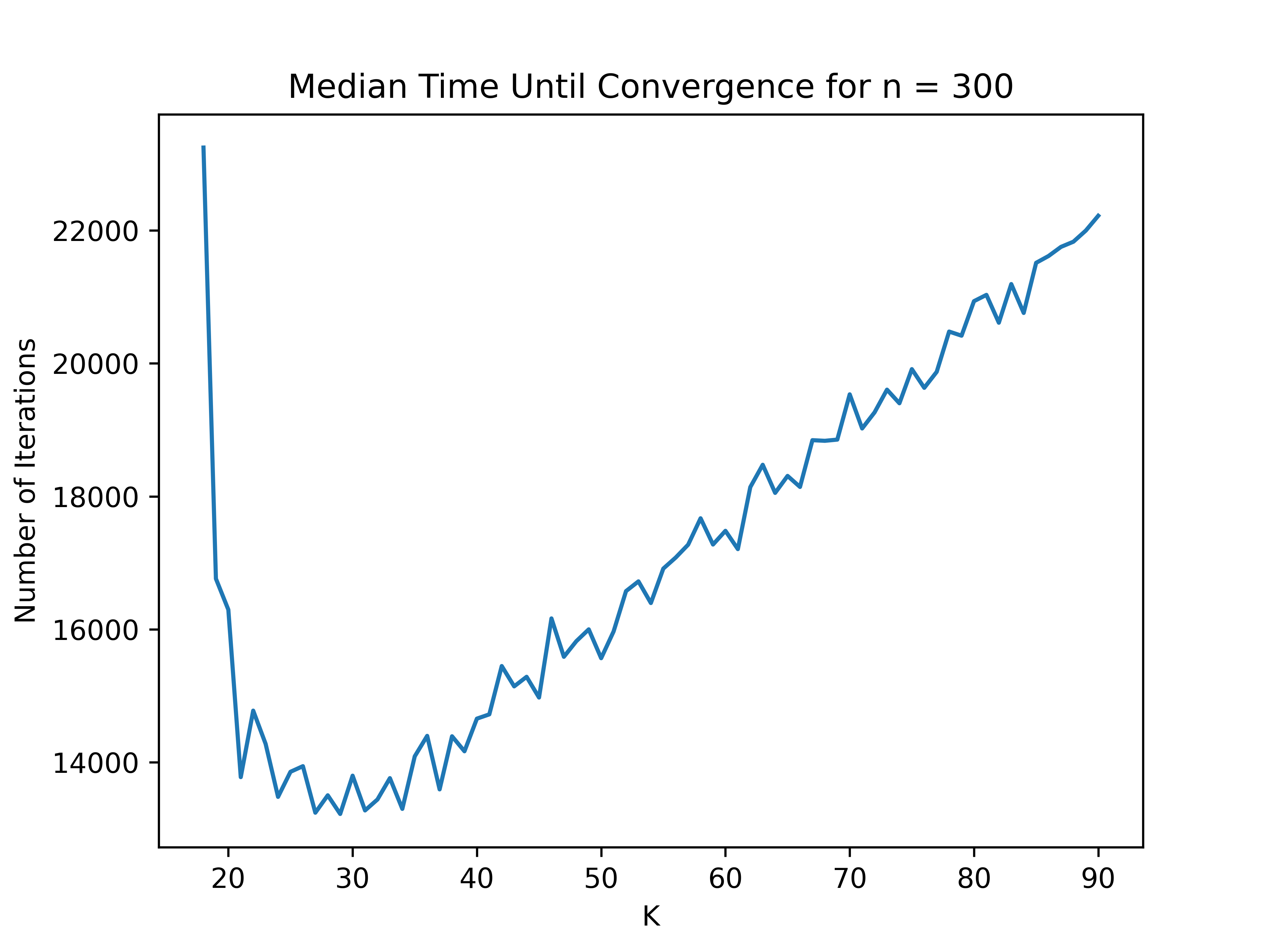}}

    \caption{Number of iterations for the optimization of \dynBV with the \cga when  $6\le K \le 10000$. The right plot shows the subinterval $18 \le K \le 90$. The median over 50 runs is plotted.}
    \label{fig:Kproportional}
   
\end{figure}

The regime of small population sizes is shown in the right plot of Figure~\ref{fig:Kproportional}. Even at a small search space dimension of $n=300$, the asymptotic speed-up of small $K$ is clearly visible. For $K=6,7$, the optimum is not reached before the number of iterations are capped. This is in line with the observation that even for  \onemax, when $K=o(\log (n))$, the runtime of the \cga becomes exponential. However, for $10 \le K \le 20$ we observe a phase transition, with the minimal runtime attained for hypothetical population sizes $K$ around $30$.
Due to the small problem dimension, it is difficult to tell if the threshold is located at $K=\Theta(\log n)$, at $K=\Theta(\mathrm{polylog}(n))$, or even $K=\Theta(n^c)$ for some small $c<1$. But the data is consistent with the theoretical result that the optimum is obtained for the sublinear $K$ regime of genetic drift.

For $K = \Omega(n\log n)$, Theorems~\ref{lemma:slow_progress} and~\ref{thm:intro_upper_bound_runtime} show an asymptotically tight runtime bound of $\Theta(Kn)$. Figure~\ref{fig:Kproportional} covers a range of $K=6$ up to $K=10000$, exceeding the search space dimension of $n=300$ by 2-3 orders of magnitude. We see indeed that the runtime increases proportionally to $K$, thus confirming our theoretical findings. In particular we see that, contrary to the optimization of \onemax, there are no local minima after the transition from the exponential to the polynomial regime. Furthermore, the plot indicates that the runtime scales linearly with $K$ in practice much earlier than our theoretical bound from Theorem~\ref{thm:intro_upper_bound_runtime}.  

\begin{figure}
    \centering
    \includegraphics[width=0.7\textwidth]{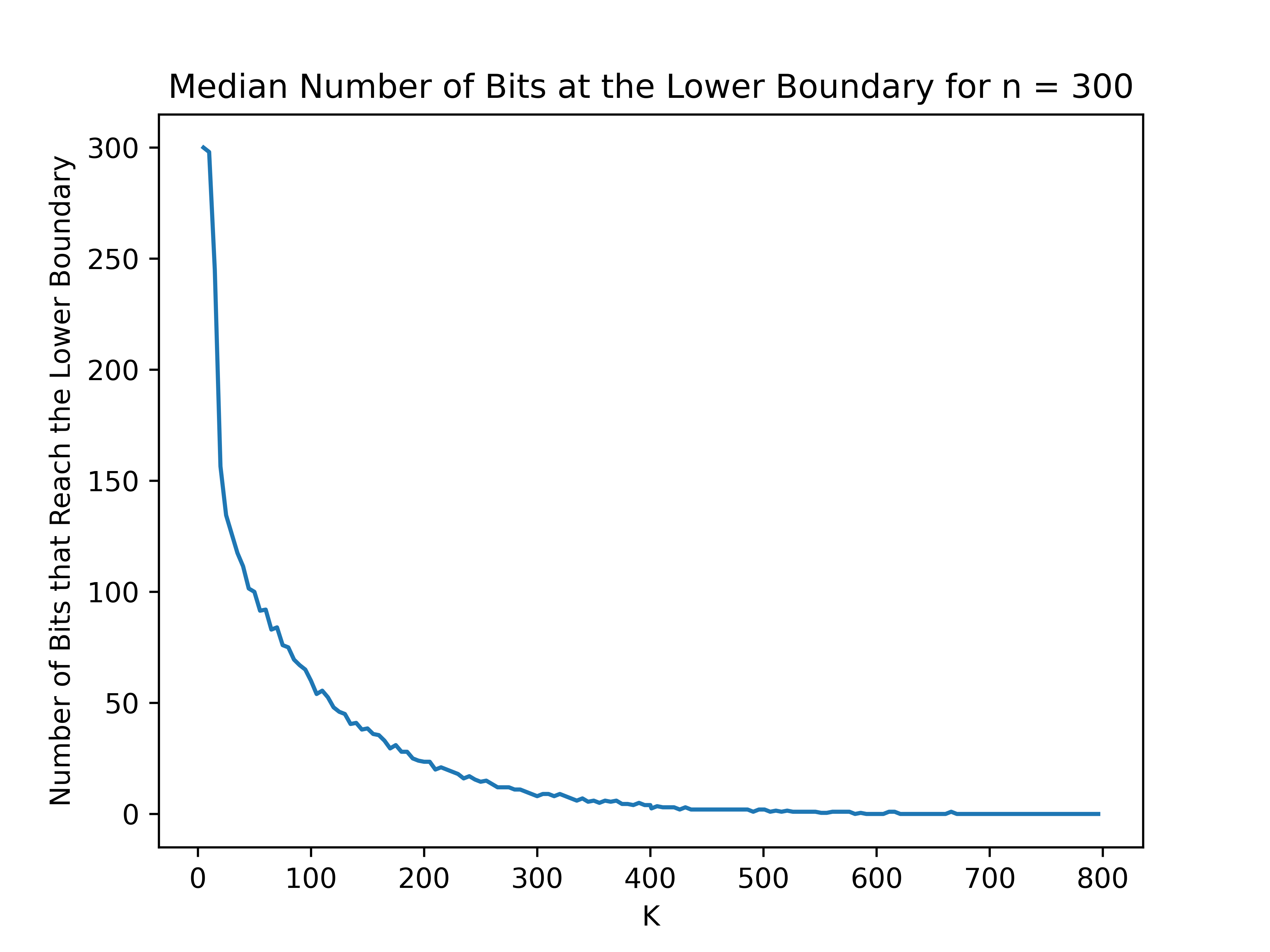}
    \caption{Number of bits that reach the lower boundary $1-\frac{1}{n}$ for the range $5 \le K \le 800$. The median over 20 runs is plotted.}
    \label{fig:boundarybits}
\end{figure}

In Figure~\ref{fig:boundarybits}, we see that after an initial exponential decrease, which is similar to the initial exponential runtime decrease in Figure~\ref{fig:Kproportional}, the number of frequencies ever reaching the lower boundary tapers off only slowly. In particular, for the empirically optimal value $K\approx 30$ from Figure~\ref{fig:Kproportional} still many frequencies reach the lower boundary, confirming that this is in the aggressive regime of strong genetic drift. Until $K=n$, there is still a double-digit number of bits which reach the lower boundary. Only after approximately  $K=500 = \frac{5}{3} n$ the median drops to zero.

\begin{credits}
\subsubsection{\ackname} M.K. and U.S. were supported by the Swiss National Science Foundation [grant number 200021\_192079]. The Dagstuhl seminar 22182 ``Estimation-of-Distribution Algorithms: Theory and Applications'' gave inspiration for this work.

\subsubsection{\discintname}
The authors have no competing interests to declare that are
relevant to the content of this article. 
\end{credits}
%
%

\bibliographystyle{splncs04}
\bibliography{references}

\end{document}